
\documentclass[journal]{IEEEtran}
%


%

\usepackage{comment}
\usepackage{cite}
\usepackage{amsmath,amssymb,amsfonts,mathrsfs}
\usepackage{algorithm,algorithmic,setspace}
\usepackage{graphicx}
\usepackage{textcomp}
\usepackage{xcolor}
\usepackage{comment}
\usepackage{mathtools}
\usepackage{dsfont}
\usepackage{subfig}
\usepackage{algorithm}
\usepackage{algorithmic}
\usepackage{tikz}
\usepackage{epstopdf} 
\usepackage{graphics}
\usepackage{subfig}
\usepackage{epstopdf} 
\usepackage{url}
\usepackage{color,soul}
\usepackage{amsthm}
\usepackage[colorlinks=true,linkcolor=blue, citecolor=green, urlcolor=magenta]{hyperref}

\DeclarePairedDelimiter{\norm}{\lVert}{\rVert}

\theoremstyle{definition}

\newtheorem{theorem}{Theorem}
\newtheorem{assumption}{Assumption}

\newtheorem{lemma}{Lemma}
\newtheorem{remark}{Remark}

\DeclareMathOperator{\dom}{dom}

\allowdisplaybreaks

\title{
 N-dimensional Convex Obstacle Avoidance using Hybrid Feedback Control (Extended version)
} 
\author{Mayur Sawant, Ilia Polushin and Abdelhamid Tayebi \IEEEmembership{Fellow, IEEE} 
	\thanks{This work was supported by the National Sciences and Engineering Research Council of Canada (NSERC), under the grants RGPIN-2020-06270, RGPIN-2020-0644 and RGPIN-2020-04759. }
	\thanks{M. Sawant and A. Tayebi are with the Department of Electrical and Computer Engineering, Lakehead University, Thunder Bay, ON P7B 5E1, Canada. (e-mail: {\tt\small atayebi, msawant@lakeheadu.ca}). I. Polushin and A. Tayebi are with the Department of Electrical and Computer Engineering, Western University, London, ON N6A 3K7, Canada. (e-mail: {\tt\small ipolushi, atayebi@uwo.ca}).}%
}%

\begin{document}
\maketitle 

\begin{abstract}
This paper addresses the autonomous robot navigation problem in \textit{a priori} unknown $n$-dimensional environments containing disjoint convex obstacles of arbitrary shapes and sizes, with pairwise distances strictly greater than the robot's diameter. 
We propose a hybrid feedback control scheme that guarantees safe and global asymptotic convergence of the robot to a predefined target location. The proposed control strategy relies on a switching mechanism allowing the robot to operate either in the \textit{move-to-target} mode or the \textit{obstacle-avoidance} mode, based on its proximity to the obstacles and the availability of a clear straight path between the robot and the target. 
In the \textit{obstacle-avoidance} mode, the robot is constrained to move within a two-dimensional plane that intersects the obstacle being avoided and the target, preventing it from retracing its path. 
The effectiveness of the proposed hybrid feedback controller is demonstrated through simulations in two-dimensional and three-dimensional environments.
\end{abstract}

\section{Introduction}
Safe autonomous robot navigation consists in steering a robot to a target location while avoiding obstacles. One commonly used navigation technique is the Artificial Potential Field (APF) approach \cite{khatib1986real}, where a combination of attractive and repulsive vector fields guides the robot safely to the target location. However, this approach faces challenges with certain obstacle arrangements, leading to undesired stable local minima. 
The Navigation Function (NF) approach \cite{koditschek1992exact,verginis2021adaptive} is effective in sphere world environments, addressing the local minima issue by limiting the repulsive field's influence around the obstacles by means of a properly tuned parameter. However, this method ensures almost\footnote{Almost global convergence refers to the convergence from all initial conditions except a set of zero Lebesgue measure.} global convergence of the robot to the target location. 
To apply the NF approach to environments with general convex and star-shaped obstacles, one has to use diffeomorphic mappings from \cite{koditschek1992exact} and \cite{li2018navigation}, which require global knowledge of the environment for implementation. 
In \cite{paternain2017navigation}, the authors extended the NF approach to handle environments with convex obstacles with smooth boundaries that meet certain curvature conditions. 
However, this approach is limited to obstacles, which are not too flat and not too close to the target. In \cite{kumar2022navigation}, for the case of ellipsoidal worlds, the authors removed the flatness limitation in \cite{paternain2017navigation}, by providing a controller design, which locally transforms the region near the obstacle into a spherical region by using the Hessian information. However, similar to \cite{paternain2017navigation}, it is assumed that the entire shape of the obstacle becomes known when the robot visits its neighbourhood. 

Other approaches, such as \cite{ames2016control, wang2017safety, singletary2021comparative}, rely on the control Lyapunov function (CLF) and the control barrier function (CBF) to design feedback control laws that achieve (simultaneously) convergence to a target set and avoidance of an unsafe set. In \cite{wang2017safety}, the authors proposed a CBF-based method for multi-robot navigation in two-dimensional environments in the presence of circular obstacles. In \cite{singletary2021comparative}, a comparative analysis between the APF-based and CBF-based approaches has been provided.
However, the work in \cite{reis2020control} and \cite{TAN2024111359} demonstrates that the CLF-CBF-based navigation approach, similar to the NF approach, suffers from the undesired equilibria problem and provides (at best) almost global asymptotic stability guarantees in sphere worlds.

In \cite{berkane2021navigation}, the authors proposed a feedback controller based on Nagumo's theorem \cite[Theorem 4.7]{blanchini2008set} for autonomous navigation in environments with general convex obstacles. The forward invariance of the obstacle-free space is achieved by projecting the \textit{ideal} velocity control vector, which points towards the target, onto the tangent cone at the obstacle boundary whenever it points towards the obstacle. This approach was extended in \cite{smaili2024real} to guarantee almost global asymptotic stability of the target location in \textit{a priori} unknown environments containing strongly convex obstacles.

In \cite{arslan2019sensor}, the authors proposed a purely reactive autonomous navigation approach based on separating hyperplanes for robots operating in environments cluttered with unknown but sufficiently separated convex obstacles with smooth boundaries which satisfy curvature conditions similar to \cite{smaili2024real}. 
This approach was extended in \cite{vasilopoulos2018reactive} to address partially known environments with non-convex obstacles, where the robot has geometric information about the non-convex obstacles but lacks precise knowledge of their locations in the workspace.


The approaches discussed above provide, at best, almost global convergence guarantees due to the undesired equilibria that are generated when using continuous time-invariant vector fields \cite{koditschek1990robot}. This can be resolved by introducing discontinuities in the control, as shown in \cite{sanfelice2006robust, casau2019hybrid, poveda2021robust, braun2020explicit, berkane2021obstacle, matveev2011method, loizou2003closed}. 

In \cite{sanfelice2006robust} and \cite{casau2019hybrid}, hybrid control methods are employed to achieve robust global asymptotic stabilization in $\mathbb{R}^2$ for robots navigating towards a target location while avoiding collision with a single spherical obstacle. The approach in \cite{sanfelice2006robust} has been extended in \cite{poveda2021robust} to steer a group of planar robots in formation toward the source of an unknown but measurable signal while avoiding a single obstacle.
In \cite{braun2020explicit}, the authors proposed a hybrid control law to globally asymptotically stabilize a class of linear systems with drift while avoiding neighbourhoods of unsafe isolated points. In \cite{berkane2021obstacle}, hybrid control techniques were employed to achieve global stabilization of target locations in $n$-dimensional environments with sufficiently separated ellipsoidal obstacles. 

In \cite{matveev2011method}, the authors proposed a discontinuous feedback-based autonomous navigation scheme for nonholonomic robots operating in two-dimensional environments with non-convex obstacles, subject to restrictions on inter-obstacle arrangements. 
In \cite{loizou2003closed}, a discontinuous feedback control law was introduced for autonomous robot navigation in partially known two-dimensional environments. When encountering a known obstacle, the control vector aligns with the negative gradient of the navigation function, whereas near unknown obstacles, the robot follows the boundary, using local curvature information. 

In this paper, we proposed a hybrid feedback-based solution for autonomous navigation in $n$-dimensional environments with \textit{a priori} unknown convex obstacles.
The main contributions of the proposed research work are as follows:
\begin{enumerate}
   \item \textit{Global asymptotic stability:} The proposed autonomous navigation solution guarantees global asymptotic stabilization of the target location in unknown environments with convex obstacles of arbitrary shapes. Note that the few existing results in the literature achieving such strong stability results are of a hybrid type and are restricted to environments with ellipsoidal obstacles \cite{berkane2021obstacle}.
    

    \item \textit{$n$-dimensional convex obstacles:} The proposed hybrid feedback controller is applicable to $n$-dimensional workspaces containing convex obstacles of arbitrary shapes and sizes. In contrast, the autonomous navigation schemes in \cite{matveev2011method} and \cite{loizou2003closed} are limited to two-dimensional environments, while the methods in \cite{kumar2022navigation} and \cite{berkane2021navigation} apply only to $n$-dimensional environments with ellipsoidal obstacles. The navigation approaches in \cite{smaili2024real} and \cite{arslan2019sensor} are restricted to environments with strongly convex obstacles.

    \item \textit{Arbitrary interobstacle arrangements:} There are no restrictions on the arrangement of obstacles, unlike those imposed in \cite[Assumption 10]{matveev2011method} and \cite[Theorem 2]{berkane2021obstacle}, except for the widely accepted mild condition in Assumption \ref{3d_obstacle_separation}, which states that the robot can pass between any two obstacles while maintaining a positive distance.

    \item \textit{Applicable in \textit{a priori} unknown environments:} Unlike the approaches in \cite{paternain2017navigation} and \cite{kumar2022navigation}, which require global information about the obstacles, the proposed autonomous navigation method relies solely on range scanners and does not require \textit{a priori} global knowledge of the environment (sensor-based technique). 
\end{enumerate}

Compared to our earlier works \cite{sawant2023convex, sawant_nonconvex}, which are limited to two-dimensional settings, the novelty of the present work lies in its applicability to $n$-dimensional environments with arbitrarily shaped convex obstacles.
While the theoretical developments in \cite{sawant2023convex, sawant_nonconvex} assume complete knowledge of the obstacle geometries in 2D environments, the proposed approach in the present paper is designed from the outset to operate in \textit{a priori} unknown $n$-dimensional environments with arbitrarily-shaped convex obstacles.

The remainder of the paper is organized as follows. Section \ref{sec:preliminaries} introduces the notations and preliminaries used throughout the paper. The problem formulation is presented in Section \ref{section:problem_formulation}, followed by the proposed hybrid control algorithm in Section \ref{sec:hybrid_controller_design}. Stability and safety guarantees of the navigation control scheme are discussed in Section \ref{sec:stability}. 
In Section \ref{sec:sphere_world}, the hybrid feedback control law is modified to ensure a monotonic decrease in distance to the target in sphere worlds.
Section \ref{implementation_procedure} outlines the implementation procedure for the obstacle avoidance algorithm in \textit{a priori} unknown environments. Simulation results are provided in Section \ref{section:simulation} to demonstrate the algorithm’s effectiveness.
Section \ref{section:experimental_validation} presents the experimental validation of the proposed approach using the TurtleBot 4 Standard mobile robot, and concluding remarks are given in Section \ref{sec:conclusion}.

\section{Notations and Preliminaries}\label{sec:preliminaries}
\subsection{Notations}
The sets of real and natural numbers are denoted by $\mathbb{R}$ and $\mathbb{N}$, respectively. We identify vectors using bold lowercase letters. The Euclidean norm of a vector $\mathbf{p}\in\mathbb{R}^n$ is denoted by $\norm{\mathbf{p}}$, and an Euclidean ball of radius $r\geq0$ centered at $\mathbf{p}$ is represented by $\mathcal{B}_{r}(\mathbf{p}) = \{\mathbf{q}\in\mathbb{R}^n|\norm{\mathbf{q} - \mathbf{p}} \leq r\}.$ 
The set of $n-$dimensional unit vectors is given by $\mathbb{S}^{n-1} = \{\mathbf{p}\in\mathbb{R}^n|\norm{\mathbf{p}} = 1\}.$ The identity matrix of order $n$ is denoted by $\mathbf{I}_n$.
The $n-$dimensional \textit{Special Orthogonal group} is denoted by $SO(n):= \{\mathbf{R}\in\mathbb{R}^{n\times n}:\mathbf{R}^\top\mathbf{R} = \mathbf{I}_n, \det(\mathbf{R}) = 1\}.$  

For two sets $\mathcal{A}, \mathcal{B}\subset\mathbb{R}^n$, the relative complement of $\mathcal{B}$ with respect to $\mathcal{A}$ is denoted by $\mathcal{A}\setminus\mathcal{B} =\{\mathbf{a}\in\mathcal{A}|\mathbf{a}\notin \mathcal{B}\}$. The symbols $\partial\mathcal{A}, \mathcal{A}^{\circ}$, $\mathcal{A}^c$ and $\bar{\mathcal{A}}$ represent the boundary, interior, complement and the closure of the set $\mathcal{A}$, respectively, where $\partial\mathcal{A} = \bar{\mathcal{A}}\backslash\mathcal{A}^{\circ}$. The cardinality of a set $\mathcal{A}$ is denoted by $\mathbf{card}(\mathcal{A})$. 
The Minkowski sum of the sets $\mathcal{A}$ and $\mathcal{B}$ is denoted by $\mathcal{A} \oplus\mathcal{B} = \{\mathbf{a} + \mathbf{b}|\mathbf{a}\in\mathcal{A}, \mathbf{b}\in\mathcal{B}\}$. The dilated version of a set $\mathcal{A}\subset\mathbb{R}^n$ with $r \geq 0$ is represented by $\mathcal{D}_r(\mathcal{A}) = \mathcal{A} \oplus\mathcal{B}_r(\mathbf{0})$. The $r-$neighbourhood of a set $\mathcal{A}$ is denoted by $\mathcal{N}_r(\mathcal{A}) = \mathcal{D}_r(\mathcal{A})\setminus\mathcal{A}^\circ$, where $r$ is a strictly positive scalar.

\subsection{Projection on a set}\label{section:metric_projection} Given a closed set $\mathcal{A}\subset\mathbb{R}^n$ and a point $\mathbf{x}\in\mathbb{R}^n$, the Euclidean distance of $\mathbf{x}$ from the set $\mathcal{A}$ is evaluated as
\begin{equation}
    d(\mathbf{x}, \mathcal{A}) = \underset{\mathbf{q}\in\mathcal{A}}{\min}\norm{\mathbf{x} - \mathbf{q}}.\label{definition:distance_from_set}
\end{equation}
If $\mathcal{A}$ is convex, the unique closest point to any $\mathbf{x} \in \mathbb{R}^n$ on $\mathcal{A}$ is denoted by $\Pi(\mathbf{x}, \mathcal{A})$ and is defined as
\begin{equation}
\Pi(\mathbf{x}, \mathcal{O}_i) := \underset{\mathbf{q}\in\mathcal{A}}{\arg\min}\norm{\mathbf{x} - \mathbf{q}}.
\end{equation}


\subsection{Geometric subsets of \texorpdfstring{$\mathbb{R}^n$}{}}\label{section:geometric_subset}
\subsubsection{Line} The line passing through two points $\mathbf{p}\in\mathbb{R}^n$ and $\mathbf{q}\in\mathbb{R}^n\setminus\{\mathbf{p}\}$ is given by
\begin{equation}
    \mathcal{L}(\mathbf{p}, \mathbf{q}) := \{\mathbf{x}\in\mathbb{R}^n|\mathbf{x} = \lambda \mathbf{p} + (1 - \lambda) \mathbf{q}, \lambda \in\mathbb{R}\}.\label{line}
\end{equation}
\subsubsection{Line segment} The line segment joining two points $\mathbf{p}\in\mathbb{R}^n$ and $\mathbf{q}\in\mathbb{R}^n\setminus\{\mathbf{p}\}$ is given by
\begin{equation}
    \mathcal{L}_s(\mathbf{p}, \mathbf{q}) := \{\mathbf{x}\in\mathbb{R}^n|\mathbf{x} = \lambda \mathbf{p} + (1 - \lambda) \mathbf{q}, \lambda \in[0, 1]\}.\label{line_segment}
\end{equation}
\subsubsection{Plane} Given two linearly independent vectors $\mathbf{p}, \mathbf{q}\in\mathbb{R}^n$, the plane containing all vectors that can be expressed as a linear combination of $\mathbf{p}$ and $\mathbf{q}$ is given by
\begin{equation}
    \mathcal{P}(\mathbf{p}, \mathbf{q}):= \{\mathbf{x}\in\mathbb{R}^n|\exists k_1\in\mathbb{R}, \exists k_2\in\mathbb{R}, \mathbf{x} = k_1\mathbf{p} + k_2\mathbf{q}\}
\end{equation}
\subsubsection{Hyperplane} The hyperplane passing through $\mathbf{p}\in\mathbb{R}^n$ and orthogonal to $\mathbf{q}\in\mathbb{R}^n\backslash\{\mathbf{0}\}$ is given by
    \begin{align}
    \mathcal{H}(\mathbf{p}, \mathbf{q}) := \{\mathbf{x}\in\mathbb{R}^n| \mathbf{q}^\intercal(\mathbf{x} - \mathbf{p}) = 0\}.\label{equation_of_hyperplane}
    \end{align}
    The hyperplane divides the Euclidean space $\mathbb{R}^n$ into two half-spaces \textit{i.e.}, a closed positive half-space $\mathcal{H}_{\geq}(\mathbf{p}, \mathbf{q})$ and a closed negative half-space $\mathcal{H}_{\leq}(\mathbf{p}, \mathbf{q})$ which are obtained by substituting `$=$' with `$\geq$' and `$\leq$' respectively, in the right-hand side of \eqref{equation_of_hyperplane}. We also use the notations $\mathcal{H}_{>}(\mathbf{p}, \mathbf{q})$ and $\mathcal{H}_{<}(\mathbf{p} ,\mathbf{q})$ to denote the open positive and the open negative half-spaces such that $\mathcal{H}_{>}(\mathbf{p}, \mathbf{q}) = \mathcal{H}_{\geq}(\mathbf{p}, \mathbf{q})\backslash\mathcal{H}(\mathbf{p}, \mathbf{q})$ and $\mathcal{H}_{<}(\mathbf{p} ,\mathbf{q})= \mathcal{H}_{\leq}(\mathbf{p}, \mathbf{q})\backslash\mathcal{H}(\mathbf{p}, \mathbf{q})$.

\subsubsection{Cylinder}\label{definition:cylinder}
Given $\mathbf{p}\in\mathbb{R}^n$ and $w > 0$, the cylinder with width $w$ and the line segment $\mathcal{L}_s(\mathbf{p}, \mathbf{0})$ as its axis is denoted by
\begin{equation}
\mathcal{C}_{\mathcal{L}}(\mathbf{p}, w) := \mathcal{D}_{w}(\mathcal{L}_s(\mathbf{p}, \mathbf{0})) \cap \mathcal{H}_{\geq}(\mathbf{p}, -\mathbf{p}) \cap \mathcal{H}_{\geq}(\mathbf{0}, \mathbf{p}) \nonumber
\end{equation}

\subsection{Parallel projection operator}\label{definition_parallel_projection_operator}
Given two orthonormal vectors \(\mathbf{p}, \mathbf{q} \in \mathbb{S}^{n-1}\), the parallel projection operator is defined as
\[
\mathbf{P}(\mathbf{p}, \mathbf{q}) = \mathbf{p}\mathbf{p}^\top + \mathbf{q}\mathbf{q}^\top.
\]
For any vector \(\mathbf{s} \in \mathbb{R}^n\), the vector \(\mathbf{s}^\prime\), where \(\mathbf{s}^\prime = \mathbf{P}(\mathbf{p}, \mathbf{q})\mathbf{s}\), is the projection of \(\mathbf{s}\) onto the plane \(\mathcal{P}(\mathbf{p}, \mathbf{q})\). Additionally, it can be verified that \(\mathbf{s}^\top \mathbf{s}^\prime \geq 0\), indicating that the angle between \(\mathbf{s}\) and \(\mathbf{s}^\prime\) is either acute or $90^{\circ}$.

\subsection{n-Dimensional rotation matrix}
\label{n-D_rotation_matrix}
Consider two orthonormal vectors \(\mathbf{p}, \mathbf{q} \in \mathbb{S}^{n-1}\) and an angle \(\theta \in [0, 2\pi)\). The rotation matrix \(R(\theta, \mathbf{p}, \mathbf{q})\) is constructed as follows:
\begin{equation}
    R(\theta, \mathbf{p}, \mathbf{q}) := \mathbf{I}_{n} + \sin(\theta)\mathbf{Sk}(\mathbf{p}, \mathbf{q}) + (1 - \cos(\theta))(\mathbf{Sk}(\mathbf{p}, \mathbf{q}))^2,\nonumber
\end{equation}
where the skew-symmetric matrix \(\mathbf{Sk}(\mathbf{p}, \mathbf{q})\) is defined by
\[
\mathbf{Sk}(\mathbf{p}, \mathbf{q}) = \mathbf{q}\mathbf{p}^\top - \mathbf{p}\mathbf{q}^\top.
\]
The operator \(R(\theta, \mathbf{p}, \mathbf{q})\) performs a rotation by the angle \(\theta\) in the plane spanned by \(\mathbf{p}\) and \(\mathbf{q}\). 
Rotation angles are considered positive if they are performed from the vector $\mathbf{p}$ to the vector $\mathbf{q}$.


\subsection{Hybrid system framework}\label{section:hybrid_system}
A hybrid dynamical system
\cite{goebel2012hybrid} is represented using differential and difference inclusions for the state $\mathbf{\xi}\in\mathbb{R}^n$ as follows:
\begin{align}
    \begin{cases}\begin{matrix}\mathbf{\dot{\xi}} \in \mathbf{F}(\mathbf{\xi}) , & \mathbf{\xi} \in \mathcal{F}, \\
    \mathbf{\xi}^{+}\in \mathbf{J}(\mathbf{\xi}), & \mathbf{\xi}\in\mathcal{J},\end{matrix}\end{cases}\label{hybrid_system_general_model}
\end{align}
where the \textit{flow map} $\mathbf{F}:\mathbb{R}^n\rightrightarrows\mathbb{R}^n$ is the differential inclusion which governs the continuous evolution when $\mathbf{\xi}$ belongs to the \textit{flow set} $\mathcal{F}\subseteq\mathbb{R}^n$, where the symbol `$\rightrightarrows$' represents set-valued mapping. The \textit{jump map} $\mathbf{J}:\mathbb{R}^n\rightrightarrows\mathbb{R}^n$ is the difference inclusion that governs the discrete evolution when $\mathbf{\xi}$ belongs to the \textit{jump set} $\mathcal{J}\subseteq\mathbb{R}^n$. The vector $\xi^+$ represents the state of the hybrid system after a jump. The hybrid system \eqref{hybrid_system_general_model} is defined by its data and is denoted as $\mathcal{H}_{\mathcal{S}} = (\mathcal{F}, \mathbf{F}, \mathcal{J}, \mathbf{J}).$

A subset $\mathbb{T}\subset\mathbb{R}_{\geq}\times\mathbb{N}$ is a \textit{hybrid time domain} if it is a union of a finite or infinite sequence of intervals $[t_j, t_{j + 1}]\times \{j\},$ where the last interval (if existent) is possibly of the form $[t_j, T)$ with $T$ finite or $T = +\infty$. The ordering of points on each hybrid time domain is such that $(t, j)\preceq(t^{\prime}, j^{\prime})$ if $t \leq t^{\prime}$ and $j \leq j^{\prime}$. A \textit{hybrid solution} $\phi$ is maximal if it cannot be extended, and complete if its domain dom $\phi$ (which is a hybrid time domain) is unbounded.

\section{Problem Formulation}
\label{section:problem_formulation}

Let $\mathcal{W}$ be a closed subset of the $n-$dimensional Euclidean space that bounds the workspace. The workspace $\mathcal{W}$ consists of finite number of compact, convex obstacles $\mathcal{O}_i$, $i\in\{1, \ldots, b\}, b\in\mathbb{N}$. We define obstacle $\mathcal{O}_0:=(\mathcal{W}^{\circ})^c$ as the complement of the interior of the workspace. Collectively, the obstacle-occupied workspace is denoted by $\mathcal{O}_{\mathcal{W}} = \bigcup_{i\in\mathbb{I}}\mathcal{O}_i$, where $\mathbb{I} = \{0, \ldots, b\}.$ 

The robot is represented by a $n-$dimensional sphere with radius $r \geq 0$ and center $\mathbf{x}$. 
It is equipped with a range sensor that can identify the set $\eth_{\mathcal{W}}^{\mathbf{x}}$ which contains the locations on the boundaries of nearby obstacles, provided there is a clear line of sight to the center of the robot, up to a certain sensing range $R_s$, as shown in Fig. \ref{visible_boundary_diagram}. The set $\eth_{\mathcal{W}}$ is defined as follows:
\begin{equation}
    \eth_{\mathcal{W}}^{\mathbf{x}} := \{\mathbf{p}\in\partial\mathcal{O}_{\mathcal{W}}|\|\mathbf{p} - \mathbf{x}\|\leq R_s, \mathcal{L}_s(\mathbf{p}, \mathbf{x})\cap\mathcal{O}_{\mathcal{W}} = \mathbf{p}\}.\label{sensor_visible_boundary}
\end{equation}

\begin{figure}
\centering
    \includegraphics[width = 0.65\linewidth]{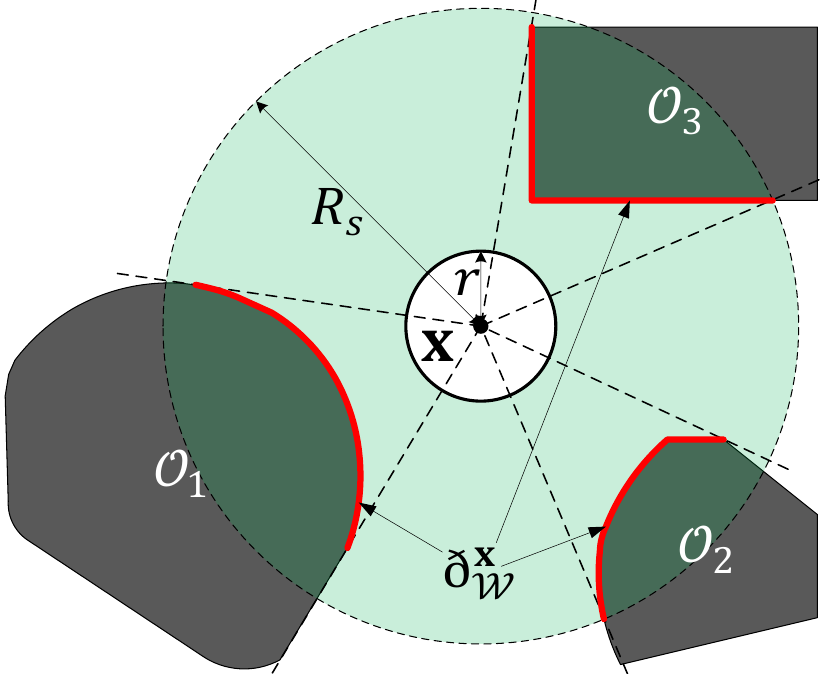}
    \caption{Illustration of the set $\eth_{\mathcal{W}}^{\mathbf{x}}$ as defined in \eqref{sensor_visible_boundary}.}
    \label{visible_boundary_diagram}
\end{figure}

To maintain the local convexity of the obstacle boundaries, we impose the following workspace feasibility assumption. 
\begin{assumption}
The minimum separation between any pair of obstacles should be greater than or equal to $2(r + \delta)$ \textit{i.e.}, for all $i, j\in\mathbb{I}, i\ne j,$ one has
\begin{equation}
    d(\mathcal{O}_i, \mathcal{O}_j):= \underset{\mathbf{p}\in\mathcal{O}_i, \mathbf{q}\in\mathcal{O}_j}{\text{min}} \|{\mathbf{p} - \mathbf{q}}\| \geq 2(r+\delta),
\end{equation}
where $\delta > 0$. \label{3d_obstacle_separation}
\end{assumption}

We then pick an arbitrarily small value $r_s\in(0, \delta)$ as the minimum distance that the robot should maintain to any obstacle. Therefore, the obstacle-free workspace with respect to the center of the robot is defined as follows:
\begin{equation}
\mathcal{W}_{r_a} := \mathcal{W}\setminus(\mathcal{D}_{r_a}(\mathcal{O}_{\mathcal{W}}))^{\circ},\label{r_a-eroded_obstacle-free_workspace}
\end{equation}
where $r_a = r + r_s$. According to Assumption \ref{3d_obstacle_separation}, the set $\mathcal{W}_{r_a}$ is pathwise connected and  $\mathbf{x}\in\mathcal{W}_{r_a}\iff\mathcal{B}_{r_a}(\mathbf{x})\subset\mathcal{W}\setminus\mathcal{O}_{\mathcal{W}}^{\circ}$.

The robot is governed by a single integrator dynamics 
\begin{equation}
    \dot{\mathbf{x}} = \mathbf{u},\label{first_order_system}
\end{equation}
where $\mathbf{u}\in\mathbb{R}^n$ is the control input. Given a workspace that satisfies Assumption \ref{3d_obstacle_separation}, and given that the robot can identify the set $\eth_{\mathcal{W}}^{\mathbf{x}}$ \eqref{sensor_visible_boundary}, the task is to design a feedback control law $\mathbf{u}$ to guarantee the following properties: 
\begin{enumerate}
    \item \textbf{Safety}: the obstacle-free workspace $\mathcal{W}_{r_a}$  with respect to the center of the robot is forward invariant,
    \item \textbf{Global asymptotic stability}: any target location $\mathbf{x^d}\in(\mathcal{W}_{r_a})^{\circ}$ is a globally asymptotically stable equilibrium for the closed-loop system. Without loss of generality, we will consider $\mathbf{x^d}=\mathbf{0}$.
\end{enumerate}

\begin{figure}
\centering
\includegraphics[width = 0.55\linewidth]{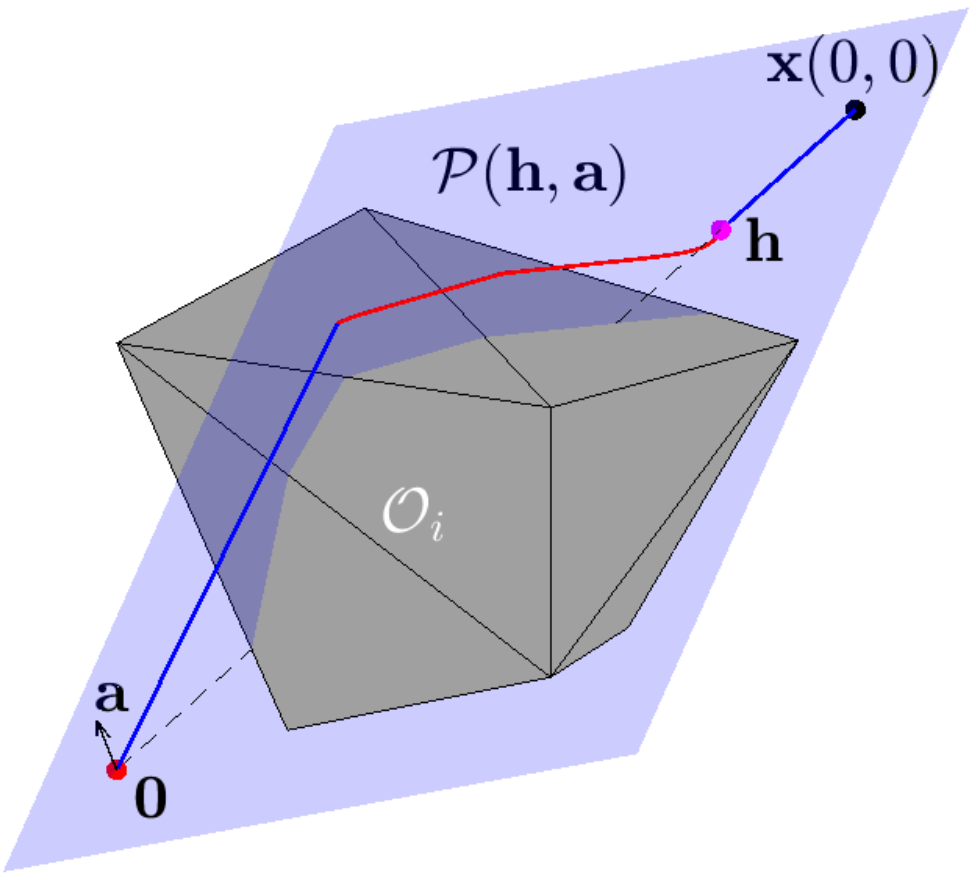}
\caption{Robot trajectory moving along the plane $\mathcal{P}(\mathbf{h}, \mathbf{a})$ while avoiding an obstacle.}
\label{plane_passing_through_obstacle}
\end{figure}

\section{Hybrid Control For Obstacle Avoidance}
\label{sec:hybrid_controller_design}
The proposed hybrid feedback controller operates in two different modes based on the mode indicator $m\in\mathbb{M}:=\{0, 1\}$. In the \textit{move-to-target} mode $(m = 0)$, the control input vector directs the robot along a straight line toward the target location. When the robot is in the neighbourhood of an obstacle that obstructs its direct path (\textit{i.e.,} the distance from the robot's centre to the closest point on the obstacle is less than or equal to $\gamma\in(0, \delta - r_s)$), the control law switches to the \textit{obstacle-avoidance} mode $(m = 1)$. 
In contrast to the two-dimensional case, where the robot can only move clockwise or counterclockwise around an obstacle, the $n$-dimensional environment offers an infinite number of safe paths around the obstacles. Thus, designing an obstacle-avoidance strategy that maintains the continuity of the control input vector and prevents the robot from retracing its path is crucial. The proposed strategy achieves this by confining the robot’s motion to a plane that passes through both the target location and the interior of the obstacle being avoided. The robot is guided along the obstacle’s boundary until it reaches a point where the obstacle no longer blocks the direct line-of-sight to the target location, as shown in Fig. \ref{plane_passing_through_obstacle}.


\subsection{Hybrid control design}\label{section:hybrid_control_design}
The proposed hybrid control $\mathbf{u}(\mathbf{x}, \mathbf{h}, \mathbf{a}, m, s)$ is given as
\begin{subequations}
\begin{align}
    &\mathbf{u}(\xi) = -\kappa_s(1 - m)\mathbf{x} + \kappa_r m\mathbf{v}(\mathbf{x}, \mathbf{h}, \mathbf{a}),\label{control_u}\\
    &{\begin{bmatrix}\mathbf{\dot{h}}\\\mathbf{\dot{a}}\\\dot{m}\\\dot{s}
    \end{bmatrix} =\begin{bmatrix}\mathbf{0}\\\mathbf{0}\\0\\1\end{bmatrix}},{\xi\in\mathcal{F}};\quad{\begin{bmatrix}
    \mathbf{h^+}\\\mathbf{a^+}\\m^+\\{s^+}
    \end{bmatrix} \begin{matrix}\;\in\mathbf{L}(\xi)\end{matrix}},{\xi\in\mathcal{J}}, \label{control_L}   
\end{align}\label{hybrid_control_law}
\end{subequations}
where $\kappa_s > 0$, $\kappa_r > 0$, and the composite state vector $\xi:=(\mathbf{x}, \mathbf{h}, \mathbf{a}, m, s)\in\mathcal{K}:=\mathcal{W}_{r_a}\times\mathcal{W}_{r_a}\times\mathbb{S}^{n-1}\times\mathbb{M}\times\mathbb{R}_{\geq 0}$
The variable $\mathbf{h}$ denotes the \textit{hit point}, which is the location where the robot switches from the \textit{move-to-target} mode to the \textit{obstacle-avoidance} mode. 
The unit vector $\mathbf{a}\in\mathbb{S}^{n-1}$ is updated to be orthogonal to $\mathbf{x}$ when the robot switches from the \textit{move-to-target} mode to the \textit{obstacle-avoidance} mode, and is instrumental for the construction of the avoidance control vector $\mathbf{v}(\mathbf{x}, \mathbf{h}, \mathbf{a})$, used in \eqref{control_u}.
The scalar variable $s\in\mathbb{R}_{\geq 0}$ allows the robot to switch once from the \textit{obstacle-avoidance} mode to the \textit{move-to-target} mode only when it is initialized in the \textit{obstacle-avoidance} mode. Details of this switching process are provided later in Section \ref{section:flow_jump_set_construction}.
The sets $\mathcal{F}$ and $\mathcal{J}$ are the flow and jump sets related to different modes of operation, respectively, whose constructions are provided in Section \ref{section:flow_jump_set_construction}. 
The update law $\mathbf{L}$, which allows the robot to update the values of the variables $\mathbf{h}$, $\mathbf{a}$, $m$ and $s$ based on the current location of the robot with respect to the obstacle being avoided and the target location, will be designed later in Section \ref{section:update_law}.  Next, we provide the design of the vector $\mathbf{v}(\mathbf{x}, \mathbf{h}, \mathbf{a})\in\mathbb{R}^n$.
The vector $\mathbf{v}(\mathbf{x}, \mathbf{h}, \mathbf{a})$, used in \eqref{control_u}, is defined as
\begin{equation}
    \mathbf{v}(\mathbf{x},\mathbf{h}, \mathbf{a}) = \left[\eta(\mathbf{x})\mathbf{I}_n + (1 - 
|\eta(\mathbf{x})
|)\mathbf{R}(\hat{\mathbf{h}}, \mathbf{a})\right]\mathbf{P}(\hat{\mathbf{h}}, \mathbf{a})\mathbf{x}_{\pi},\label{n_dimensional_obstacle-avoidance_vector}
\end{equation}
where $\hat{\mathbf{h}}$ denotes unit vector in the direction of $\mathbf{h}$, and $\mathbf{x}_{\pi} = \mathbf{x} - \Pi(\mathbf{x}, \mathcal{O}_{\mathcal{W}})$. The vector $\Pi(\mathbf{x}, \mathcal{O}_{\mathcal{W}})$ represents the point on the obstacle-occupied workspace closest to $\mathbf{x}$, as defined in Section \ref{section:metric_projection}. Notice that, since the obstacles $\mathcal{O}_i, i\in\mathbb{I}\setminus\{0\}$ are convex and the parameter $\gamma\in(0, \delta - r_s)$, according to Assumption \ref{3d_obstacle_separation}, the robot will have a unique closest point to the obstacles whenever its center is in the $(r_a + \gamma)-$neighbourhood of these obstacles.
On the other hand, since obstacle $\mathcal{O}_0$, where $\mathcal{O}_0= (\mathcal{W}^{\circ})^c$, is non-convex, there may be some locations in the $(r_a + \gamma)-$neighbourhood of the obstacle $\mathcal{O}_0$ for which the uniqueness of the closest point from the robot's center to the obstacle $\mathcal{O}_0$ cannot be guaranteed. 
However, since $\mathcal{W}$ is a convex subset of $\mathbb{R}^n$, the obstacle $\mathcal{O}_0$ does not obstruct the robot's straight-line path to the target at the origin as long as $\mathbf{x}\in\mathcal{N}_{\gamma}(\mathcal{D}_{r_a}(\mathcal{O}_0))$.
Consequently, as discussed later in Remark \ref{remark:not_executed_near_workspace_boundary}, the design of the flow sets and the jump sets guarantees that the obstacle-avoidance control vector $\mathbf{v}(\mathbf{x}, \mathbf{h}, \mathbf{a})$ is never activated in the region $\mathcal{N}_{r_a + \gamma}(\mathcal{O}_0).$

Note that the coordinates of the \textit{hit point} $\mathbf{h}$ and the unit vector $\mathbf{a}$ are updated when the robot switches from the \textit{move-to-target} mode to the \textit{obstacle-avoidance} mode using the update law $\mathbf{L}(\xi)$, whose design is provided later in Section \ref{section:update_law}. It is ensured that $\mathbf{a}^\top\mathbf{h} = 0$, which allows us to define the operators $\mathbf{P}$ and $\mathbf{R}$, used in \eqref{n_dimensional_obstacle-avoidance_vector}, as discussed next. 

Since $\mathbf{h}$ and $\mathbf{a}$ are ensured to be orthogonal to each other, the parallel projection operator $\mathbf{P}(\hat{\mathbf{h}}, \mathbf{a})$, as described in Section \ref{definition_parallel_projection_operator}, is defined as
\begin{equation}
    \mathbf{P}(\hat{\mathbf{h}}, \mathbf{a}):= \hat{\mathbf{h}}\hat{\mathbf{h}}^\top + \mathbf{a}\mathbf{a}^\top.\label{parallel_projection_operator}
\end{equation}
According to \eqref{parallel_projection_operator}, the vector $\mathbf{P}(\hat{\mathbf{h}}, \mathbf{a})\mathbf{x}_{\pi}$, used in \eqref{n_dimensional_obstacle-avoidance_vector},  is the projection of the vector $\mathbf{x}_{\pi}$ onto the plane $\mathcal{P}(\mathbf{h}, \mathbf{a})$. Next, we discuss the design of the operator $\mathbf{R}$.

The rotation matrix \(\mathbf{R}(\hat{\mathbf{h}}, \mathbf{a}) := R(\pi/2, \hat{\mathbf{h}}, \mathbf{a}) \in SO(n)\), where \(R(\theta, \hat{\mathbf{h}}, \mathbf{a})\) for \(\theta \in [0, 2\pi)\) is defined in Section \ref{n-D_rotation_matrix}. For \(\theta = \frac{\pi}{2}\), one gets
\begin{equation}
    \mathbf{R}(\hat{\mathbf{h}}, \mathbf{a}) = \mathbf{I}_n + (\mathbf{a}\hat{\mathbf{h}}^\top - \hat{\mathbf{h}}\mathbf{a}^\top) - (\hat{\mathbf{h}}\hat{\mathbf{h}}^\top + \mathbf{a}\mathbf{a}^\top).\label{rotational_operator}
\end{equation}
Note that, according to \eqref{parallel_projection_operator}, one has \(\mathbf{P}(\hat{\mathbf{h}}, \mathbf{a})\mathbf{x}_{\pi} \in \mathcal{P}(\mathbf{h}, \mathbf{a})\). Therefore, as per \eqref{rotational_operator}, the operator \(\mathbf{R}(\hat{\mathbf{h}}, \mathbf{a})\), used in \eqref{n_dimensional_obstacle-avoidance_vector}, rotates the vector \(\mathbf{P}(\hat{\mathbf{h}}, \mathbf{a})\mathbf{x}_{\pi}\) by \(\pi/2\) radians in the plane \(\mathcal{P}(\mathbf{h}, \mathbf{a})\). This rotation is performed from the vector \(\mathbf{h}\) to the vector \(\mathbf{a}\).

Finally, the scalar function $\eta(\mathbf{x}) \in[-1, 1]$ is given by
\begin{equation}
    \eta(\mathbf{x}) = \begin{cases} -1, & d(\mathbf{x}, \mathcal{O}_{\mathcal{W}}) - r_a \geq \gamma_s,\\
    1 -\frac{d(\mathbf{x},\mathcal{O}_{\mathcal{W}})- r_a - \gamma_a}{0.5(\gamma_s - \gamma_a)}, &\gamma_a< d(\mathbf{x}, \mathcal{O}_{\mathcal{W}}) - r_a < \gamma_s,\\
    1, & d(\mathbf{x}, \mathcal{O}_{\mathcal{W}}) - r_a\leq \gamma_a,
    \end{cases}\label{scalar_function_eta_definition}
\end{equation}
where $0<\gamma_a<\gamma_s<\gamma$.
The scalar function $\eta$ is designed to ensure that the center of the robot remains inside the $\gamma-$neighborhood of the $r_a-$dilated obstacle-occupied workspace $\mathcal{N}_{\gamma}(\mathcal{D}_{r_a}(\mathcal{O}_{\mathcal{W}}))$ when it operates in the \textit{obstacle-avoidance} mode in the set $\mathcal{N}_{\gamma}(\mathcal{D}_{r_a}(\mathcal{O}_{\mathcal{W}}))$. This feature allows for the design of the jump set of the \textit{obstacle-avoidance} mode, as discussed later in Section \ref{subsubsection_flowjump_obstacleavoidance}, and ensures convergence to the target location, as stated later in Theorem \ref{theorem:global_stability}.

\begin{remark}
    Consider a plane $\mathcal{P}(\mathbf{h}, \mathbf{a})$ passing through the interior of obstacle $\mathcal{O}_i$ and the origin, where $\mathbf{h}\in\mathcal{N}_{\gamma}(\mathcal{D}_{r_a}(\mathcal{O}_i))$ and $\mathbf{a}\in\mathbb{S}^{n-1}$. Since $\mathcal{O}_i$ is a general convex obstacle, one cannot guarantee that $\mathbf{x}_{\pi}\in\mathcal{P}(\mathbf{h}, \mathbf{a})$ for all $\mathbf{x}\in\mathcal{P}(\mathbf{h}, \mathbf{a})\cap\mathcal{N}_{\gamma}(\mathcal{D}_{r_a}(\mathcal{O}_i))$. Therefore, the operator $\mathbf{P}(\hat{\mathbf{h}}, \mathbf{a})$ is used in \eqref{n_dimensional_obstacle-avoidance_vector} to ensure that $\mathbf{v}(\mathbf{x}, \mathbf{h}, \mathbf{a})\in\mathcal{P}(\mathbf{h}, \mathbf{a})$ for all $\mathbf{x}\in\mathcal{P}(\mathbf{h}, \mathbf{a})\cap\mathcal{N}_{\gamma}(\mathcal{D}_{r_a}(\mathcal{O}_i))$. 
    The operator $\mathbf{R}(\hat{\mathbf{h}}, \mathbf{a})$ aids in steering the robot along the boundary of obstacle $\mathcal{O}_i$ when it operates in the \textit{obstacle-avoidance} mode.
    However, because $\mathcal{O}_i$ is a general convex obstacle, the vector $\mathbf{R}(\hat{\mathbf{h}}, \mathbf{a})\mathbf{P}(\hat{\mathbf{h}}, \mathbf{a})\mathbf{x}_{\pi}$ is not necessarily tangential to the set $\partial\mathcal{D}_{\beta}(\mathcal{O}_i)$ at $\mathbf{x}\in\mathcal{P}(\mathbf{h}, \mathbf{a})\cap\mathcal{N}_{\gamma}(\mathcal{D}_{r_a}(\mathcal{O}_i))$, where $\beta = d(\mathbf{x}, \mathcal{O}_i)$. 
    Therefore, it may drive the robot either closer to obstacle $\mathcal{O}_i$ or away from it. 
    Thus, the scalar function $\eta(\mathbf{x})$ in included to ensure that $\mathbf{x}$ remains in the set $\mathcal{P}(\mathbf{h}, \mathbf{a})\cap\mathcal{N}_{\gamma}(\mathcal{D}_{r_a}(\mathcal{O}_i))$ as long as the robot operates in \textit{obstacle-avoidance} mode.
\end{remark}

Next, we provide the construction of the flow set $\mathcal{F}$ and the jump set $\mathcal{J}$ used in \eqref{hybrid_control_law}.

\subsection{Geometric construction of the flow and jump sets}\label{section:flow_jump_set_construction}
When the robot is located at a distance larger than $\gamma$ from the obstacles, it moves straight towards the target location in the \textit{move-to-target} mode. The robot's distance from the nearby obstacles is obtained by evaluating $d(\mathbf{x}, \eth_{\mathcal{W}}^{\mathbf{x}})$, as discussed in Section \ref{section:metric_projection}, where the set $\eth_{\mathcal{W}}^{\mathbf{x}}$ is defined in \eqref{sensor_visible_boundary}. Upon entering in the $\gamma$-neighbourhood of the obstacles, as per Assumption \ref{3d_obstacle_separation}, the robot is within the $\gamma$-neighbourhood of only one obstacle. In other words, according to Assumption \ref{3d_obstacle_separation}, the fact that $d(\mathbf{x}, \eth_{\mathcal{W}}^{\mathbf{x}}) \leq r_a + \gamma$, implies the existence of $i\in\mathbb{I}$ such that $\mathbf{x}\in\mathcal{N}_{\gamma}(\mathcal{D}_{r_a}(\mathcal{O}_i))$ and $\mathbf{x}\notin\mathcal{N}_{\gamma}(\mathcal{D}_{r_a}(\mathcal{O}_j))$ for all $j\in\mathbb{I}\setminus{i}$.

\begin{figure}
\centering
\includegraphics[width = 0.8\linewidth]{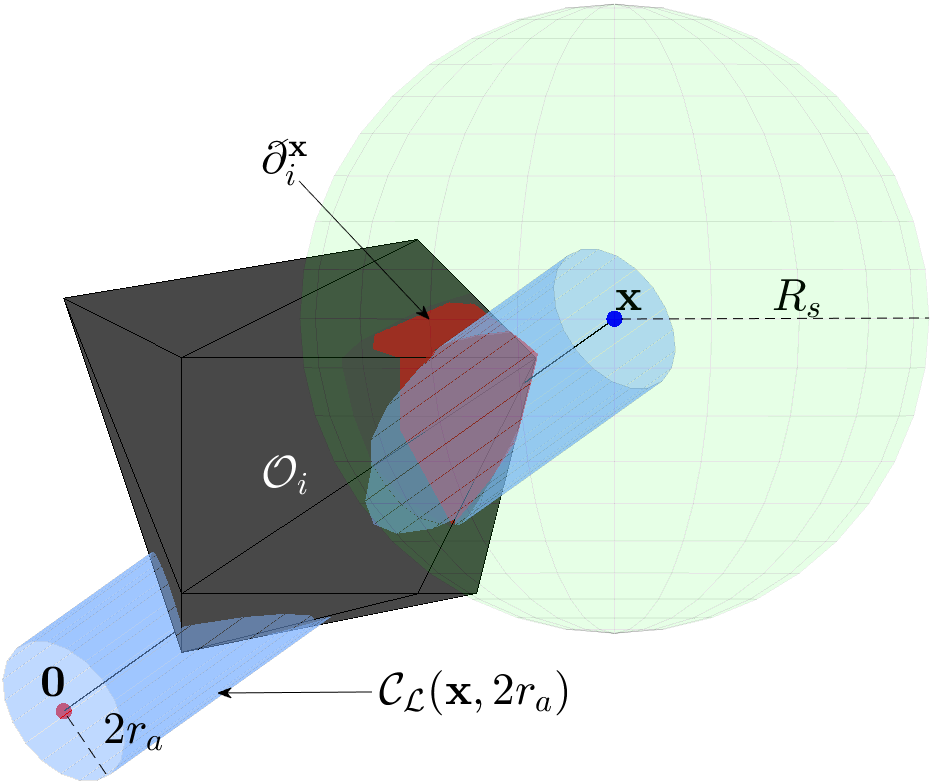}
\caption{A workspace scenario in which condition \eqref{sufficient_condition_to_check} is satisfied.}
\label{cylinder_intersection_condition_diagram}
\end{figure}

\begin{figure*}
\centering
\subfloat[][]{\includegraphics[width =0.25\linewidth]{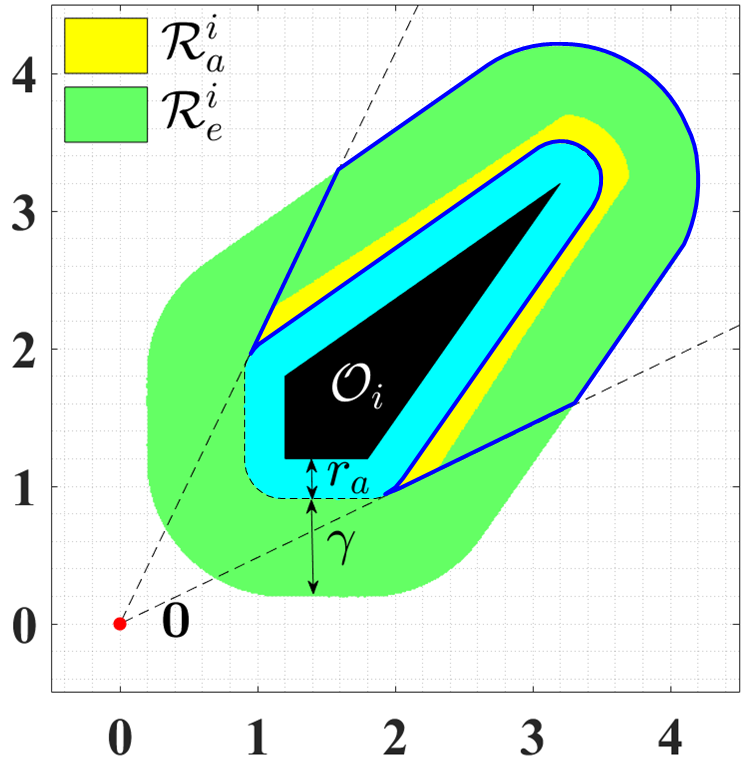}\label{Rs050}}
\subfloat[][]{\includegraphics[width =0.25\linewidth]{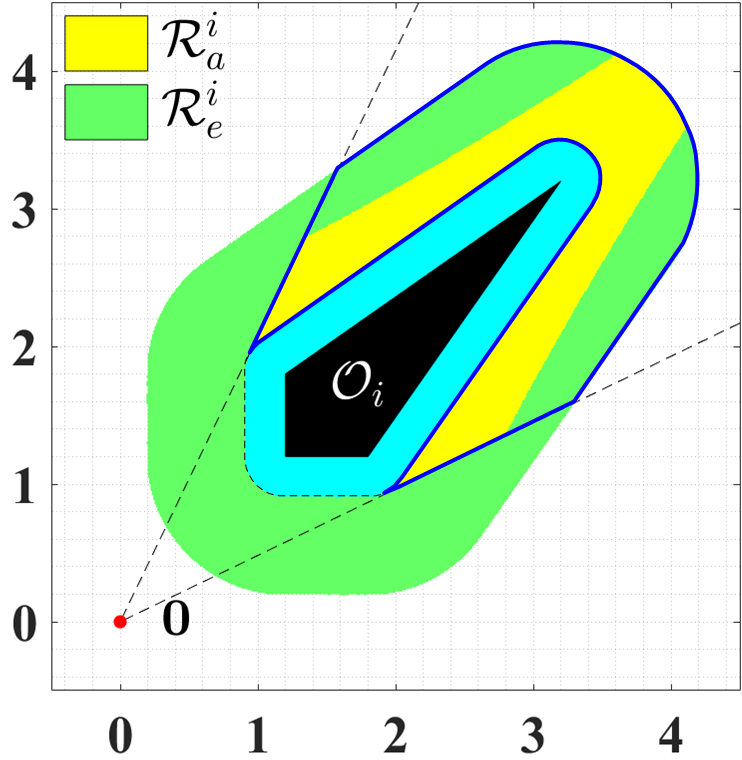}\label{Rs100}}
\subfloat[][]{\includegraphics[width =0.25\linewidth]{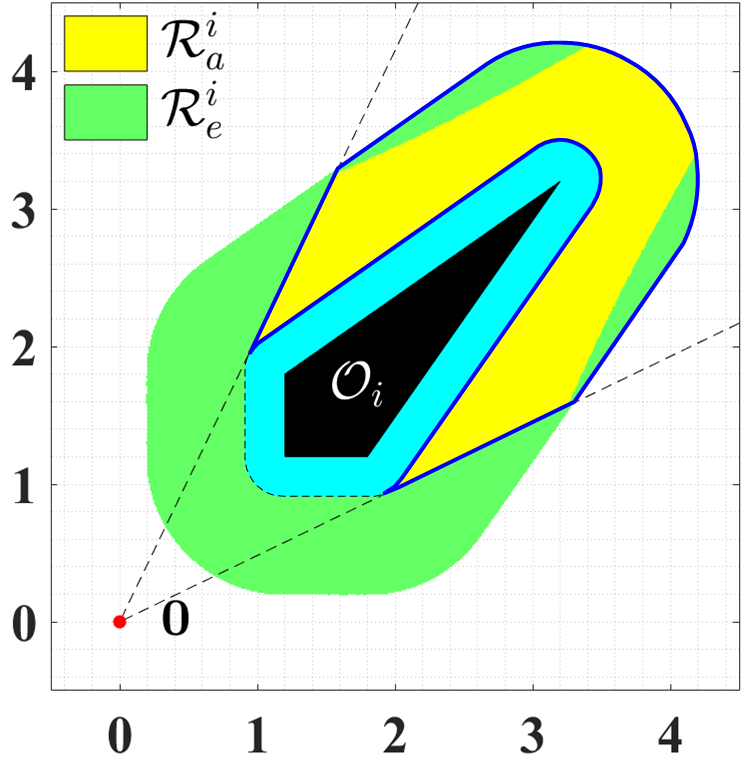}\label{Rs150}}
\subfloat[][]{\includegraphics[width =0.25\linewidth]{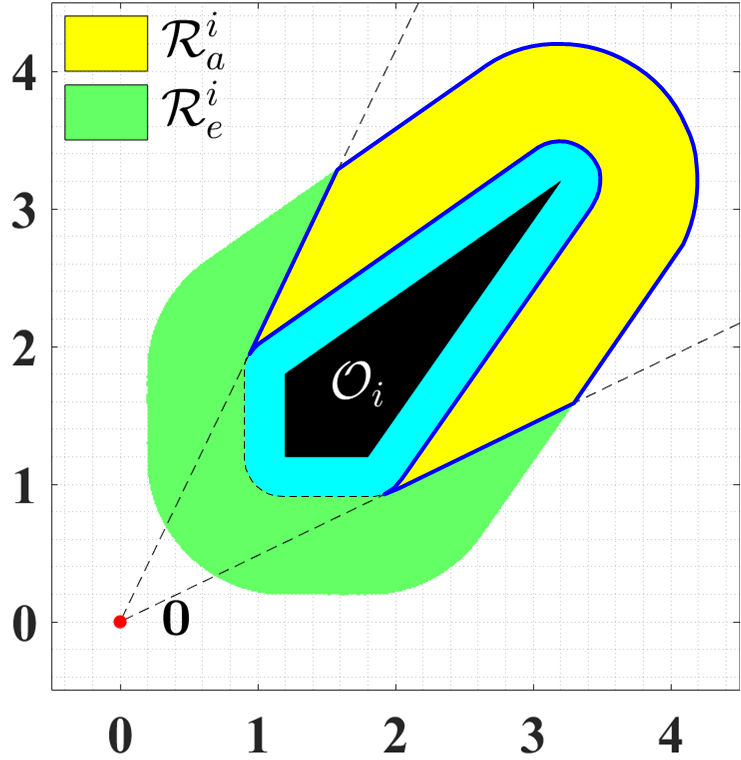}\label{Rs400}}
\caption{Partitioning of the region $\mathcal{N}_{\gamma}(\mathcal{D}_{r_a}(\mathcal{O}_i))$, with $r_a = 0.25\,m$ and $\gamma = 1.15\,m$, into the \textit{avoidance} region $\mathcal{R}_a^i$ and the \textit{exit} region $\mathcal{R}_e^i$, for different values of the sensing radius $R_s$: (a) $R_s = 0.5\,m$, (b) $R_s = 1\,m$, (c) $R_s = 1.5\,m$, (d) $R_s = 4\,m$.}
\label{effect_of_Rs}
\end{figure*}

When the robot operates in the \textit{move-to-target} mode, its velocity is directed towards the target location. Hence, if the robot enters in the $\gamma-$neighbourhood of obstacle $\mathcal{O}_i$, it should constantly verify whether the path joining the robot's location and the target is obstructed by $\mathcal{O}_i$. Therefore, we construct the set $\eth_i^{\mathbf{x}}$, which contains locations on the boundary of obstacle $\mathcal{O}_i$ that have a clear line of sight to the center of the robot. Given $\mathbf{x}\in\mathcal{N}_{\gamma}(\mathcal{D}_{r_a}(\mathcal{O}_i))$, the set $\eth_i^{\mathbf{x}}$ is defined as
\begin{equation}
    \eth_i^{\mathbf{x}} := \{\mathbf{y}\in\eth_{\mathcal{W}}^{\mathbf{x}} \,|\, \mathbf{y} \in\partial\mathcal{O}_i\}.\label{visibile_boundary_closest_obstacle}
\end{equation}

Observe that if $\mathbf{x}\in\mathcal{N}_{\gamma}(\mathcal{D}_{r_a}(\mathcal{O}_i))$ and 
\begin{equation}
    \eth_i^{\mathbf{x}}\cap\mathcal{C}_{\mathcal{L}}(\mathbf{x}, 2r_a) \ne \emptyset,\label{sufficient_condition_to_check}
\end{equation}
as shown in Fig. \ref{cylinder_intersection_condition_diagram}, then the robot, moving straight towards the target location, will eventually collide with obstacle $\mathcal{O}_i$, where $\mathcal{C}_{\mathcal{L}}(\mathbf{x}, 2r_a)$ represents the cylinder with width $2r_a$ and axis $\mathcal{L}_s(\mathbf{x}, \mathbf{0})$ as defined in Section \ref{definition:cylinder}. We define the \textit{avoidance} region, denoted by $\mathcal{R}_a^i$, as the set of all locations from the $\gamma-$neighbourhood of $r_a-$dilated obstacle $\mathcal{O}_i$ such that condition \eqref{sufficient_condition_to_check} is satisfied. The set $\mathcal{R}_a^i$ is defined as
\begin{equation}
    \mathcal{R}_a^i:= \left\{\mathbf{x}\in\mathcal{N}_{\gamma}(\mathcal{D}_{r_a}(\mathcal{O}_i))|\eth_i^{\mathbf{x}}\cap\mathcal{C}_{\mathcal{L}}(\mathbf{x}, 2r_a) \ne \emptyset\right\}.\label{Individual_avoidance_region}
\end{equation}

Since the set $\eth_{i}^{\mathbf{x}}$ is used to define the \textit{avoidance} region $\mathcal{R}_a^i$, the shape of $\mathcal{R}_a^i$ depends on the value of the sensing radius $R_s$, as shown in Fig. \ref{effect_of_Rs}. However, as stated in the next lemma, irrespective of the value of the sensing radius $R_s$, the \textit{avoidance} region $\mathcal{R}_a^i$ is always a subset of the \textit{landing} region $\mathcal{R}_l^i$, which is defined as
\begin{equation}
\begin{aligned}
    \mathcal{R}_l^i:= \{\mathbf{x}\in\mathcal{N}_{\gamma}(\mathcal{D}_{r_a}(\mathcal{O}_i))|\mathcal{L}_s(\mathbf{x}, \mathbf{0})\cap&\mathcal{D}_{r_a}(\mathcal{O}_i)\ne \emptyset,\\
    &\mathbf{x}^\top\mathbf{x}_{\pi} \geq 0\}.\label{landing_region}
    \end{aligned}
\end{equation}
Furthermore, if $R_s > l_i$, then one has $\mathcal{R}_a^i = \mathcal{R}_l^i$, where $l_i$ is the largest possible distance between any two points from the set $\mathcal{D}_{r_a + \gamma}(\mathcal{O}_i)$, that is
\begin{equation}
    l_i = \max\{\norm{\mathbf{p} - \mathbf{q}}|\mathbf{p}, \mathbf{q}\in\mathcal{D}_{r_a + \gamma}(\mathcal{O}_i)\}.\label{largest_distance_li}
    \end{equation}
\begin{lemma}
    For each \( i \in \mathbb{I} \setminus \{0\} \) and for any \( R_s > 0 \), it holds that \( \mathcal{R}_a^i \subset \mathcal{R}_l^i \). Additionally, if \( R_s > l_i \), where \( l_i \) is defined in \eqref{largest_distance_li}, then \( \mathcal{R}_a^i = \mathcal{R}_l^i \).
\label{lemma:superset_landing_region}
\end{lemma}
\begin{proof}
    See Appendix \ref{proof:lemma_superset_landing_region}. 
\end{proof}
In Fig. \ref{effect_of_Rs}, the blue curve represents the boundary of the \textit{landing} region $\mathcal{R}_{l}^i$ for obstacle $\mathcal{O}_i$. Notice that the \textit{avoidance} region $\mathcal{R}_a^i$ is always a subset of the \textit{landing} region $\mathcal{R}_l^i$, irrespective of the value of the sensing radius $R_s$.

Since $\mathcal{W}$ is a closed convex set, for all $\mathbf{x}\in\mathcal{N}_{\gamma}(\mathcal{D}_{r_a}(\mathcal{O}_0))$, condition \eqref{sufficient_condition_to_check} is not satisfied, where $\mathcal{O}_0 = \left(\mathcal{W}^{\circ}\right)^c$. Therefore, $\mathcal{R}_a^0$ is an empty set. The union of the \textit{avoidance} regions over all obstacles is given by
\begin{equation}
    \mathcal{R}_a := \bigcup_{i\in\mathbb{I}}\mathcal{R}_a^i.\label{union_avoidance_region}
\end{equation}

Next, we define the \textit{exit} region $\mathcal{R}_e$ as the part of the $\gamma-$neighbourhood of the $r_a-$dilated obstacles that do not belong the \textit{avoidance} region. The \textit{exit} region $\mathcal{R}_e$ is defined as
\begin{equation}
    \mathcal{R}_e = \mathcal{N}_{\gamma}(\mathcal{D}_{r_a}(\mathcal{O}_{\mathcal{W}}))\setminus\mathcal{R}_a.\label{exit_region}
\end{equation}
Note that when the robot's center $\mathbf{x}$ belongs to the \textit{exit} region, condition \eqref{sufficient_condition_to_check} is not satisfied. 
In other words, if the robot, with its center in the \textit{exit} region, moves directly towards the target location along the path connecting its center to the origin, it will not collide with the nearest obstacle within the sensing radius $R_s$. Hence, the robot should move straight towards the target location only if it is in the \textit{exit} region. 

Next, we provide the geometric construction of the flow set $\mathcal{F}$ and the jump set $\mathcal{J}$, used in \eqref{hybrid_control_law}.


\subsubsection{Flow and jump sets (\textit{move-to-target} mode)}\label{subsubsection_flowjump_move2target}

When the robot, in the \textit{move-to-target} mode, enters the $\gamma$-neighborhood of an obstacle obstructing its straight path to the target location, the control input switches to the \textit{obstacle-avoidance} mode.
Hence, the jump set $\mathcal{J}_0$ of the \textit{move-to-target} mode is defined as
\begin{equation}\label{mtt_jumpset}
    \mathcal{J}_0 := \{\xi\in\mathcal{K}\mid\mathbf{x}\in\mathcal{J}_0^{\mathcal{W}}, m = 0\},
\end{equation}
where the set $\mathcal{J}_0^{\mathcal{W}}$ is given as
\begin{equation}
\mathcal{J}_{0}^{\mathcal{W}}:= {\mathcal{N}_{\gamma_s}(\mathcal{D}_{r_a}(\mathcal{O}_{\mathcal{W}}))\cap\mathcal{R}_a},\label{jumpset_movetotarget}
\end{equation}
with $\gamma_s \in (0, \gamma).$ 
In \eqref{jumpset_movetotarget}, we allow the robot to enter the $\gamma_s$-neighborhood of obstacles before switching to the \textit{obstacle-avoidance} mode. This creates a hysteresis region, $\mathcal{N}_{\gamma - \gamma_s}(\mathcal{D}_{r_a + \gamma_s}(\mathcal{O}_{\mathcal{W}}))$, which acts as a buffer zone to prevent frequent switching between the modes due to small disturbances or noise.

The flow set of the \textit{move-to-target} mode is then defined as
\begin{equation}\label{mtt_flowset}
\mathcal{F}_0 := \{\xi\in\mathcal{K}\mid\mathbf{x}\in\mathcal{F}_0^{\mathcal{W}}, m = 0\},
\end{equation}
where the set $\mathcal{F}_0^{\mathcal{W}}$ is given by
\begin{equation}
\mathcal{F}_{0}^{\mathcal{W}} := \left(\mathcal{W}\setminus(\mathcal{D}_{r_a+\gamma_s}^{\circ}(\mathcal{O}_{\mathcal{W}}))\right)\cup\overline{\mathcal{R}_e}.\label{flowset_movetotarget}
\end{equation}
Notice that the union  $\mathcal{J}_0^{\mathcal{W}}$ and  $\mathcal{F}_0^{\mathcal{W}}$ covers the obstacle-free workspace $\mathcal{W}_{r_a}$, as shown in Fig. \ref{flow_jump_set_2D}. 

\begin{figure}
    \centering
    \includegraphics[width=0.7\linewidth]{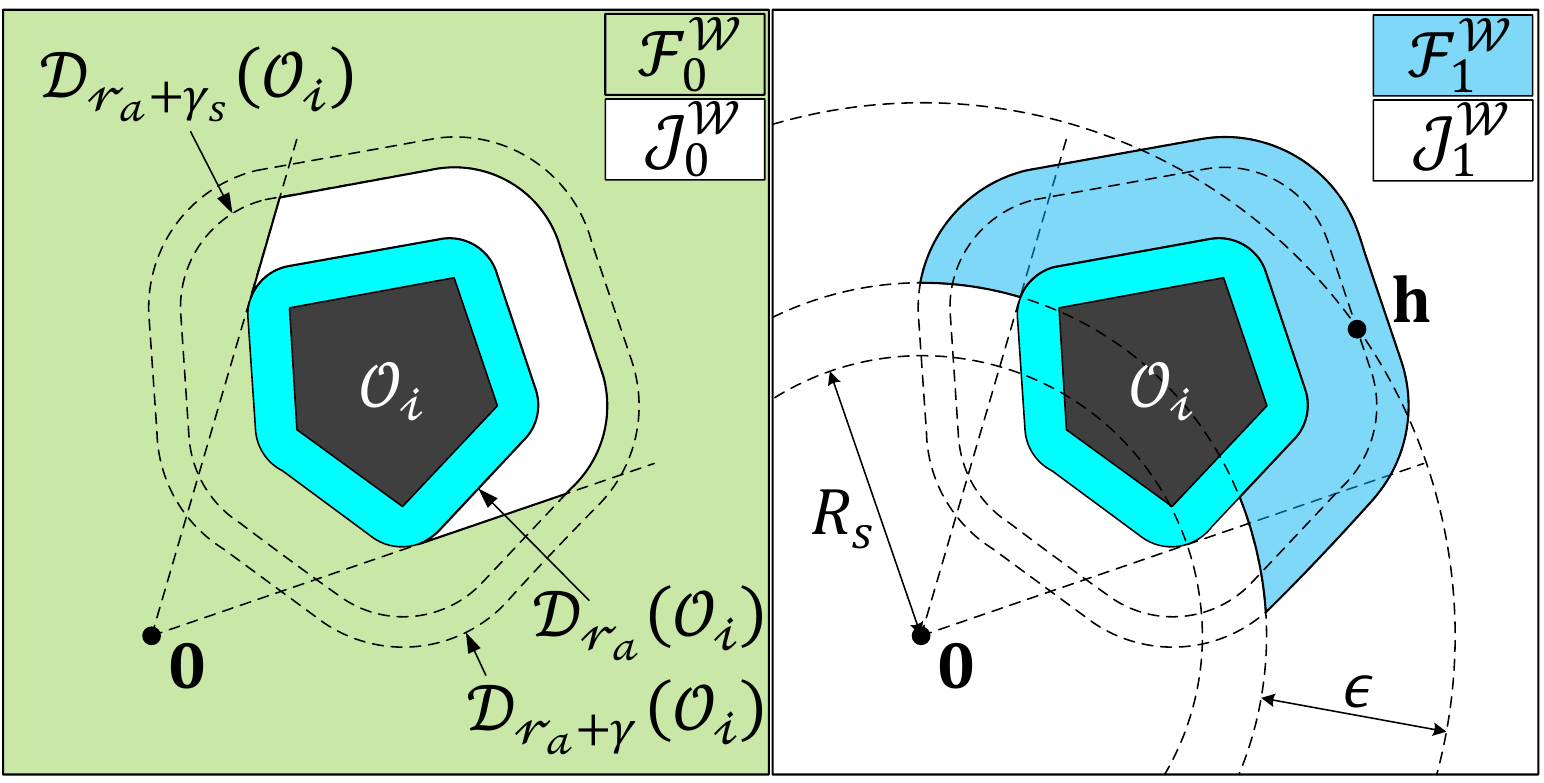}
    \caption{Two-dimensional illustration of the flow sets and the jump sets considered in Sections \ref{subsubsection_flowjump_move2target} and \ref{subsubsection_flowjump_obstacleavoidance}.}
    \label{flow_jump_set_2D}
\end{figure}

\subsubsection{Flow and jump sets (\textit{obstacle-avoidance} mode)}
\label{subsubsection_flowjump_obstacleavoidance}
 The robot operates in the \textit{obstacle-avoidance} mode inside the $\gamma$-neighbourhood of the obstacles. 
 Since the robot can safely be steered straight towards the target location and exit the $\gamma-$neighborhood of the obstacle being avoided, whenever $\mathbf{x}$ is in the \textit{exit} region \eqref{exit_region}, the control input should switch back to the \textit{move-to-target} mode only if $\mathbf{x}\in\mathcal{R}_e$.
 To that end, the jump set $\mathcal{J}_1$ of the \textit{obstacle-avoidance} mode is defined as follows:
\begin{equation}\label{oa_jumpset}
    \mathcal{J}_1 := \mathcal{J}_{\mathbf{x}}\cup\mathcal{J}_s,
\end{equation}
where the set $\mathcal{J}_{\mathbf{x}}$ is defined as
\begin{equation}
    \mathcal{J}_{\mathbf{x}} := \{\xi\in\mathcal{K}\mid \mathbf{x}\in\mathcal{J}_1^{\mathcal{W}}, m = 1\},
\end{equation}
and the set $\mathcal{J}_1^{\mathcal{W}}$ is given by
{\small\begin{equation}
    \mathcal{J}_1^{\mathcal{W}} := \left(\mathcal{W}\setminus\mathcal{D}_{r_a + \gamma}^{\circ}(\mathcal{O}_{\mathcal{W}})\right)\cup\mathcal{ER}^{\mathbf{h}}{\cup\mathcal{N}_{\gamma}(\mathcal{D}_{r_a}(\mathcal{O}_0))}\cup\mathcal{S}_{\mathbf{0}}.\label{jumpset_obstacleavoidance}
\end{equation}}

Next, we provide definitions of the sets $\mathcal{ER}^{\mathbf{h}}$ and $\mathcal{S}_{\mathbf{0}}$, which is then followed by the definition of the set $\mathcal{J}_s$. 
We make use of the \textit{hit point} $\mathbf{h}$ (\textit{i.e.}, the location $\mathbf{x}$ where the control input switched from the \textit{move-to-target} mode to the current \textit{obstacle-avoidance} mode) to define the set $\mathcal{ER}^{\mathbf{h}}$ as follows:
\begin{equation}
    \mathcal{ER}^{\mathbf{h}} := \{\mathbf{x}\in\mathcal{R}_e\big{|}\norm{\mathbf{h}} - \norm{\mathbf{x}} \geq \epsilon\},\label{partition_rm}
\end{equation}
where $\epsilon \in(0, \bar{\epsilon}]$ and $\bar{\epsilon}$ is a sufficiently small positive scalar. The set $\mathcal{ER}^{\mathbf{h}}$ contains the locations $\mathbf{x}$ from the \textit{exit} region $\mathcal{R}_e$ for which the target location is at least $\epsilon$ units closer to $\mathbf{x}$ than to the current \textit{hit point} $\mathbf{h}$. 
Since the obstacles are compact and convex, and the target location $\mathbf{0}$ is within the interior of the obstacle-free workspace $\mathcal{W}_{r_a}$, it is possible to guarantee the existence of the parameter $\bar{\epsilon}$ such that the intersection set $\mathcal{ER}^{\mathbf{h}}\cap\mathcal{N}_{\gamma}(\mathcal{D}_{r_a}(\mathcal{O}_i))$ is non-empty for every $\mathbf{h}\in\mathcal{J}_0^{\mathcal{W}}\cap\mathcal{N}_{\gamma}(\mathcal{D}_{r_a}(\mathcal{O}_i))$ for each $i\in\mathbb{I}\setminus\{0\}$, as stated in the following lemma:
\begin{lemma}
    Let Assumption \ref{3d_obstacle_separation} hold. Then, for each $i\in\mathbb{I}\setminus\{0\}$, for every $\mathbf{h}\in\mathcal{J}_{0}^{\mathcal{W}}\cap\mathcal{N}_{\gamma}(\mathcal{D}_{r_a}(\mathcal{O}_i))$, there exists $\bar{\epsilon} > 0$ such that for any $\epsilon\in(0, \bar{\epsilon}]$, one has
$\mathcal{ER}^{\mathbf{h}}\cap\mathcal{N}_{\gamma}(\mathcal{D}_{r_a}(\mathcal{O}_i))\ne \emptyset, $ where $\mathcal{ER}^{\mathbf{h}}$ is defined in \eqref{partition_rm}.\label{lemma:epsilon_exists}
\end{lemma}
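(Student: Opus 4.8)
The plan is to produce an explicit witness point in $\mathcal{ER}^{\mathbf{h}}\cap\mathcal{N}_{\gamma}(\mathcal{D}_{r_a}(\mathcal{O}_i))$ lying on the segment $\mathcal{L}_s(\mathbf{h},\mathbf{0})$, and then to read $\bar{\epsilon}$ off its distance to the target. First I would record two elementary consequences of \eqref{3d_choice_of_epsilon_d} and Assumption \ref{3d_obstacle_separation}: since $r_a=r+r_s$ with $r_s<\bar{r}_s$ and $\gamma<\bar{r}_s-r_s$, one has $r_a+\gamma<r+\bar{r}_s\le\min\{\bar{r}/2,\,d(\mathbf{0},\mathcal{O}_{\mathcal{W}})\}$. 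Hence (i) the target satisfies $\mathbf{0}\notin\mathcal{D}_{r_a+\gamma}(\mathcal{O}_i)$ and, in particular, $\mathbf{h}\ne\mathbf{0}$; and (ii) the dilated sets $\mathcal{D}_{r_a+\gamma}(\mathcal{O}_j)$, $j\in\mathbb{I}$, are pairwise disjoint because $d(\mathcal{O}_i,\mathcal{O}_j)\ge\bar{r}>2(r_a+\gamma)$, so any point at distance at most $r_a+\gamma$ from $\mathcal{O}_i$ lies in $\mathcal{N}_{\gamma}(\mathcal{D}_{r_a}(\mathcal{O}_{\mathcal{W}}))$, is farther than $r_a$ from every $\mathcal{O}_j$ with $j\ne i$, and therefore belongs to $\mathcal{R}_l$ if and only if it belongs to $\mathcal{R}_l^i$, i.e., iff its segment to $\mathbf{0}$ meets $\mathcal{D}_{r_a}^{\circ}(\mathcal{O}_i)$.

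Next I would parametrize the segment by $\mathbf{p}(\lambda):=\lambda\mathbf{h}$, $\lambda\in[0,1]$, so that $\norm{\mathbf{p}(\lambda)}=\lambda\norm{\mathbf{h}}$ is strictly increasing. Using $\mathbf{h}\in\mathcal{J}_0^{\mathcal{W}}\subseteq\overline{\mathcal{R}_l}$ and compactness of $\mathcal{D}_{r_a}(\mathcal{O}_i)$, I would argue that $\mathcal{L}_s(\mathbf{h},\mathbf{0})\cap\mathcal{D}_{r_a}(\mathcal{O}_i)\ne\emptyset$: take $\mathbf{h}_n\to\mathbf{h}$ in $\mathcal{N}_{\gamma_s}(\mathcal{D}_{r_a}(\mathcal{O}_{\mathcal{W}}))\cap\mathcal{R}_l$; by (ii) and disjointness, $\mathbf{h}_n\in\mathcal{R}_l^i$ for large $n$, pick $\mathbf{q}_n\in\mathcal{L}_s(\mathbf{h}_n,\mathbf{0})\cap\mathcal{D}_{r_a}^{\circ}(\mathcal{O}_i)$, and pass to a convergent subsequence. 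Then set $\lambda_1:=\min\{\lambda\in[0,1]\mid\mathbf{p}(\lambda)\in\mathcal{D}_{r_a}(\mathcal{O}_i)\}$, which exists by closedness and lies in $(0,1]$ because $\mathbf{0}=\mathbf{p}(0)\notin\mathcal{D}_{r_a}(\mathcal{O}_i)$ by (i).

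The core step is to choose $\lambda^{*}\in(0,1)$ so that $\mathbf{x}^{*}:=\mathbf{p}(\lambda^{*})\in\mathcal{R}_e\cap\mathcal{N}_{\gamma}(\mathcal{D}_{r_a}(\mathcal{O}_i))$. If $\lambda_1<1$, take $\lambda^{*}=\lambda_1$: then $\mathbf{x}^{*}\in\partial\mathcal{D}_{r_a}(\mathcal{O}_i)$, so $d(\mathbf{x}^{*},\mathcal{O}_i)=r_a$, and since $\mathbf{p}(\lambda)\notin\mathcal{D}_{r_a}(\mathcal{O}_i)$ for $\lambda<\lambda_1$ by minimality, the sub-segment $\mathcal{L}_s(\mathbf{x}^{*},\mathbf{0})$ avoids $\mathcal{D}_{r_a}^{\circ}(\mathcal{O}_i)$. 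If $\lambda_1=1$, the entire open segment already avoids $\mathcal{D}_{r_a}(\mathcal{O}_i)$ and $d(\mathbf{h},\mathcal{O}_i)=r_a$; take $\lambda^{*}=1-\delta$ with $\delta>0$ small enough (by continuity of $\lambda\mapsto d(\mathbf{p}(\lambda),\mathcal{O}_i)$) that $r_a<d(\mathbf{x}^{*},\mathcal{O}_i)\le r_a+\gamma$. In either case $\mathbf{x}^{*}\in\mathcal{N}_{\gamma}(\mathcal{D}_{r_a}(\mathcal{O}_i))$, its segment to $\mathbf{0}$ misses $\mathcal{D}_{r_a}^{\circ}(\mathcal{O}_i)$, hence by (ii) $\mathbf{x}^{*}\notin\mathcal{R}_l$, i.e., $\mathbf{x}^{*}\in\mathcal{R}_e$; convexity of $\mathcal{W}$ with $\mathbf{0},\mathbf{h}\in\mathcal{W}$ gives $\mathbf{x}^{*}\in\mathcal{W}$. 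Finally $\norm{\mathbf{h}}-\norm{\mathbf{x}^{*}}=(1-\lambda^{*})\norm{\mathbf{h}}>0$, so with $\bar{\epsilon}:=(1-\lambda^{*})\norm{\mathbf{h}}$ one gets $\mathbf{x}^{*}\in\mathcal{ER}^{\mathbf{h}}\cap\mathcal{N}_{\gamma}(\mathcal{D}_{r_a}(\mathcal{O}_i))\ne\emptyset$ for every $\epsilon\in(0,\bar{\epsilon}]$, as required by \eqref{partition_rm}; a uniform-in-$\mathbf{h}$ constant then follows by minimizing over the compact set $\mathcal{J}_0^{\mathcal{W}}\cap\mathcal{N}_{\gamma}(\mathcal{D}_{r_a}(\mathcal{O}_i))$.

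I expect the only real obstacle to be the borderline configuration in which $\mathbf{h}\in\partial\mathcal{D}_{r_a}(\mathcal{O}_i)$ and $\mathcal{L}_s(\mathbf{h},\mathbf{0})$ is tangent to $\mathcal{D}_{r_a}(\mathcal{O}_i)$, so that the first-entry point coincides with $\mathbf{h}$ and yields no strict gain in distance to the target; the fix above (retreating a short distance along the ray toward $\mathbf{0}$) is elementary, but it must be accompanied by the verification that the retreated point still lies in the $\gamma$-shell $\mathcal{N}_{\gamma}(\mathcal{D}_{r_a}(\mathcal{O}_i))$ and still outside $\mathcal{D}_{r_a}^{\circ}(\mathcal{O}_i)$, which is where the continuity argument and the strict inequalities from the parameter choices are used.
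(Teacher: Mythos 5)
Your argument is correct, but it proves the lemma by a different construction than the paper. You build, for each hit point $\mathbf{h}$, a witness on the segment $\mathcal{L}_s(\mathbf{h},\mathbf{0})$: the first point (seen from the target) where the segment meets $\mathcal{D}_{r_a}(\mathcal{O}_i)$, retreated slightly toward $\mathbf{0}$ in the tangent case $\lambda_1=1$; the separation estimate $2(r_a+\gamma)<\bar{r}$ then places this point in $\mathcal{R}_e\cap\mathcal{N}_{\gamma}(\mathcal{D}_{r_a}(\mathcal{O}_i))$, strictly closer to the target than $\mathbf{h}$, giving an $\mathbf{h}$-dependent $\bar{\epsilon}=(1-\lambda^{*})\norm{\mathbf{h}}$, which matches the literal quantifier order of the statement. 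The paper instead exhibits a single witness valid for all hit points at once: the projection $\Pi(\mathbf{0},\mathcal{D}_{r_a}(\mathcal{O}_i))$, which by \eqref{individual_landing_region} lies outside $\mathcal{R}_l^i$ at distance $\bar{\delta}=d(\mathbf{0},\mathcal{D}_{r_a}(\mathcal{O}_i))$ from the target, whereas every $\mathbf{h}\in\mathcal{J}_0^{\mathcal{W}}\cap\mathcal{N}_{\gamma}(\mathcal{D}_{r_a}(\mathcal{O}_i))$ lies in (the closure of) $\mathcal{R}_l^i$, so that any $\bar{\epsilon}\in(0,\,d(\mathbf{0},\mathcal{R}_l^i)-\bar{\delta}]$ works uniformly in $\mathbf{h}$. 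The paper's route buys exactly that uniformity, which is what the design actually needs ($\bar{\epsilon}$ and $\epsilon$ are fixed once, in the parameter selection, before any hit point occurs); your route is more elementary and makes the borderline tangency configuration explicit, which the paper glosses over.

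Two small caveats. First, your closing uniformization step ("minimizing over the compact set") is not immediate as phrased: $\lambda^{*}(\mathbf{h})$ and the retreat parameter are choices that need not depend lower semicontinuously on $\mathbf{h}$, so the infimum of $(1-\lambda^{*}(\mathbf{h}))\norm{\mathbf{h}}$ could a priori degenerate. It is easily repaired — the witness region $\mathcal{R}_e\cap\mathcal{N}_{\gamma}(\mathcal{D}_{r_a}(\mathcal{O}_i))$ does not depend on $\mathbf{h}$, so any $\bar{\epsilon}$ below $\min\{\norm{\mathbf{h}}:\mathbf{h}\in\mathcal{J}_0^{\mathcal{W}}\cap\mathcal{N}_{\gamma}(\mathcal{D}_{r_a}(\mathcal{O}_i))\}-\inf\{\norm{\mathbf{x}}:\mathbf{x}\in\mathcal{R}_e\cap\mathcal{N}_{\gamma}(\mathcal{D}_{r_a}(\mathcal{O}_i))\}$ works, this gap being positive by your per-$\mathbf{h}$ argument applied to a norm-minimizing hit point — or one simply falls back on the paper's fixed witness; but note the lemma as stated only asks for a per-$\mathbf{h}$ $\bar{\epsilon}$, so this is not a gap in the proof of the statement itself. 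Second, your preliminary claim that any point within $r_a+\gamma$ of $\mathcal{O}_i$ lies in $\mathcal{N}_{\gamma}(\mathcal{D}_{r_a}(\mathcal{O}_{\mathcal{W}}))$ should be restricted to points additionally outside $\mathcal{D}_{r_a}^{\circ}(\mathcal{O}_i)$; your witnesses satisfy this, so nothing breaks.
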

    
\begin{proof}
    See Appendix \ref{proof_of_epsilon_exists}.
\end{proof}

According to \eqref{jumpset_obstacleavoidance} and \eqref{partition_rm}, the robot operating in the \textit{obstacle-avoidance} mode, can switch to the \textit{move-to-target} mode when its center belongs to the \textit{exit} region $\mathcal{R}_e$ and the target location $\mathbf{0}$ is at least $\epsilon$ units closer to $\mathbf{x}$ than to the current \textit{hit point} $\mathbf{h}$. This introduces a hysteresis region that prevents frequent switching between the modes due to small disturbances or noise. 
Note that if $\epsilon$ is set to a very high value, it may result in $\mathcal{ER}^{\mathbf{h}}\cap\mathcal{N}_{\gamma}(\mathcal{D}_{r_a}(\mathcal{O}_i)) = \emptyset$, where $\mathbf{h}\in\mathcal{J}_{0}^{\mathcal{W}}\cap\mathcal{D}_{r_a + \gamma}(\mathcal{O}_i)$ for some $i\in\mathbb{I}$. Therefore, one should choose a sufficiently small value for $\epsilon$ while compensating for the measurement noise such that $\mathcal{B}_{\|\mathbf{h}\| - \epsilon}(\mathbf{0})\cap\mathcal{D}_{r_a}(\mathcal{O}_i) \ne \emptyset$.
Next, we define the set $\mathcal{S}_0$, used in \eqref{jumpset_obstacleavoidance}.

\begin{figure}
\centering
\subfloat[][]{\includegraphics[width =0.425\linewidth]{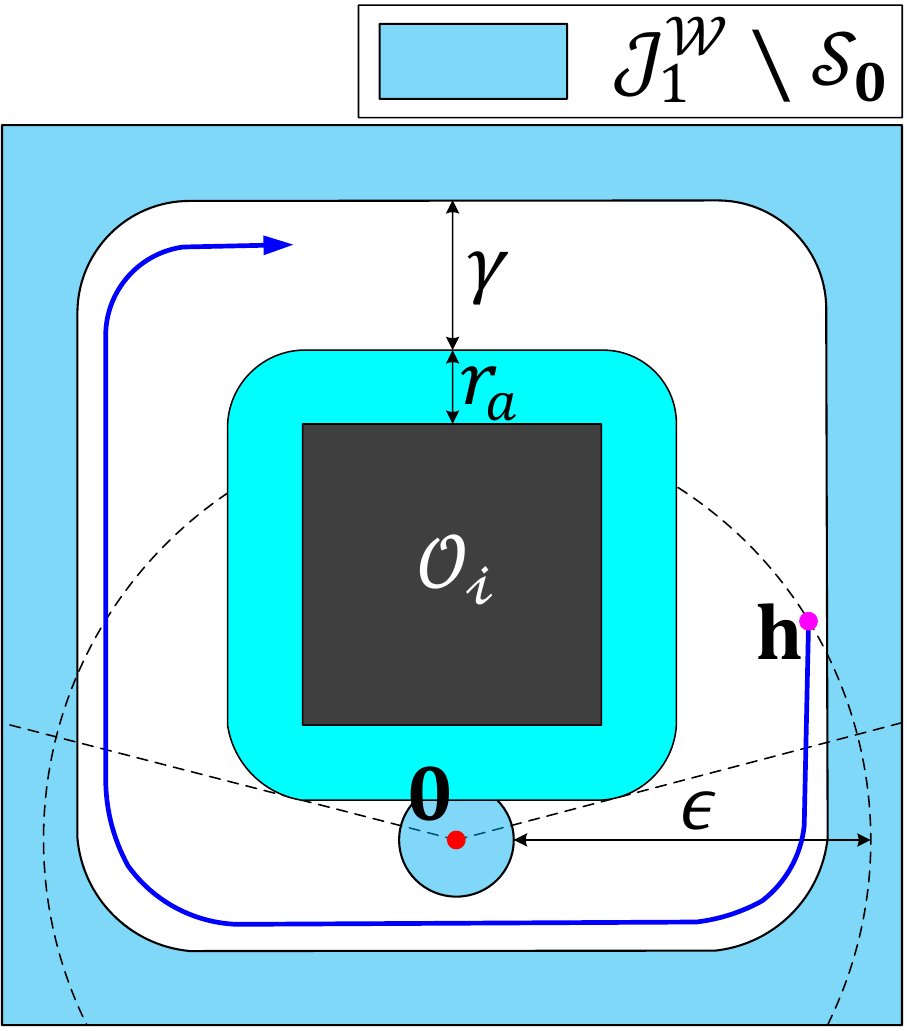}\label{without_S0}}\hspace{0.1cm}
\subfloat[][]{\includegraphics[width =0.425\linewidth]{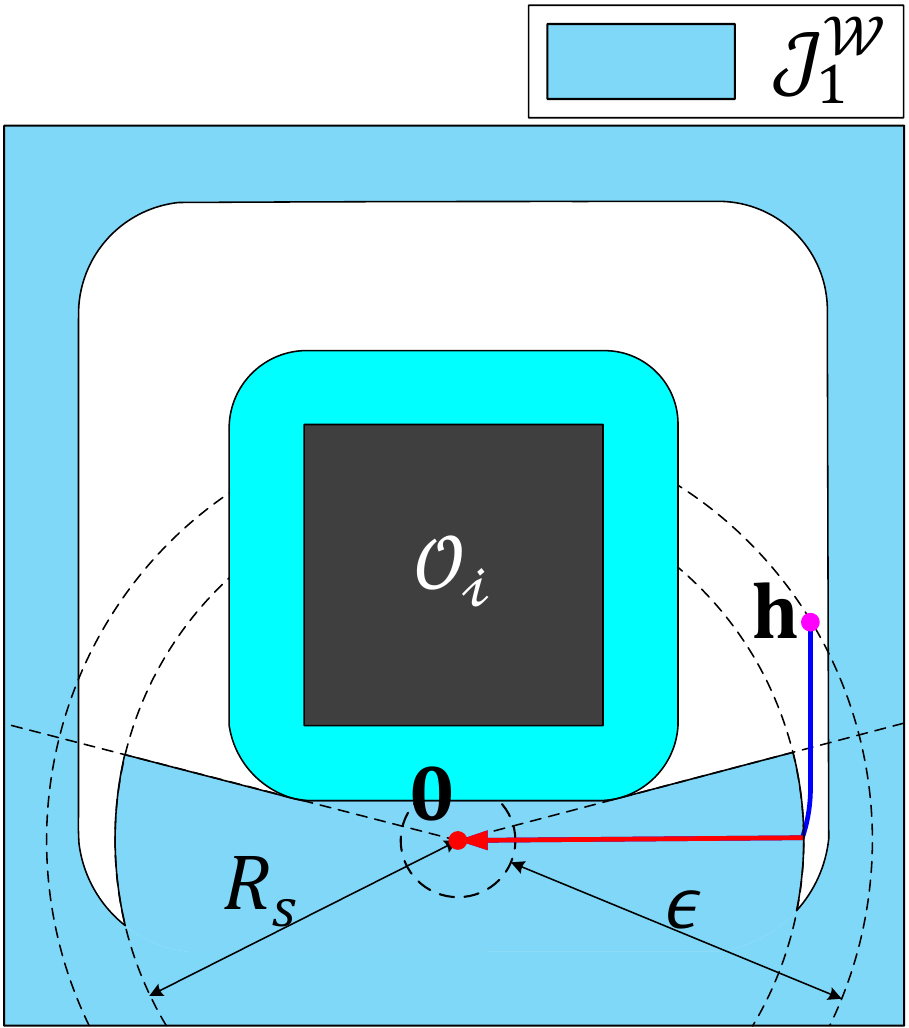}\label{with_S0}}
\caption{(a) Illustration of the region $\mathcal{J}_1^{\mathcal{W}} \setminus\mathcal{S}_{\mathbf{0}}$. (b) Illustration of the region $\mathcal{J}_1^{\mathcal{W}}$.}
\label{need_for_S0}
\end{figure}

The set $\mathcal{S}_{\mathbf{0}}$ contains locations from the set $\mathcal{N}_{\gamma}(\mathcal{D}_{r_a}(\mathcal{O}_{\mathcal{W}}))\cap\mathcal{B}_{R_s}(\mathbf{0})$ from which the robot can safely move straight towards the target location at the origin, and is given by
\begin{equation}
    \mathcal{S}_{\mathbf{0}}:=\{\mathbf{x}\in\mathcal{N}_{\gamma}(\mathcal{D}_{r_a}(\mathcal{O}_{\mathcal{W}}))\cap\mathcal{B}_{R_s}(\mathbf{0})|\eth_{\mathcal{W}}^{\mathbf{x}}\cap\mathcal{C}_{\mathcal{L}}(\mathbf{x}, 2r_a) = \emptyset\}.\label{line_of_sight_to_target}
\end{equation}
where $R_s > r_a + \gamma$.
Notice that if $\mathcal{S}_{\mathbf{0}}$ is excluded from the set $\mathcal{J}_{1}^{\mathcal{W}}$ in \eqref{jumpset_obstacleavoidance} and $\mathbf{0}$ is in the $\gamma-$neighbourhood of obstacle $\mathcal{O}_i$ for some $i\in\mathbb{I}\setminus\{0\}$, then the $\mathbf{x}$ trajectory, starting from some $\mathbf{h}\in\mathcal{N}_{\gamma}(\mathcal{D}_{r_a}(\mathcal{O}_{i}))$, may not enter in the set $\overline{\mathcal{ER}^{\mathbf{h}}}$ and as such in the set $\mathcal{J}_1^{\mathcal{W}}$ when $\epsilon$ is set relatively high. This may cause the robot to indefinitely operate in the \textit{obstacle-avoidance} mode in the $\gamma-$neighbourhood of obstacle $\mathcal{O}_i$. For example, consider Fig. \ref{without_S0}, in which the $\mathbf{x}$ trajectory starting from $\mathbf{h}$ does not enter the set $\mathcal{J}_{1}^{\mathcal{W}}\setminus\mathcal{S}_{\mathbf{0}}$. Therefore, we include the set $\mathcal{S}_{\mathbf{0}}$ in \eqref{jumpset_obstacleavoidance} to ensure that the $\mathbf{x}$ trajectories starting from any $\mathbf{h}\in\mathcal{N}_{\gamma}(\mathcal{D}_{r_a}(\mathcal{O}_i))$ always enter in the set $\mathcal{J}_{1}^{\mathcal{W}}$ when the control input corresponds to the \textit{obstacle-avoidance} mode, as shown in Fig. \ref{with_S0}.

Notice that in Lemma \ref{lemma:epsilon_exists}, the existence of $\bar{\epsilon}$ is guaranteed when the \textit{hit point} $\mathbf{h}$ belongs to the set $\mathcal{J}_0^{\mathcal{W}}$.
However, if $\mathbf{x}$ is initialized in the \textit{obstacle-avoidance} mode $(m = 1)$ in the $\gamma$-neighborhood of $\mathcal{D}_{r_a}(\mathcal{O}_i)$ for some $i\in\mathbb{I}$, and $\mathbf{h}$ initialized in $\mathcal{W}_{r_a}\setminus\mathcal{J}_0^{\mathcal{W}}$, then $\bar{\epsilon} > 0$ such that $\mathcal{ER}^{\mathbf{h}}\cap\mathcal{N}_{\gamma}(\mathcal{D}_{r_a}(\mathcal{O}_i)) \ne \emptyset$ may not exist.
In this case, the control input remains in obstacle-avoidance mode, and the trajectory $\mathbf{x}(t)$ evolves indefinitely within the $\gamma$-neighborhood of $\mathcal{D}_{r_a}(\mathcal{O}_i)$, without switching to the move-to-target mode.
To prevent this issue, we introduced the set $\mathcal{J}_s$ in \eqref{oa_jumpset}, where 
\begin{equation}\label{J_s_set_definition}\mathcal{J}_s := \{\xi\in\mathcal{K}|m = 1, s = [s_0, s_0 + \delta_s]\},\end{equation} with $s_0\in\mathbb{R}_{\geq0}$ denoting the initial value of the state $s$, \textit{i.e.,} $s(0,0)=s_0$, and $0 < \delta_s < \tau_s$ for some $\tau_s > 0$. 

\begin{remark}
The inclusion of the set $\mathcal{J}_{s}$ in the set $\mathcal{J}_1$ in  \eqref{oa_jumpset} enables the control to immediately switch to the \textit{move-to-target} mode if it is initialized in the \textit{obstacle-avoidance} mode (\textit{i.e.,} $\xi(0, 0) \in \mathcal{J}_1$). This ensures that the \textit{hit point} $\mathbf{h}$ always belongs to the set $\mathcal{J}_{0}^{\mathcal{W}}$ before the robot starts moving in the \textit{obstacle-avoidance} mode, thus guaranteeing the existence of the parameter $\bar{\epsilon}$, as stated in Lemma \ref{lemma:epsilon_exists}.\label{immediate_switching_the_mode}
\end{remark}

The flow set of the \textit{obstacle-avoidance} mode is defined as
\begin{equation}\label{oa_flowset}
\mathcal{F}_1 := \{\xi\in\mathcal{K}|\mathbf{x}\in\mathcal{F}_1^{\mathcal{W}}, m = 1, s\notin(s_0, s_0 + \delta_s)\},
\end{equation}
where the set $\mathcal{F}_{1}^{\mathcal{W}}$ is given as
{\small
\begin{equation} \mathcal{F}_{1}^{\mathcal{W}}:= \overline{\mathcal{N}_{\gamma}(\mathcal{D}_{r_a}(\mathcal{O}_{\mathcal{W}}))\setminus{(\mathcal{ER}^{\mathbf{h}}\cup\mathcal{N}_{\gamma}(\mathcal{D}_{r_a}(\mathcal{O}_0))\cup\mathcal{S}_0)}}.\label{flowset_obstacleavoidance}
\end{equation}}
Notice that the union of $\mathcal{J}_1^{\mathcal{W}}$ and $\mathcal{F}_{1}^{\mathcal{W}}$ exactly covers the obstacle-free workspace $\mathcal{W}_{r_a}$, as shown in Fig. \ref{flow_jump_set_2D}. 

\begin{remark}
Given that the workspace $\mathcal{W}$ is both convex and compact, there may exist some locations $\mathbf{x}\in\mathcal{N}_{\gamma}(\mathcal{D}_{r_a}(\mathcal{O}_0))$ from which the nearest point to the robot's center on the obstacle $\mathcal{O}_0 = (\mathcal{W}^{\circ})^c$ is not unique. This scenario prevents the implementation of the obstacle-avoidance term $\mathbf{v}(\mathbf{x}, \mathbf{h}, \mathbf{a})$ in the control law, at such locations. However, by excluding the set $\mathcal{N}_{\gamma}(\mathcal{D}_{r_a}(\mathcal{O}_0))$ from the set $\mathcal{F}_1^{\mathcal{W}}$, as defined in  \eqref{flowset_obstacleavoidance},  it is ensured that the \textit{obstacle-avoidance} mode is never activated within the set $\mathcal{N}_{\gamma}(\mathcal{D}_{r_a}(\mathcal{O}_0))$. 
\label{remark:not_executed_near_workspace_boundary}
\end{remark}

Finally, the flow set $\mathcal{F}$ and the jump set $\mathcal{J}$, used in \eqref{hybrid_control_law}, are defined as
\begin{equation}
    \mathcal{F}:= \bigcup_{m\in\mathbb{M}}\mathcal{F}_m, \quad\mathcal{J}:= \bigcup_{m\in\mathbb{M}}\mathcal{J}_m,\label{bothmodes_flowjumpset}
\end{equation}
where $\mathcal{J}_0$, $\mathcal{F}_0$, $\mathcal{J}_1$, and $\mathcal{F}_1$ are defined in \eqref{mtt_jumpset}, \eqref{mtt_flowset}, \eqref{oa_jumpset}, and \eqref{oa_flowset}, respectively.

\begin{remark}
    The set $\mathcal{J}_0^{\mathcal{W}}$ (22), which contributes to the jump set $\mathcal{J}_0$ of the move-to-target mode, is restricted to the $\gamma_s$-neighborhood of obstacles, where $\gamma_s \in (0, \gamma)$.
    Additionally, the inclusion of the set $\mathcal{ER}^{\mathbf{h}}$ (26) in the definition of $\mathcal{J}_1^{\mathcal{W}}$, which is used to construct the jump set $\mathcal{J}_1$ of the obstacle-avoidance mode, ensures that the control
    input $\mathbf{u}$ switches to the move-to-target mode only when the condition $d(\mathbf{0}, \mathbf{x}) \leq d(\mathbf{0}, \mathbf{h}) - \epsilon$ is satisfied, where $\epsilon \in (0, \bar{\epsilon}]$ and the existence of $\bar{\epsilon}$ is established in Lemma 2.
        
        Together, these design choices ensure that the Euclidean distance between any two consecutive switching locations is at least $\min\{\gamma - \gamma_s, \epsilon\}$.
        In practical implementations, where the state measurements may be affected by arbitrarily small noise bounded above by $\min\{\gamma - \gamma_s, \bar{\epsilon}\}$, this prevents chattering behavior where, due to measurement noise, the control law repeatedly switches between modes while the robot remains stationary.
\end{remark}

Next, we provide the update law $\mathbf{L}(\mathbf{x}, \mathbf{h}, \mathbf{a}, m, s)$ used in \eqref{control_L}.

\subsection{Update law \texorpdfstring{$\mathbf{L}(\mathbf{x}, \mathbf{h}, \mathbf{a}, m, s)$}{}}\label{section:update_law}
The update law $\mathbf{L}(\xi)$, used in \eqref{control_L}, updates the value of the \textit{hit point} $\mathbf{h}$, the unit vector $\mathbf{a}$, the mode indicator $m$ and the variable $s$ when the state $(\mathbf{x}, \mathbf{h}, \mathbf{a}, m, s)$ belongs to the jump set $\mathcal{J}$ defined in \eqref{bothmodes_flowjumpset} and is given by 
\begin{equation}
    \mathbf{L}(\xi) = \begin{cases}\mathbf{L}_0(\xi), &\xi\in\mathcal{J}_0,\\
    \mathbf{L}_1(\xi), &\xi\in\mathcal{J}_1.\end{cases}\label{update_law}
\end{equation}
When the state $\xi$ enters in the jump set $\mathcal{J}_0$ of the \textit{move-to-target} mode, defined in \eqref{mtt_jumpset}, the update law $\mathbf{L}_0(\mathbf{x}, \mathbf{h}, \mathbf{a}, 0, s)$ is given as
\begin{equation}
    \mathbf{L}_0(\mathbf{x}, \mathbf{h}, \mathbf{a}, 0, s) = \left\{\begin{bmatrix}\mathbf{x}\\\mathbf{a}^{\prime}\\ 1\\s+\tau_s\end{bmatrix}, \mathbf{a}^{\prime}\in\mathbf{A}(\mathbf{x})\right\} ,\label{updatelaw_part1}
\end{equation}
where $\tau_s > 0$.
Given $\mathbf{x}\in\mathcal{N}_{\gamma}(\mathcal{D}_{r_a}(\mathcal{O}_{\mathcal{W}}))$, the set-valued mapping $\mathbf{A}$ is defined as
\begin{equation}
    \mathbf{A}(\mathbf{x}) = \begin{cases}\mathbf{q}\in\mathcal{P}^{\perp}(\mathbf{x}), & \mathbf{x}^\times\mathbf{x}_{\pi} = \mathbf{0},\\
    \mathbf{q}\in\mathcal{P}^{\perp}(\mathbf{x})\cap\mathcal{P}(\mathbf{x}, \mathbf{x}_{\pi}), &\mathbf{x}^\times\mathbf{x}_{\pi} \ne \mathbf{0}, 
    \end{cases}\label{update_law_for_vector_a}
\end{equation}
where for any $\mathbf{p}\in\mathbb{R}^n$, the set $\mathcal{P}^{\perp}(\mathbf{p})$, which is defined as
\begin{equation}
    \mathcal{P}^{\perp}(\mathbf{p}) := \{\mathbf{q}\in\mathbb{S}^{n-1}|\mathbf{q}^\intercal\mathbf{p} = 0\},\label{orthogonal_projection_on_sphere}
\end{equation}
contains unit vectors that are perpendicular to the vector $\mathbf{p}.$

As per \eqref{updatelaw_part1}, when $\xi$ enters in the jump set $\mathcal{J}_0$ of the \textit{move-to-target} mode, the current value of $\mathbf{x}$ is assigned to the \textit{hit point} $\mathbf{h}$. Moreover, using \eqref{update_law_for_vector_a}, the unit vector $\mathbf{a}$ is updated to be perpendicular to $\mathbf{x}$ and lies in the plane spanned by $\mathbf{x}$ and $\mathbf{x}_{\pi}$, provided they are not collinear.  As a result, the plane $\mathcal{P}(\mathbf{h}, \mathbf{a})$ passes through the target location at the origin and intersects the interior of the obstacle being avoided. This ensures that $\mathcal{P}(\mathbf{h}, \mathbf{a})$ intersects both the \textit{avoidance} region $\mathcal{R}_a$ and the \textit{exit} region $\mathcal{R}_e$ associated with the obstacle to be avoided. This property guarantees that, while operating in the \textit{obstacle-avoidance} mode, the robot eventually enters in the \textit{exit} region where the obstacle no longer blocks its straight path to the origin, and switches back to the \textit{move-to-target} mode. This allows one to establish global convergence properties of the target location at the origin, as stated later in Theorem \ref{theorem:global_stability}.

When the state $\xi$ enters in the jump set $\mathcal{J}_1$ of the \textit{obstacle-avoidance} mode, defined in \eqref{oa_jumpset}, the update law $\mathbf{L}_1(\mathbf{x}, \mathbf{h}, \mathbf{a}, 1, s)$ is given by
\begin{equation}
    \mathbf{L}_1(\mathbf{x}, \mathbf{h}, \mathbf{a}, 1, s) = \left\{\begin{bmatrix}\mathbf{h}\\ \mathbf{a}\\0\\s+\tau_s\end{bmatrix}\right\}.\label{updatelaw_part2}
\end{equation}
According to \eqref{updatelaw_part2}, when the robot switches from the \textit{obstacle-avoidance} mode to the \textit{move-to-target} mode, the coordinates of the \textit{hit point} $\mathbf{h}$ and the unit vector $\mathbf{a}$ remain unchanged.

\begin{remark}
    Since the parameter $\gamma\in(0, \delta - r_s)$, according to Assumption \ref{3d_obstacle_separation}, the $(r_a + \gamma)-$dilated obstacles $\mathcal{D}_{r_a + \gamma}(\mathcal{O}_i)$, $\forall i\in\mathbb{I}$, are disjoint. 
    Furthermore, according to \eqref{flowset_obstacleavoidance}, the set $\mathcal{F}_1^{\mathcal{W}}$ is contained within the region $\mathcal{D}_{r_a+\gamma}(\mathcal{O}_{\mathcal{W}})$. 
    Hence, the proposed control law enables the robot to avoid one obstacle at a time. \label{remark:one_obstacle_at_a_time}
\end{remark}

\begin{remark}
    In our previous works \cite{sawant2023convex} and \cite{sawant_nonconvex}, the theoretical developments, such as the design of the flow and jump sets rely on the assumption of complete knowledge of obstacle geometries. A sensor-based implementation is then proposed to extend the applicability of the hybrid control frameworks proposed therein to unknown 2D environments. In contrast, the present work is built from the outset to operate in \textit{a priori} unknown n-dimensional environments. Specifically, the flow set $\mathcal{F}$ and jump set $\mathcal{J}$, defined in \eqref{bothmodes_flowjumpset}, are directly defined using measurements obtained via a range-bearing sensor mounted on the robot.  
\end{remark}

\textbf{Control design summary}: The proposed hybrid feedback control law can be summarized as follows:
\begin{itemize}
    \item \textbf{Parameters selection}: the target location is set at the origin with $\mathbf{0}\in\mathcal{W}_{r_a}^{\circ}$, and $\xi(0, 0)\in\mathcal{K}$. The gain parameters $\kappa_s$ and $\kappa_r$ are set to positive values, and a sufficiently small positive value is chosen for $\bar{\epsilon}$, used in \eqref{partition_rm}. The scalar parameter $\gamma$, used in the construction of the flow set $\mathcal{F}$ and the jump set $\mathcal{J}$, is selected such that $\gamma\in(0, \delta - r_s)$. The parameters $\gamma_a$ and $\gamma_s$ are set to satisfy $0 < \gamma_a < \gamma_s < \gamma.$ The parameter $\tau_s$ is selected such that $\tau_s > 0$ and $\delta_s$, used in \eqref{J_s_set_definition}, is set to ensure that $0 < \delta_s < \tau_s$.
    
    \item \textbf{\textit{Move-to-target} mode $m = 0$}: this mode is activated when $\xi \in \mathcal{F}_0$. As per \eqref{control_u}, the control input is given by $\mathbf{u}(\xi) = -\kappa_s\mathbf{x}$, causing $\mathbf{x}$ to evolve along the line segment $\mathcal{L}_s(\mathbf{0}, \mathbf{x})$ towards the origin. If, at some instance of time, $\xi$ enters in the jump set $\mathcal{J}_0$ of the \textit{move-to-target} mode, the state variables $(\mathbf{h}, \mathbf{a}, m, s)$ are updated using \eqref{updatelaw_part1}, and the control input switches to the \textit{obstacle-avoidance} mode.
    
    \item\textbf{\textit{Obstacle-avoidance} mode $m = 1$}: this mode is activated when $\xi \in \mathcal{F}_1$. As per \eqref{control_u}, the control input is given by $\mathbf{u}(\xi) = \kappa_r\mathbf{v}(\mathbf{x},\mathbf{h},\mathbf{a})$, causing $\mathbf{x}$ to evolve in the $\gamma-$neighborhood of the nearest obstacle along the plane $\mathcal{P}(\mathbf{h}, \mathbf{a})$ 
    until the state $\xi$ enters in the jump set $\mathcal{J}_1$ of the \textit{obstacle-avoidance} mode. When $\xi\in\mathcal{J}_1$, the state variables $(m, s)$ are updated using \eqref{updatelaw_part2}, and the control input switches to the \textit{move-to-target} mode.
\end{itemize}

This concludes the design of the proposed hybrid feedback controller \eqref{hybrid_control_law}. Next, we analyze the safety, stability and convergence properties of the proposed hybrid feedback controller.

\section{Stability Analysis}\label{sec:stability}
The hybrid closed-loop system resulting from the hybrid feedback control law \eqref{hybrid_control_law} is given by
\begin{equation}
    \underbrace{\begin{matrix}\mathbf{\dot{x}}\\\mathbf{\dot{h}}\\\mathbf{\dot{a}}\\\dot{m}\\\dot{s}
    \end{matrix}\begin{matrix*}[l]\;=\mathbf{u}(\xi)\\\;=\mathbf{0}\\\;=\mathbf{0}\\\;=0\\\;=1\end{matrix*}}_{\dot{\xi} = \mathbf{F}(\xi), {\xi\in\mathcal{F}}}\quad\quad\underbrace{\begin{matrix}\mathbf{x}^+\\\begin{bmatrix}\mathbf{h}^+\\\mathbf{a}^+\\m^+\\s^+\end{bmatrix}\end{matrix} \begin{matrix*}[l]=\mathbf{x}\vspace{0.56cm}\\\vspace{0.7cm}\in\mathbf{L}(\xi)\end{matrix*}}_{\xi^+ \in\mathbf{J}(\xi), {\xi\in\mathcal{J}}},\label{hybrid_closed_loop_system}
\end{equation}
where $\mathbf{u}(\xi)$ is defined in \eqref{control_u}, and the update law $\mathbf{L}(\xi)$ is provided in \eqref{update_law}. Definitions of the flow set $\mathcal{F}$ and the jump set $\mathcal{J}$ are provided in \eqref{bothmodes_flowjumpset}. Next, we analyze the hybrid closed-loop system \eqref{hybrid_closed_loop_system} in terms of forward invariance of the obstacle-free state space $\mathcal{K}$, along with the stability properties of the target set 
\begin{equation}
    \mathcal{A}:= \{\xi\in\mathcal{K}|\mathbf{x} = \mathbf{0}\}\label{target_set}.
\end{equation}

The next lemma shows that the hybrid closed-loop system \eqref{hybrid_closed_loop_system} satisfies the hybrid basic conditions \cite[Assumption 6.5]{goebel2012hybrid},  which guarantees the well-posedness
of the hybrid closed-loop system.
\begin{lemma}\label{basic-conditions}
The hybrid closed-loop system \eqref{hybrid_closed_loop_system} with data $(\mathcal{F}, \mathbf{F}, \mathcal{J}, \mathbf{J})$ satisfies the following hybrid basic conditions:
\begin{enumerate}
\item the flow set $\mathcal{F}$ and the jump set $\mathcal{J}$, defined in \eqref{bothmodes_flowjumpset}, are closed subsets of $\mathbb{R}^n\times\mathbb{R}^n\times\mathbb{R}^n\times\mathbb{R}\times\mathbb{R}$.
\item the flow map $\mathbf{F}$, defined in \eqref{hybrid_closed_loop_system}, is outer semicontinuous and locally bounded relative to $\mathcal{F}$, $\mathcal{F}\subset\text{dom }\mathbf{F},$ and $\mathbf{F}(\xi)$ is convex for every $\xi\in\mathcal{F},$
\item the jump map $\mathbf{J}$, defined in \eqref{hybrid_closed_loop_system}, is outer semicontinuous and locally bounded relative to $\mathcal{J}$, $\mathcal{J}\subset\text{dom }\mathbf{J}$. 
\end{enumerate}\label{lemma:hybrid_basic_conditions_for_3D}
\end{lemma}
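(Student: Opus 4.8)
The plan is to verify each of the three hybrid basic conditions \cite[Assumption 6.5]{goebel2012hybrid} directly from the constructions given in the excerpt, treating the closedness of $\mathcal{F}$ and $\mathcal{J}$ as the substantive part and the properties of $\mathbf{F}$ and $\mathbf{J}$ as largely routine consequences of their explicit formulas.

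First I would establish condition (1). Since $\mathcal{F} = \bigcup_{m\in\mathbb{M}}\mathcal{F}_m$ and $\mathcal{J}=\bigcup_{m\in\mathbb{M}}\mathcal{J}_m$ are \emph{finite} unions, it suffices to show each $\mathcal{F}_m$ and $\mathcal{J}_m$ is closed. The key tool is that the distance function $\mathbf{x}\mapsto d(\mathbf{x},\mathcal{O}_{\mathcal{W}})$ is continuous (indeed $1$-Lipschitz) and that $\mathbf{x}\mapsto\norm{\mathbf{x}}$ and, on the relevant region, $\mathbf{x}\mapsto\Pi(\mathbf{x},\mathcal{O}_{\mathcal{W}})$ are continuous; moreover the condition $\mathcal{L}_s(\mathbf{x},\mathbf{0})\cap\mathcal{D}_{r_a}^{\circ}(\mathcal{O}_i)\ne\emptyset$ defining the landing region is an open condition in $\mathbf{x}$, so $\mathcal{R}_l$ is (relatively) open and $\mathcal{R}_e$ is correspondingly structured. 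I would note that $\mathcal{F}_0^{\mathcal{W}}$, $\mathcal{J}_0^{\mathcal{W}}$, $\mathcal{F}_1^{\mathcal{W}}$, $\mathcal{J}_1^{\mathcal{W}}$ are each obtained from such continuous functions via sublevel/superlevel sets, finite unions/intersections, and, crucially, an outer closure operation ($\overline{(\cdot)}$) wherever a set that is merely locally closed or half-open appears — e.g. in \eqref{jumpset_movetotarget}, \eqref{flowset_obstacleavoidance}. The products with the closed sets $\mathbb{S}^2$, $\mathbb{M}=\{0,1\}$ (a finite, hence closed, set), $\mathbb{R}_{\ge0}$, and the closed constraints $m=0$, $m=1$, $s=s_0$ preserve closedness; $\mathcal{K}$ itself is closed since $\mathcal{W}_{r_a}$ is closed by \eqref{y_eroded_free_workspace}. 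The one point requiring care is $\mathcal{F}_1 := \{\xi\in\mathcal{K}\mid \mathbf{x}\in\mathcal{F}_1^{\mathcal{W}},\,m=1,\,s\ne s_0\}$: the constraint $s\ne s_0$ is \emph{open}, so closedness of $\mathcal{F}_1$ is not immediate and must be argued using the fact that $\mathcal{J}_s=\{m=1,\,s=s_0\}\subset\mathcal{J}$ together with the structure of the flow (the variable $s$ only increases, by unit jumps, and equals $s_0$ only at the initial time), so that on the relevant reachable set $\mathcal{F}_1$ coincides with a closed set; alternatively one argues directly that $\mathcal{F}_1 \cup \mathcal{J}$ behaves as required. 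This is the step I expect to be the main obstacle.

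Next I would dispatch condition (2). The flow map $\mathbf{F}(\xi)=(\mathbf{u}(\xi),\mathbf{0},\mathbf{0},0,1)$ is in fact single-valued, hence automatically outer semicontinuous and convex-valued provided it is continuous on $\mathcal{F}$; continuity follows because $-\kappa_s(1-m)\mathbf{x}$ is continuous, and $\kappa_r m\,\mathbf{v}(\mathbf{x},\mathbf{a})$ is continuous on $\mathcal{F}_1$ since there $\mathbf{x}$ lies in a region where $\Pi(\mathbf{x},\mathcal{O}_{\mathcal{W}})$ is single-valued and continuous (by convexity of the obstacles and the choice $\gamma\in(0,\bar r_s-r_s)$, and by Remark \ref{remark:not_executed_near_workspace_boundary} the term is never evaluated near $\mathcal{O}_0$), while $\eta(\mathbf{x})$ in \eqref{scalar_function_eta_definition} is continuous (piecewise affine with matching endpoints) and $\mathbf{R}(\mathbf{a})$, $\mathbf{P}(\mathbf{a})$ are continuous in $\mathbf{a}\in\mathbb{S}^2$. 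Local boundedness is immediate on the bounded-in-$\mathbf{x}$ flow set, or more simply follows from continuity; and $\mathcal{F}\subset\dom\mathbf{F}$ holds because $\mathbf{F}$ is defined by a formula valid on all of $\mathcal{K}\supseteq\mathcal{F}$.

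Finally, condition (3): the jump map is $\mathbf{J}(\xi)=(\mathbf{x},\mathbf{L}(\xi))$ with $\mathbf{L}$ given by \eqref{update_law}. On $\mathcal{J}_0$ the set-valued part is $\{(\mathbf{x},\mathbf{q},1,s+1)\mid \mathbf{q}\in\mathcal{P}^{\perp}(\mathbf{x})\}$, where $\mathcal{P}^{\perp}(\mathbf{x})=\{\mathbf{p}\in\mathbb{S}^2\mid \mathbf{p}^\intercal\mathbf{x}=0\}$ is a great circle depending outer-semicontinuously (indeed continuously, in the Hausdorff sense) on $\mathbf{x}$ as long as $\mathbf{x}\ne\mathbf{0}$ — which holds on $\mathcal{J}_0$ since $\mathbf{0}\in\mathcal{W}_{r_a}^{\circ}$ is away from $\mathcal{J}_0^{\mathcal{W}}$; hence $\mathbf{J}$ restricted to $\mathcal{J}_0$ is OSC and locally bounded (its image lies in the compact-in-each-component set $\mathcal{K}$ with $\mathbf{a}^+\in\mathbb{S}^2$ bounded). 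On $\mathcal{J}_1$, $\mathbf{L}_1$ is single-valued and continuous. Since $\mathcal{J}=\mathcal{J}_0\cup\mathcal{J}_1$ is a finite union of closed sets on each of which $\mathbf{J}$ is OSC and locally bounded, and $\mathcal{J}\subset\dom\mathbf{J}$ by construction, $\mathbf{J}$ is OSC and locally bounded relative to $\mathcal{J}$; one should check consistency on any overlap $\mathcal{J}_0\cap\mathcal{J}_1$, but by \eqref{flow_jump_set_move_to_target_mode}–\eqref{flow_jump_set_obstacle_avoidance_mode} these are distinguished by the value of $m$, so they are disjoint and no conflict arises. Assembling the three verified conditions completes the proof, so it remains only to state ``See Appendix'' or to inline the above once the $\mathcal{F}_1$ closedness subtlety is resolved.
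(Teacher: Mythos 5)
Your treatment of conditions 2 and 3 follows essentially the same route as the paper's appendix: the flow map is single-valued, so convexity of values and outer semicontinuity reduce to continuity, which you obtain exactly as the paper does, from uniqueness and continuity of $\Pi(\cdot,\mathcal{O}_{\mathcal{W}})$ on $\mathcal{F}_1^{\mathcal{W}}$ (convexity of the obstacles, disjointness of the $\gamma$-neighborhoods for $\gamma\in(0,\bar r_s-r_s)$, exclusion of $\mathcal{N}_{\gamma}(\mathcal{D}_{r_a}(\mathcal{O}_0))$) together with continuity of $\eta$, $\mathbf{R}$ and $\mathbf{P}$; for the jump map you use single-valuedness on $\mathcal{J}_1$ and outer semicontinuity plus local boundedness of the set-valued map $\mathcal{P}^{\perp}$, with boundedness coming from $\operatorname{rge}\mathcal{P}^{\perp}\subset\mathbb{S}^2$. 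The only difference there is cosmetic: the paper proves outer semicontinuity of $\mathcal{P}^{\perp}$ by a direct sequence argument valid on all of $\mathbb{R}^3$ (including $\mathbf{x}=\mathbf{0}$), whereas you invoke Hausdorff continuity away from the origin and then argue $\mathbf{0}\notin\mathcal{J}_0^{\mathcal{W}}$; both are correct.

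The genuine gap is in condition 1, and you flagged it yourself without closing it: by \eqref{flow_jump_set_obstacle_avoidance_mode}, $\mathcal{F}_1$ carries the constraint $s\ne s_0$, which is an open condition, so $\mathcal{F}_1$ as literally defined is not closed (take $\xi_k\in\mathcal{F}_1$ with $s_k\to s_0$; the limit has $m=1$, hence also lies outside $\mathcal{F}_0$, so $\mathcal{F}=\mathcal{F}_0\cup\mathcal{F}_1$ fails to contain it). Neither of your proposed escapes proves the lemma as stated: restricting attention to the reachable set changes the claim, since closedness is a property of the data $(\mathcal{F},\mathbf{F},\mathcal{J},\mathbf{J})$ independent of solutions, and ``$\mathcal{F}_1\cup\mathcal{J}$ behaves as required'' is not item 1. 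To repair it one must modify the construction itself, e.g.\ drop the $s\ne s_0$ restriction or replace $\mathcal{F}_1$ by its closure and recover the forced initial jump of Remark \ref{immediate_switching_the_mode} by other means. Be aware that the paper's own proof does not resolve this either: it simply asserts that $\mathcal{F}$ and $\mathcal{J}$ are closed ``by construction,'' so your bookkeeping for the $\mathbf{x}$-components (relative openness of $\mathcal{R}_l$ making $\mathcal{R}_e$ closed, the explicit closures in \eqref{jumpset_movetotarget} and \eqref{flowset_obstacleavoidance}) is actually more careful than the published argument, but your proposal, as written, leaves item 1 unproven.
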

\begin{proof}
    See Appendix \ref{proof_for_basic_conditions}.
\end{proof}

For safe autonomous navigation, the state $\mathbf{x}$ must always evolve within the obstacle-free workspace $\mathcal{W}_{r_a}$, defined in \eqref{r_a-eroded_obstacle-free_workspace}. This is equivalent to having the set $\mathcal{K}$ forward invariant for the hybrid closed-loop system \eqref{hybrid_closed_loop_system}. This is stated in the next Lemma.
\begin{lemma}
    Under Assumption \ref{3d_obstacle_separation}, for the hybrid closed-loop system \eqref{hybrid_closed_loop_system}, the obstacle-free set $\mathcal{K}:= \mathcal{W}_{r_a}\times\mathcal{W}_{r_a}\times\mathbb{S}^{n-1}\times\mathbb{M}\times\mathbb{R}_{\geq0}$ is forward invariant. \label{lemma:set_invariance}
\end{lemma}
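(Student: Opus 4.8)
The plan is to establish forward invariance of $\mathcal{K}$ by showing separately that (i) the state $\xi$ can never leave $\mathcal{K}$ through a jump, and (ii) the state $\xi$ can never leave $\mathcal{K}$ during flow, and then invoking a standard forward invariance result for well-posed hybrid systems (e.g. \cite[Proposition 6.10]{goebel2012hybrid} or the Nagumo-type condition in \cite[Theorem 4.7]{blanchini2008set} adapted to hybrid systems), using Lemma \ref{lemma:hybrid_basic_conditions_for_3D} to guarantee well-posedness. The components $\mathbf{h}\in\mathcal{W}_{r_a}$, $\mathbf{a}\in\mathbb{S}^2$, $m\in\mathbb{M}$, $s\in\mathbb{R}_{\geq 0}$ are trivially preserved: $\mathbf{h}$ and $\mathbf{a}$ are constant during flow and under jumps are either unchanged (by $\mathbf{L}_1$ in \eqref{updatelaw_part2}) or reset to $(\mathbf{x},\mathbf{q})$ with $\mathbf{x}\in\mathcal{W}_{r_a}$ and $\mathbf{q}\in\mathbb{S}^2$ (by $\mathbf{L}_0$ in \eqref{updatelaw_part1}); $m$ stays in $\{0,1\}$; and $s$ is nondecreasing. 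So the entire burden is on the $\mathbf{x}$-component, i.e.\ showing $\mathbf{x}\in\mathcal{W}_{r_a}$ is maintained, equivalently $d(\mathbf{x},\mathcal{O}_{\mathcal{W}}) \geq r_a$ for all time.

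For the jump part, note from \eqref{hybrid_closed_loop_system} that $\mathbf{x}^+ = \mathbf{x}$, so jumps do not move $\mathbf{x}$ at all; hence $\mathbf{x}^+\in\mathcal{W}_{r_a}$ whenever $\mathbf{x}\in\mathcal{W}_{r_a}$, and it suffices to note that $\mathbf{J}(\mathcal{J}\cap\mathcal{K})\subseteq\mathcal{K}$, which follows from the remarks on the update law components above. For the flow part, I would argue mode by mode. In the \emph{move-to-target} mode ($m=0$), flowing is only permitted on $\mathcal{F}_0^{\mathcal{W}}$, and $\mathbf{u} = -\kappa_s\mathbf{x}$ drives $\mathbf{x}$ along the segment toward the origin; since $\mathbf{0}\in\mathcal{W}_{r_a}^\circ$ and the flow set $\mathcal{F}_0^{\mathcal{W}}$ together with the jump set $\mathcal{J}_0^{\mathcal{W}}$ covers $\mathcal{W}_{r_a}$, the only way $\mathbf{x}$ could approach $\partial\mathcal{W}_{r_a} = \{d(\mathbf{x},\mathcal{O}_{\mathcal{W}}) = r_a\}$ while still flowing is through the landing region $\mathcal{R}_l$; but by the hysteresis construction \eqref{jumpset_movetotarget}, once $\mathbf{x}$ reaches $\overline{\mathcal{N}_{\gamma_s}(\mathcal{D}_{r_a}(\mathcal{O}_{\mathcal{W}}))\cap\mathcal{R}_l}$ the state is in $\mathcal{J}_0$ and must jump (cannot flow out of $\mathcal{F}_0$ into the dilated obstacle). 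Here I would make precise that the boundary of $\mathcal{F}_0^{\mathcal{W}}$ facing the obstacle lies at distance $r_a + \gamma_s > r_a$, so flow terminates strictly before collision. In the \emph{obstacle-avoidance} mode ($m=1$), flowing is permitted on $\mathcal{F}_1^{\mathcal{W}}\subseteq\overline{\mathcal{N}_\gamma(\mathcal{D}_{r_a}(\mathcal{O}_{\mathcal{W}}))}\setminus\mathcal{N}_\gamma(\mathcal{D}_{r_a}(\mathcal{O}_0))$, and by Remark \ref{remark:one_obstacle_at_a_time} the robot interacts with a single obstacle $\mathcal{O}_i$, $i\neq 0$, having a unique closest point $\Pi(\mathbf{x},\mathcal{O}_{\mathcal{W}})$. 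The key computation is that the scalar function $\eta$ in \eqref{scalar_function_eta_definition} is engineered so that $\mathbf{v}(\mathbf{x},\mathbf{a})$ has no inward component once $d(\mathbf{x},\mathcal{O}_{\mathcal{W}}) - r_a$ drops to $\gamma_a$: I would compute $\frac{d}{dt}\, d(\mathbf{x},\mathcal{O}_{\mathcal{W}})$ along the flow using the fact that the gradient of $\mathbf{x}\mapsto d(\mathbf{x},\mathcal{O}_{\mathcal{W}})$ (where differentiable, i.e.\ at the unique-projection points) is $\mathbf{x}_\pi/\|\mathbf{x}_\pi\|$, and show this derivative is nonnegative on the set $\{d(\mathbf{x},\mathcal{O}_{\mathcal{W}}) - r_a = r_s\}$ — in fact, when $\eta = 1$ the vector field is $\mathbf{P}(\mathbf{a})\mathbf{x}_\pi$, which is orthogonal to $\mathbf{a}$, and one checks $\mathbf{x}_\pi^\top\mathbf{P}(\mathbf{a})\mathbf{x}_\pi = \|\mathbf{x}_\pi\|^2 - (\mathbf{a}^\top\mathbf{x}_\pi)^2/\|\mathbf{a}\|^2 \geq 0$, so $d$ is nondecreasing near the obstacle, hence $\mathbf{x}$ cannot penetrate $\mathcal{D}_{r_a}(\mathcal{O}_i)$.

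I expect the main obstacle to be the \emph{obstacle-avoidance} mode analysis, specifically handling the non-smoothness of $d(\cdot,\mathcal{O}_{\mathcal{W}})$ and the skew part $\mathbf{R}(\mathbf{a})\mathbf{P}(\mathbf{a})\mathbf{x}_\pi$ of $\mathbf{v}$ when $|\eta| < 1$. Since the obstacles may have non-smooth boundaries, $d(\cdot,\mathcal{O}_{\mathcal{W}})$ is only locally Lipschitz, so I would work with the Clarke generalized gradient or, more cleanly, restrict attention to the open set where the projection is unique (guaranteed by $\gamma < \bar{r}_s - r_s$ in the remarks), on which $d$ is continuously differentiable, and handle the boundary $d = r_a$ as a limiting barrier via a Nagumo tangent-cone argument: one shows the flow map $\mathbf{F}(\xi)$ at any $\xi$ with $\mathbf{x}\in\partial\mathcal{W}_{r_a}$ (should such a point be reachable in the flow set) points into the tangent cone of $\mathcal{W}_{r_a}$. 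For the rotational term, I would use that $\mathbf{x}_\pi^\top\mathbf{a}^\times\mathbf{x}_\pi = 0$ always (skew-symmetry), so the component of $\mathbf{v}$ along $\mathbf{x}_\pi$ equals $\eta\|\mathbf{P}(\mathbf{a})\mathbf{x}_\pi\|^2/\ldots$ plus a term from $(1-|\eta|)\mathbf{a}\mathbf{a}^\top\mathbf{P}(\mathbf{a})\mathbf{x}_\pi = \mathbf{0}$, reducing the inward-rate sign to the sign of $\eta$, which is $+1$ at distance $\leq\gamma_a$ — exactly the property $\eta$ was designed for. Finally I would combine: no jump leaves $\mathcal{K}$, no flow leaves $\mathcal{K}$, $\mathcal{K}$ is closed, and the hybrid basic conditions hold, so $\mathcal{K}$ is forward invariant, and moreover every maximal solution is complete since solutions cannot escape in finite time and the union of flow and jump sets covers $\mathcal{K}$.
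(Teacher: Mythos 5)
Your overall route is the same as the paper's: jumps are handled by noting $\mathbf{x}^+=\mathbf{x}$ and that the update laws \eqref{updatelaw_part1}--\eqref{updatelaw_part2} map into $\mathcal{K}$, the flow is handled mode by mode via a Nagumo/tangent-cone (viability) check combined with the hybrid basic conditions and \cite[Proposition 6.10]{goebel2012hybrid}, and your \textit{obstacle-avoidance} computation (radial component of $\mathbf{v}$ equals $\eta\,\mathbf{x}_{\pi}^\top\mathbf{P}(\mathbf{a})\mathbf{x}_{\pi}\geq 0$ near the obstacle, the $\mathbf{a}\mathbf{a}^\top$ and $\mathbf{a}^{\times}$ parts contributing nothing radially) is exactly the ingredient the paper uses for $m=1$, where only the boundary case $\eta=1$ is needed.

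There is, however, a genuine gap in your $m=0$ argument. You claim that ``the boundary of $\mathcal{F}_0^{\mathcal{W}}$ facing the obstacle lies at distance $r_a+\gamma_s>r_a$, so flow terminates strictly before collision,'' and that the only way to approach $\partial\mathcal{W}_{r_a}$ while flowing in the \textit{move-to-target} mode is through the landing region. This is false: by \eqref{flowset_movetotarget}, $\mathcal{F}_0^{\mathcal{W}}$ contains the entire exit region $\mathcal{R}_e$, which extends all the way down to $\partial\mathcal{D}_{r_a}(\mathcal{O}_{\mathcal{W}})$, i.e.\ to distance exactly $r_a$ from the obstacles. Consequently the nontrivial part of the viability check for $m=0$ occurs precisely at points of $\partial\mathcal{F}_0^{\mathcal{W}}\setminus\mathcal{J}_0^{\mathcal{W}}\subset\partial\mathcal{D}_{r_a}(\mathcal{O}_{\mathcal{W}})\cap\mathcal{R}_e$, which your argument skips on the strength of the wrong premise above. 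The missing step (which is the paper's key observation for this mode) is that at such a point, membership in $\mathcal{R}_e$ means $\mathcal{L}_s(\mathbf{x},\mathbf{0})\cap\mathcal{D}_{r_a}^{\circ}(\mathcal{O}_i)=\emptyset$; since $\mathcal{D}_{r_a}(\mathcal{O}_i)$ contains the ball $\mathcal{B}_{r_a}(\Pi(\mathbf{x},\mathcal{O}_i))$ touching its boundary at $\mathbf{x}$, any direction with negative inner product with $\mathbf{x}_{\pi}$ would immediately enter $\mathcal{D}_{r_a}^{\circ}(\mathcal{O}_i)$, so one concludes $\mathbf{u}(\xi)=-\kappa_s\mathbf{x}\in\mathcal{P}_{\geq}(\mathbf{0},\mathbf{x}_{\pi})\subseteq\mathbf{T}_{\mathcal{W}_{r_a}}(\mathbf{x})$, and the viability condition holds there. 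Two smaller points: your barrier set ``$\{d(\mathbf{x},\mathcal{O}_{\mathcal{W}})-r_a=r_s\}$'' should be $\{d(\mathbf{x},\mathcal{O}_{\mathcal{W}})-r_a=0\}$ (the boundary of $\mathcal{W}_{r_a}$); and the final claim that solutions have no finite escape time is asserted without justification, whereas the paper supports it by the boundedness of $\mathcal{F}_1^{\mathcal{W}}$ and the fact that $\frac{d}{dt}(\mathbf{x}^\top\mathbf{x})\leq 0$ under $-\kappa_s\mathbf{x}$ in the \textit{move-to-target} mode; completeness is needed for forward invariance in the sense used by the paper, so this step should be spelled out.
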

\begin{proof}
    See Appendix \ref{proof:lemma_invariance}.
\end{proof}

When $\xi$ is steered in the jump set $\mathcal{J}_0$ of the \textit{move-to-target} mode with $\mathbf{x}$ lying in the $\gamma$-neighborhood of $r_a$-dilated obstacle $\mathcal{O}_i$ for some $i\in\mathbb{I}$, \textit{i.e.} $\mathbf{x}\in\mathcal{N}_{\gamma}(\mathcal{D}_{r_a}(\mathcal{O}_i))$, the state vector $\xi$ is updated as per \eqref{updatelaw_part1} and \eqref{hybrid_closed_loop_system}, and the control input switches to the \textit{obstacle-avoidance} mode. 
In this mode, $\mathbf{u}$ guides $\mathbf{x}$ within $\mathcal{N}_{\gamma}(\mathcal{D}_{r_a}(\mathcal{O}_i))$ along the plane $\mathcal{P}(\mathbf{h}, \mathbf{a})$ until it reaches the jump set $\mathcal{J}_1$ of the \textit{obstacle-avoidance} mode, as stated in the next lemma.
\begin{lemma}
    Under Assumption \ref{3d_obstacle_separation}, consider a solution $\xi$ to the hybrid closed-loop system \eqref{hybrid_closed_loop_system}. If $\xi(t_1, j_1)\in\mathcal{J}_0$ at some $(t_1, j_1)\in\text{ dom }\xi$ such that $\mathbf{x}(t_1, j_1)\in\mathcal{J}_0^{\mathcal{W}}\cap\mathcal{N}_{\gamma}(\mathcal{D}_{r_a}(\mathcal{O}_i))$ for some $i\in\mathbb{I}$, then for all $(t, j)\in(I_{j_1+1}\times j_1 +1)$, the following statements hold true:
    \begin{enumerate}
        \item $\mathbf{x}(t, j) \in \mathcal{N}_{\gamma}(\mathcal{D}_{r_a}(\mathcal{O}_i))\cap\mathcal{P}(\mathbf{h}, \mathbf{a})$;
        \item there exists $t_2 > t_1$ such that $t_2 < \infty$ and $\xi(t_2, j_1 + 1)\in\mathcal{J}_1,$
\end{enumerate}
where $\mathbf{h} = \mathbf{h}(t_1, j_1 +1) = \mathbf{h}(t, j)$ and $\mathbf{a} = \mathbf{a}(t_1, j_1 +1) = \mathbf{a}(t, j)$.
\label{lemma:always_enter_move_to_target_mode}
\end{lemma}
\begin{proof}
    See Appendix \ref{proof_of_lemma_always_enter}.
\end{proof}

Next, we provide one of our main results which states that, for all initial conditions in the obstacle-free set $\mathcal{K}$, the proposed hybrid controller not only ensures safe navigation but also guarantees global asymptotic stability of the target location at the origin.

\begin{theorem}
Under Assumption \ref{3d_obstacle_separation}, for the hybrid closed-loop system \eqref{hybrid_closed_loop_system}, the following holds true:
\begin{enumerate}
        \item[i)] the obstacle-free set $\mathcal{K}$ is forward invariant;
        \item[ii)] the target set $\mathcal{A}$ is globally asymptotically stable over the set $\mathcal{K}$;
        \item[iii)] the number of jumps is finite.
\end{enumerate}
\label{theorem:global_stability}
\end{theorem}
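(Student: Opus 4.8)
The plan is to prove the three claims in sequence, leveraging the machinery already built up. Item (i), forward invariance of $\mathcal{K}$, is exactly the statement of Lemma \ref{lemma:set_invariance}, so I would simply invoke it. The substance lies in (ii) and (iii), which I would establish together, since finiteness of jumps is the key mechanism behind convergence.

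First I would analyze the behavior of the closed-loop system in each mode. In the \textit{move-to-target} mode ($m=0$), the control is $\mathbf{u}(\xi)=-\kappa_s\mathbf{x}$, so $\mathbf{x}$ flows radially toward the origin, strictly decreasing $\norm{\mathbf{x}}$. Either the solution reaches the origin (and stays, since $\mathbf{0}\in\mathcal{W}_{r_a}^\circ$ lies in the interior of the flow set $\mathcal{F}_0^{\mathcal{W}}$, so no jump is triggered near it), or it enters the jump set $\mathcal{J}_0$, triggering a switch to the \textit{obstacle-avoidance} mode with $\mathbf{h}$ recorded at the current $\mathbf{x}\in\mathcal{J}_0^{\mathcal{W}}$ and $\mathbf{a}\perp\mathbf{x}$. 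In the \textit{obstacle-avoidance} mode ($m=1$), the control $\mathbf{u}(\xi)=\kappa_r\mathbf{v}(\mathbf{x},\mathbf{a})$ keeps $\mathbf{x}$ on the hyperplane $\mathcal{P}(\mathbf{h},\mathbf{a})$ (because $\mathbf{v}$ always lies in the range of $\mathbf{P}(\mathbf{a})$, so $\mathbf{a}^\top\dot{\mathbf{x}}=0$, and $\mathbf{a}^\top\mathbf{h}=0$ initially), and the function $\eta$ confines $\mathbf{x}$ to the $\gamma$-neighborhood of the dilated obstacle. The crucial claim here — and I expect this to be the \textbf{main obstacle} — is that the obstacle-avoidance flow always reaches the jump set $\mathcal{J}_1$, specifically the set $\mathcal{ER}^{\mathbf{h}}$, in finite time. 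The argument is: on the 2D slice $\mathcal{P}(\mathbf{h},\mathbf{a})\cap\mathcal{N}_\gamma(\mathcal{D}_{r_a}(\mathcal{O}_i))$, the obstacle's cross-section is convex and compact, the trajectory circulates around it along level-ish sets, and because the target $\mathbf{0}$ lies on this hyperplane, the robot must pass through the \textit{exit} region; moreover one must show $\norm{\mathbf{x}}$ decreases enough along the way so that the $\norm{\mathbf{h}}-\norm{\mathbf{x}}\geq\epsilon$ condition of \eqref{partition_rm} is met — this is where Lemma \ref{lemma:epsilon_exists} is used, guaranteeing $\mathcal{ER}^{\mathbf{h}}$ intersects the relevant neighborhood. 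This reduces the 3D problem to essentially the 2D analysis of the authors' prior work \cite{sawant2023convex}, applied on each hyperplane slice.

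Next I would prove finiteness of jumps (iii). Every jump out of $\mathcal{J}_0$ records a hit point $\mathbf{h}_k$; every subsequent jump out of $\mathcal{J}_1$ occurs in $\mathcal{ER}^{\mathbf{h}_k}$, hence at a point with $\norm{\mathbf{x}}\leq\norm{\mathbf{h}_k}-\epsilon$. Between that jump and the next entry into $\mathcal{J}_0$, the robot is in \textit{move-to-target} mode with $\norm{\mathbf{x}}$ strictly decreasing, so the next hit point $\mathbf{h}_{k+1}$ (if it exists) satisfies $\norm{\mathbf{h}_{k+1}}\leq\norm{\mathbf{h}_k}-\epsilon$. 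Since $\norm{\mathbf{x}}\geq 0$ and hit points are bounded below, only finitely many such hit points can occur; after the last one, the robot either never leaves \textit{move-to-target} mode again (converging to $\mathbf{0}$) — accounting also for the at-most-one initialization jump from $\mathcal{J}_s$ when started in the \textit{obstacle-avoidance} mode (Remark \ref{immediate_switching_the_mode}). Ruling out Zeno behavior follows since between consecutive jumps a positive amount of flow time elapses (the flow and jump sets are separated by the hysteresis gap $\gamma_s$ vs.\ $\gamma_a$, $\gamma$), and there are finitely many jumps, so all maximal solutions are complete.

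Finally, for global asymptotic stability (ii): after the finitely many jumps, the solution is in \textit{move-to-target} mode with $\dot{\mathbf{x}}=-\kappa_s\mathbf{x}$, giving exponential convergence of $\mathbf{x}$ to $\mathbf{0}$, hence $\xi\to\mathcal{A}$; attractivity over all of $\mathcal{K}$ follows. For stability (Lyapunov), I would use $V(\xi)=\norm{\mathbf{x}}$: it is nonincreasing along the \textit{move-to-target} flow, does not increase at any jump in $\mathcal{J}_0$ (since $\mathbf{h}^+=\mathbf{x}$, $\mathbf{x}^+=\mathbf{x}$) nor in $\mathcal{J}_1$, and in the \textit{obstacle-avoidance} mode any transient increase of $\norm{\mathbf{x}}$ is bounded by $\gamma$, which can be absorbed by a suitable class-$\mathcal{KL}$ estimate because the obstacle-avoidance mode can only be entered from $\mathcal{J}_0^{\mathcal{W}}\subset\overline{\mathcal{N}_{\gamma_s}(\mathcal{D}_{r_a}(\mathcal{O}_{\mathcal{W}}))}$, i.e.\ near an obstacle whose distance from $\mathbf{0}$ bounds $\norm{\mathbf{x}}$ from below. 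Combining non-increase on flows/jumps with the finite-jump convergence argument, and invoking the hybrid invariance/well-posedness from Lemma \ref{basic-conditions}, yields global asymptotic stability of $\mathcal{A}$ over $\mathcal{K}$ via the standard hybrid Lyapunov/invariance results of \cite{goebel2012hybrid}.
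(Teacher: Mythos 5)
Your overall skeleton matches the paper's: item (i) is exactly Lemma \ref{lemma:set_invariance}; finiteness of jumps comes from the $\epsilon$-decrease of successive hit points; attractivity comes from the move-to-target flow $\dot{\mathbf{x}}=-\kappa_s\mathbf{x}$ after the last switch; and stability ultimately rests on the fact that $\mathcal{J}_0^{\mathcal{W}}$ is bounded away from the origin. The genuine gap is the step you yourself flag as the main obstacle: that every properly initiated \textit{obstacle-avoidance} phase terminates in finite time in $\mathcal{J}_1$, and more precisely in $\mathcal{ER}^{\mathbf{h}}$. You assert this via ``circulation on the 2D slice'' and an appeal to the authors' planar prior work, but this is precisely what the paper isolates and proves as a standalone result (Lemma \ref{lemma:always_enter_move_to_target_mode}), and your sketch omits its three load-bearing ingredients: (a) that the flow actually remains on $\mathcal{P}(\mathbf{h},\mathbf{a})$ and inside $\mathcal{N}_{\gamma}(\mathcal{D}_{r_a}(\mathcal{O}_i))$, which requires a tangent-cone/viability argument using $\eta$ and the partition of the slice into the bands $\mathcal{S}_1,\mathcal{S}_2,\mathcal{S}_3$; (b) that the projected field $\mathbf{P}(\mathbf{a})\mathbf{x}_{\pi}$ never vanishes on the slice outside $\mathcal{D}_{r_a}^{\circ}(\mathcal{O}_i)$ --- this uses crucially that the hyperplane, passing through $\mathbf{h}\in\mathcal{J}_0^{\mathcal{W}}$ and $\mathbf{0}$, intersects the interior of $\mathcal{D}_{r_a}(\mathcal{O}_i)$ --- so that the trajectory circulates monotonically (strictly positive component along $\mathbf{R}(\mathbf{a})\mathbf{P}(\mathbf{a})\mathbf{x}_{\pi}$) rather than stalling or reversing; and (c) that the segment $\mathcal{L}_s(\mathbf{0},\Pi(\mathbf{0},\mathcal{O}_i^{\mathcal{S}}))\cap\mathcal{N}_{\gamma}(\mathcal{D}_{r_a}(\mathcal{O}_i))$ lies in the exit region, on the hyperplane, and strictly closer to the origin than the hit point, so that for $\bar{\epsilon}$ small it sits inside $\mathcal{ER}^{\mathbf{h}}$. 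Your citation of Lemma \ref{lemma:epsilon_exists} does not supply (c): it only guarantees a point of $\mathcal{ER}^{\mathbf{h}}\cap\mathcal{N}_{\gamma}(\mathcal{D}_{r_a}(\mathcal{O}_i))$, not one on the hyperplane to which the trajectory is confined. Without (a)--(c) your finite-jump bookkeeping has no basis, since $\mathcal{J}_1^{\mathcal{W}}$ also contains $\mathcal{W}\setminus\mathcal{D}_{r_a+\gamma}^{\circ}(\mathcal{O}_{\mathcal{W}})$, and a switch occurring there need not be $\epsilon$ closer to the target.

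A secondary flaw: your stability argument claims the transient growth of $\norm{\mathbf{x}}$ in the \textit{obstacle-avoidance} mode is bounded by $\gamma$, which is false --- circumnavigating a large obstacle can increase $\norm{\mathbf{x}}$ by on the order of the obstacle's diameter, so no uniform class-$\mathcal{KL}$ absorption of a $\gamma$-overshoot is available. No such estimate is needed: as the paper does (and as your own parenthetical already suggests), choose $\mu$ small enough that $\mathcal{B}_{\mu}(\mathbf{0})$ meets neither $\mathcal{D}_{r_a}(\mathcal{O}_{\mathcal{W}})$ nor $\mathcal{J}_0^{\mathcal{W}}$; then every solution starting in $\{\xi\in\mathcal{K}\,|\,\mathbf{x}\in\mathcal{B}_{\mu}(\mathbf{0})\}$ evolves, after at most the single initialization jump, purely in the \textit{move-to-target} mode with $\norm{\mathbf{x}}$ nonincreasing, so these sets are forward invariant and Lyapunov stability of $\mathcal{A}$ follows directly.
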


\begin{proof}
See Appendix \ref{proof_of_theorem}.
\end{proof}

\section{Application to sphere worlds}\label{sec:sphere_world}
Obviously, the hybrid feedback controller \eqref{hybrid_control_law}, which is designed for safe autonomous navigation in $n-$D environments with arbitrary convex obstacles, is applicable in sphere words \textit{i.e.,} environments with spherical obstacles. However, in this section, we will take advantage of the simplified geometry of the obstacles to redesign the control law \eqref{hybrid_control_law} in a way that ensures a monotonic decrease of the distance between the robot and the target location--a feature that is not guaranteed with the control law \eqref{hybrid_control_law} in environments with arbitrary convex obstacles.

Let us consider environments with spherical obstacles as defined in \cite{koditschek1990robot}. The workspace $\mathcal{W} := \mathcal{B}_{r_0}(\mathbf{c}_0)$ is a compact sphere with radius $r_0\in\mathbb{R}_{>0}$ and center $\mathbf{c}_0\in\mathbb{R}^n.$  In addition, the workspace $\mathcal{W}$ contains disjoint, compact spherical obstacles $\mathcal{O}_i:=\mathcal{B}_{r_i}(\mathbf{c}_i), i\in\mathbb{I}\setminus\{0\}$, where $r_i\in\mathbb{R}_{\geq0}$ and $\mathbf{c}_i\in\mathcal{W}$ represent the radius and the center of obstacle $\mathcal{O}_i.$ Similar to \cite{koditschek1990robot}, the workspace $\mathcal{W}$ satisfies Assumption \ref{3d_obstacle_separation}.

Taking advantage of the spherical geometry of the obstacles, we will appropriately design the unit vector $\mathbf{a}$ and modify the obstacle-avoidance control vector $\mathbf{v}(\mathbf{x},\mathbf{h}, \mathbf{a})$ in \eqref{n_dimensional_obstacle-avoidance_vector}, to ensure that in the \textit{obstacle-avoidance} mode, the distance between the target location and the robot is monotonically decreasing.  

\begin{figure}
    \centering
    \includegraphics[width = 0.49\linewidth]{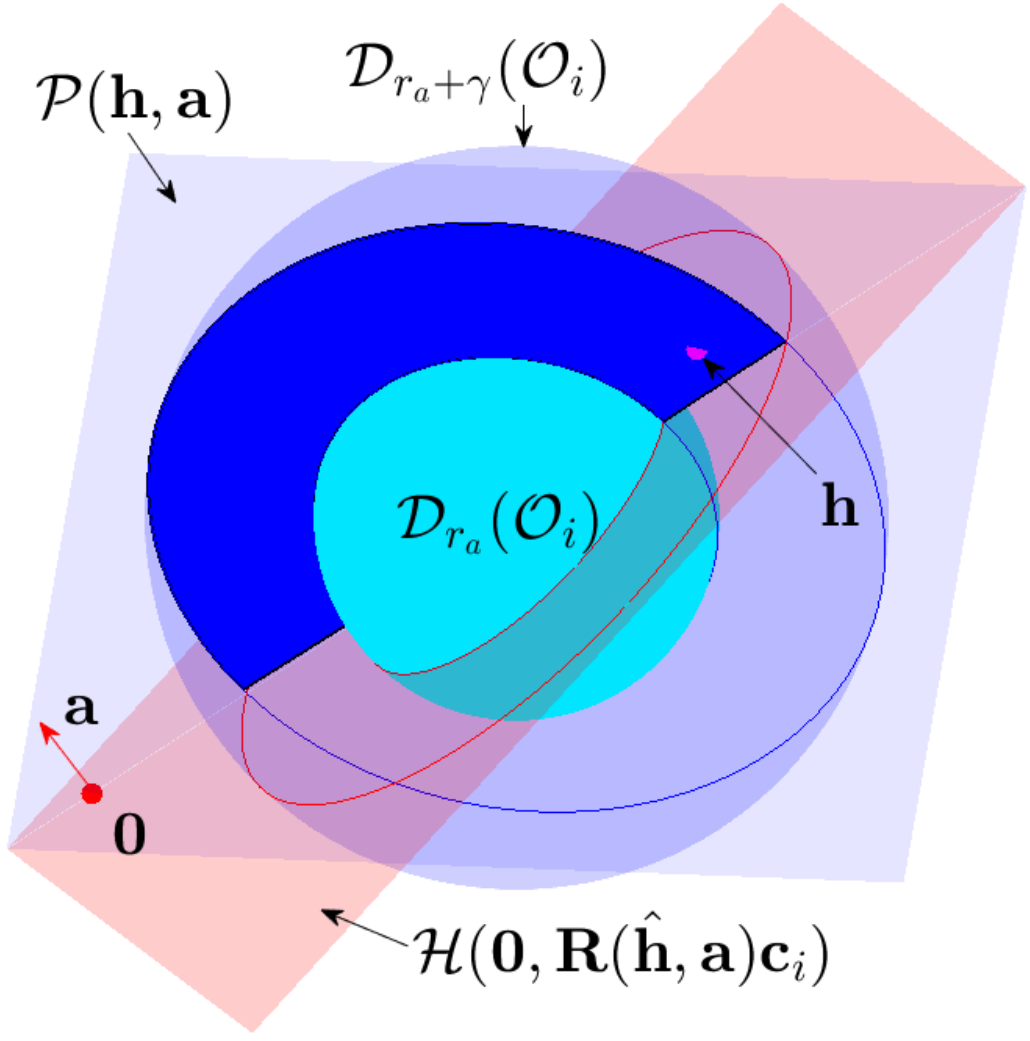}
    \centering
    \includegraphics[width = 0.44\linewidth]{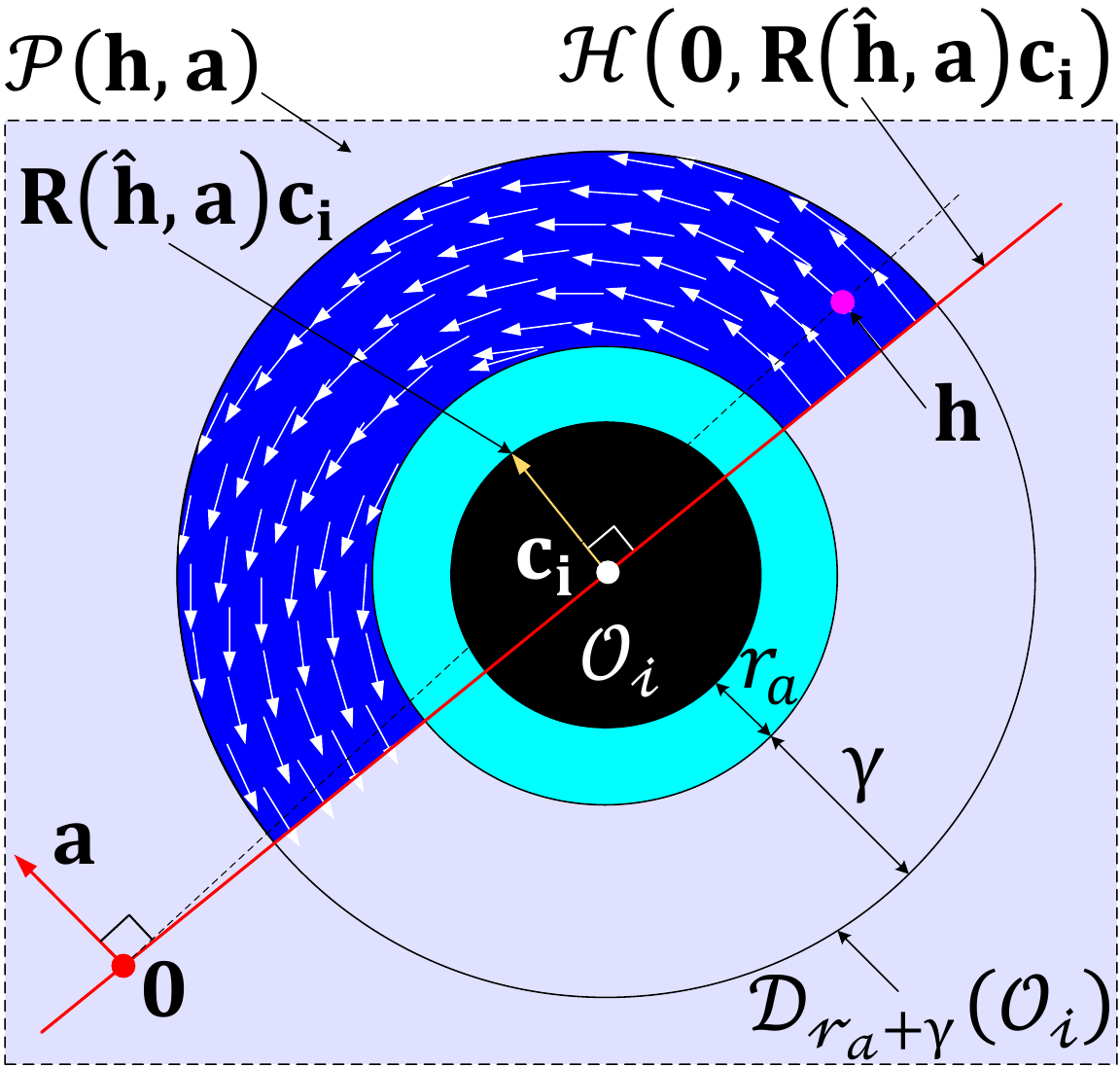}
\caption{The left figure depicts the set $\mathcal{N}_{\gamma}(\mathcal{D}_{r_a}(\mathcal{O}_i)) \cap \mathcal{P}(\mathbf{h}, \mathbf{a}) \cap \mathcal{P}_{\geq}(\mathbf{0}, \mathbf{R}(\hat{\mathbf{h}}, \mathbf{a})\mathbf{c}_i)$, shaded in a dark blue color. The right figure shows the direction of the modified obstacle-avoidance control vector, $\mathbf{v}_s(\mathbf{x}, \mathbf{h}, \mathbf{a})$, at $\mathbf{x} \in \mathcal{N}_{\gamma}(\mathcal{D}_{r_a}(\mathcal{O}_i)) \cap \mathcal{P}(\mathbf{h}, \mathbf{a}) \cap \mathcal{P}_{\geq}(\mathbf{0}, \mathbf{R}(\hat{\mathbf{h}},\mathbf{a})\mathbf{c}_i)$.}
\label{sphere_world_description_figure}
\end{figure}

For any $\mathbf{p}\in\mathcal{N}_{\gamma}(\mathcal{D}_{r_a}(\mathcal{O}_{\mathcal{W}}))$, we define a set-valued mapping $\mathbf{A}_s$ as follows:
\begin{equation}
    \mathbf{A}_s(\mathbf{p}) = \{\mathbf{q}\in\mathbf{A}(\mathbf{p}) |\mathbf{q}^\top\mathbf{p}_{\pi}\geq 0\},\label{sphere_update_law_vector_a}
\end{equation}
where the set-valued mapping $\mathbf{A}$ is defined in \eqref{update_law_for_vector_a}, and $\mathbf{p}_{\pi} = \mathbf{p} - \Pi(\mathbf{p}, \mathcal{O}_{\mathcal{W}})$.

Note that for any \textit{hit point} $\mathbf{h}\in\mathcal{N}_{\gamma}(\mathcal{D}_{r_a}(\mathcal{O}_i))$, for $i\in\mathbb{I}$, if one sets $\mathbf{a} \in \mathbf{A}_s(\mathbf{h})$,
then, since obstacle $\mathcal{O}_i$ is a sphere, it can be shown that for all $\mathbf{x}\in\mathcal{P}(\mathbf{h}, \mathbf{a})\cap\mathcal{N}_{\gamma}(\mathcal{D}_{r_a}(\mathcal{O}_i)),$  $\Pi(\mathbf{x}, \mathcal{O}_{i})$ belongs to the plane $\mathcal{P}(\mathbf{h}, \mathbf{a}).$ 
Therefore, for any given $\mathbf{h}\in\partial\mathcal{D}_{\beta}(\mathcal{O}_i)$, $\beta\in[r_a, r_a + \gamma]$ and for all $\mathbf{x}\in\partial\mathcal{D}_{\beta}(\mathcal{O}_i)\cap\mathcal{P}(\mathbf{h}, \mathbf{a})$, one can show that the vector $\mathbf{R}(\hat{\mathbf{h}}, \mathbf{a})\mathbf{P}(\hat{\mathbf{h}}, \mathbf{a})\mathbf{x}_{\pi}$ belongs to the intersection of the hyperplane $\mathcal{H}(\mathbf{0}, \mathbf{x}_{\pi})$ and the plane $\mathcal{P}(\mathbf{h}, \mathbf{a}),$ where $\mathbf{a}\in\mathbf{A}_s(\mathbf{h})$.
This allows one to ensure that if $\mathbf{h}\in\partial\mathcal{D}_{\beta}(\mathcal{O}_i)$ for $\beta\in[r_a,r_a+\gamma]$, and $\mathbf{x}(t_0, j_0)\in\partial\mathcal{D}_{\beta}(\mathcal{O}_i)\cap\mathcal{P}(\mathbf{h}, \mathbf{a})$ for some $(t_0, j_0)\in\text{ dom }\xi$, where $\mathbf{a} \in\mathbf{A}_s(\mathbf{h})$, then under the control input $\mathbf{u}(\xi) = \mathbf{v}_s(\mathbf{x}, \mathbf{h}, \mathbf{a})$, where 
\begin{equation}
    \mathbf{v}_s(\mathbf{x}, \mathbf{h}, \mathbf{a}) =\mathbf{R}(\hat{\mathbf{h}}, \mathbf{a})\mathbf{x}_{\pi}\label{modified_obstacle-avoidance_control},
\end{equation}
one has $\mathbf{x}(t, j)\in\partial\mathcal{D}_{\beta}(\mathcal{O}_i)\cap\mathcal{P}(\mathbf{h}, \mathbf{a})$ for all time $(t, j)\succeq(t_0, j_0),$ ensuring robot's safety, as stated later in Theorem \ref{theorem:sphere_global_stability}

Now, for a given $\mathbf{h}\in\mathcal{N}_{\gamma}(\mathcal{D}_{r_a}(\mathcal{O}_i))$, consider the hyperplane $\mathcal{H}(\mathbf{0}, \mathbf{R}(\hat{\mathbf{h}}, \mathbf{a})\mathbf{c}_i)$, where $\mathbf{a} \in \mathbf{A}_s(\mathbf{h})$ and $\mathbf{c}_i$ is the center of obstacle $\mathcal{O}_i$. Notice that, for all locations $\mathbf{x}$ in the set $\mathcal{N}_{\gamma}(\mathcal{D}_{r_a}(\mathcal{O}_i))\cap\mathcal{P}(\mathbf{h}, \mathbf{a})\cap\mathcal{H}_{\geq}(\mathbf{0}, \mathbf{R}(\hat{\mathbf{h}}, \mathbf{a})\mathbf{c}_i)$, which is depicted in Fig. \ref{sphere_world_description_figure}, the inner product between the modified obstacle-avoidance control term $\mathbf{v}_s(\mathbf{x}, \mathbf{h}, \mathbf{a})$ and the vector $\mathbf{x}$ is always non-positive. 
This allows one to show that if $\mathbf{h}\in\mathcal{N}_{\gamma}(\mathcal{D}_{r_a}(\mathcal{O}_i))$, $\mathbf{a}\in\mathbf{A}_s(\mathbf{h})$ and $\mathbf{x}\in\mathcal{N}_{\gamma}(\mathcal{D}_{r_a}(\mathcal{O}_i))\cap\mathcal{P}(\mathbf{h}, \mathbf{a})\cap\mathcal{H}_{\geq}(\mathbf{0}, \mathbf{R}(\hat{\mathbf{h}}, \mathbf{a})\mathbf{c}_i)$, then the control input vector $\mathbf{u}(\xi) = \mathbf{v}_s(\mathbf{x},\mathbf{h}, \mathbf{a})$ will ensure monotonic decrease of the distance $\|{\mathbf{x}}\|$ as long as the state $\mathbf{x}$ remains in the set $\mathcal{H}_{\geq}(\mathbf{0}, \mathbf{R}(\hat{\mathbf{h}}, \mathbf{a})\mathbf{c}_i)$, as stated later in Theorem \ref{theorem:sphere_global_stability}. 

Next, motivated by the preceding discussion, we modify the proposed hybrid feedback control law \eqref{hybrid_control_law} using the modified obstacle-avoidance control vector $\mathbf{v}_s(\mathbf{x}, \mathbf{h}, \mathbf{a})$, defined in \eqref{modified_obstacle-avoidance_control}, and the set-valued mapping $\mathbf{A}_s$, provided in \eqref{sphere_update_law_vector_a}, as follows:

\begin{enumerate}
\item the modified hybrid control input vector $\mathbf{u}_s(\xi)$, which is obtained by replacing the obstacle-avoidance control vector $\mathbf{v}(\mathbf{x}, \mathbf{h}, \mathbf{a})$ with the modified obstacle-avoidance control vector $\mathbf{v}_s(\mathbf{x}, \mathbf{h}, \mathbf{a})$ in \eqref{control_u}, is given as
\begin{equation}
    \mathbf{u}_s(\xi) = -\kappa_s(1 - m)\mathbf{x} + \kappa_rm\mathbf{v}_s(\mathbf{x}, \mathbf{h}, \mathbf{a})\label{modified_control_u},
\end{equation}
where $\kappa_s > 0$ and $\kappa_r > 0.$
\item the modified update law $\mathbf{L}_0^s(\xi)$ when the state $\xi$ enters in the jump set $\mathcal{J}_0$ of the \textit{move-to-target} mode is given as
\begin{equation}
    \mathbf{L}_0^s(\mathbf{x}, \mathbf{h}, \mathbf{a}, 0, s) = \left\{\begin{bmatrix}\mathbf{x}\\ \mathbf{a}^{\prime}\\1\\s+\tau_s\end{bmatrix}, \mathbf{a}^{\prime}\in\mathbf{A}_s(\mathbf{x})\right\},\label{modified_update_law}
\end{equation}
where the set-valued mapping $\mathbf{A}_s$ is defined according to \eqref{sphere_update_law_vector_a}.
\end{enumerate}


The hybrid closed-loop system resulting from the modified hybrid feedback control law is given by
\begin{equation}
    \underbrace{\begin{matrix}\mathbf{\dot{x}}\\\mathbf{\dot{h}}\\\mathbf{\dot{a}}\\\dot{m}\\\dot{s}
    \end{matrix}\begin{matrix*}[l]\;=\mathbf{u}_s(\xi)\\\;=\mathbf{0}\\\;=\mathbf{0}\\\;=0\\\;=1\end{matrix*}}_{\dot{\xi} = \mathbf{F}_s(\xi), {\xi\in\mathcal{F}}}\quad\quad\underbrace{\begin{matrix}\mathbf{x}^+\\\begin{bmatrix}\mathbf{h}^+\\\mathbf{a}^+\\m^+\\s^+\end{bmatrix}\end{matrix} \begin{matrix*}[l]=\mathbf{x}\vspace{0.56cm}\\\vspace{0.7cm}\in\mathbf{L}_s(\xi)\end{matrix*}}_{\xi^+ \in\mathbf{J}_s(\xi), {\xi\in\mathcal{J}}},\label{modified_hybrid_closed_loop_system}
\end{equation}
where the control input vector $\mathbf{u}_s$ is defined in \eqref{modified_control_u} and the update law $\mathbf{L}_s$ is obtained by replacing $\mathbf{L}_0$ in \eqref{update_law} with 
 $\mathbf{L}_0^s$ given in \eqref{modified_update_law}.

The next lemma shows that the hybrid closed-loop system \eqref{modified_hybrid_closed_loop_system} with the data $(\mathcal{F}, \mathbf{F}_s, \mathcal{J}, \mathbf{J}_s)$ satisfies the hybrid basic conditions, as stated in Lemma \ref{basic-conditions}.

\begin{lemma}
The hybrid closed-loop system \eqref{modified_hybrid_closed_loop_system} with data $(\mathcal{F}, \mathbf{F}_s, \mathcal{J}, \mathbf{J}_s)$ satisfies the hybrid basic conditions stated in Lemma \ref{basic-conditions}.

\label{lemma:hybrid_basic_conditions_for_sphere}
\end{lemma}
The proof of Lemma \ref{lemma:hybrid_basic_conditions_for_sphere} is similar to the proof of Lemma \ref{lemma:hybrid_basic_conditions_for_3D}, therefore, it is omitted.

Next, we demonstrate that for the robot operating in a sphere world, which satisfies Assumption \ref{3d_obstacle_separation}, the modified proposed hybrid feedback controller ensures safe navigation. It also guarantees global asymptotic stability of the target location at the origin, with a monotonic decrease in the distance between the robot's center and the target location.

\begin{theorem}
Under Assumption \ref{3d_obstacle_separation}, for the hybrid closed-loop system \eqref{modified_hybrid_closed_loop_system}, the following holds true:
\begin{enumerate}
        \item[i)] the obstacle-free set $\mathcal{K}$ is forward invariant,
        \item[ii)] the target set $\mathcal{A}$ is globally asymptotically stable over the set $\mathcal{K}$, 
        \item[iii)] the number of jumps is finite,
        \item[iv)] the distance from the robot's location to the target location is monotonically decreasing.
\end{enumerate}
\label{theorem:sphere_global_stability}
\end{theorem}
\begin{proof}
    See Appendix \ref{proof:sphere_world}.
\end{proof}

\section{Implementation Procedure}\label{implementation_procedure}

We consider a workspace with convex obstacles that satisfies Assumption \ref{3d_obstacle_separation} with some $\delta > 0$, as discussed in Section \ref{section:problem_formulation}. The target location is set at the origin within the interior of the obstacle-free workspace $\mathcal{W}_{r_a}^{\circ}$. The parameters $\gamma, \gamma_s$ and $\gamma_a$ are chosen to satisfy $0<\gamma_a<\gamma_s<\gamma<(\delta - r_s)$. A sufficiently small value for $\bar{\epsilon}$ is selected, and the parameter $\epsilon$, used in \eqref{partition_rm}, is chosen such that $\epsilon\in(0, \bar{\epsilon}]$. The parameter $\delta_s$, used in \eqref{J_s_set_definition}, is set such that $\delta_s\in(0, \tau_s)$, where $\tau_s > 0$. 
The robot is equipped with a range scanner with a sensing radius of $R_s > r_a + \gamma$. The composite state vector $\xi$ is initialized in the set $\mathcal{K}$.

\textbf{Switching from the \textit{move-to-target} mode to the \textit{obstacle-avoidance} mode:}
When the control input is initialized in the \textit{move-to-target} mode, according to \eqref{control_u}, it steers the robot straight towards the origin. 
The robot should constantly measure the distance between its center and the surrounding obstacles to identify whether $\xi$ has entered in the jump set $\mathcal{J}_0$ of the \textit{move-to-target} mode. 
To do this, the robot needs to identify the set $\eth_{\mathcal{W}}^{\mathbf{x}}$, as defined in \eqref{sensor_visible_boundary}, which contains the locations from the boundary of the surrounding obstacles that are less than $R_s$ units away from the center of the robot and have a clear line of sight to the center of the robot, where $R_s > r_a + \gamma$ represents the sensing radius.
Then, one can obtain the distance between the robot's center and the surrounding obstacles by evaluating $d(\mathbf{x}, \eth_{\mathcal{W}}^{\mathbf{x}})$ according to Section \ref{section:metric_projection}. 

If $d(\mathbf{x}, \eth_{\mathcal{W}}^{\mathbf{x}}) \leq r_a + \gamma_s$, one should identify whether the robot can move straight towards the target location without colliding with the nearest obstacle within the sensing region. In other words, one should check condition \eqref{sufficient_condition_to_check} to identify whether the center of the robot belongs to the \textit{avoidance} region $\mathcal{R}_a$, defined in \eqref{Individual_avoidance_region}, associated with the nearest obstacle, let us say $\mathcal{O}_i, i\in\mathbb{I}$. To that end, one needs to identify the set $\eth_{i}^{\mathbf{x}}$, as defined in \eqref{visibile_boundary_closest_obstacle}, which contains the locations from the set $\eth_{\mathcal{W}}^{\mathbf{x}}$ that belong to the boundary of the closest obstacle $\mathcal{O}_i$.
Once the set $\eth_{i}^{\mathbf{x}}$ has been identified, one needs to determine whether $\mathbf{x}$ belongs to the \textit{avoidance} region $\mathcal{R}_a$, defined in \eqref{Individual_avoidance_region}, by evaluating the intersection between the set $\eth_{i}^{\mathbf{x}}$ and the set $\mathcal{C}_{\mathcal{L}}(\mathbf{x}, 2r_a)$. If $\eth_{i}^{\mathbf{x}}\cap\mathcal{C}_{\mathcal{L}}(\mathbf{x}, 2r_a) \not= \emptyset$, then the center of the robot belongs to the \textit{avoidance} region $\mathcal{R}_a$ and the state $\xi$ has entered in the jump set $\mathcal{J}_0$ of the \textit{move-to-target}. Otherwise, the robot continues to operate in the \textit{move-to-target} mode.

\textbf{Switching from the \textit{obstacle-avoidance} mode to the \textit{move-to-target} mode:}
When the state $\xi$ enters in the jump set $\mathcal{J}_0$, the state $\xi$ is updated as per \eqref{updatelaw_part1} and \eqref{hybrid_closed_loop_system}, and the control input switches to the \textit{obstacle-avoidance} mode.
According to Lemma \ref{lemma:always_enter_move_to_target_mode}, when the robot operates in the \textit{obstacle-avoidance} mode, it stays inside the $\gamma-$neighborhood of the closest obstacle. As the robot operates in the \textit{obstacle-avoidance} mode, we continuously evaluate the intersection $\eth_{i}^{\mathbf{x}}\cap\mathcal{C}_{\mathcal{L}}(\mathbf{x}, 2r_a)$ to check whether the center of the robot has entered in the \textit{exit} region $\mathcal{R}_e$. If $\eth_{i}^{\mathbf{x}}\cap\mathcal{C}_{\mathcal{L}}(\mathbf{x}, 2r_a) = \emptyset$, then the center of the robot belongs to the exit region $\mathcal{R}_e$. 
Additionally, if the target location is at least $\epsilon$ units closer to $\mathbf{x}$ than to the current \textit{hit point} $\mathbf{h}$, it implies that $\xi\in\mathcal{J}_1$. Then, the value of the variables $m$ and $s$ are updated as per \eqref{updatelaw_part2} and the control input switches to the \textit{move-to-target} mode. 

Finally, if the control input is initialized in the \textit{obstacle-avoidance} mode, according to \eqref{oa_jumpset} and \eqref{J_s_set_definition}, the state $\xi(0, 0)$ belongs to the jump set of the \textit{obstacle-avoidance} mode $\mathcal{J}_1$. As a result, according to \eqref{updatelaw_part2}, the variables $\mathbf{h}, \mathbf{a}, m$ and $s$, are updated and the control input switches to the \textit{move-to-target} mode.

The above-mentioned implementation procedure is summarized in Algorithm \ref{alg:general_implemenration}.

\subsection{Control-input smoothing mechanism for practical implementations}
Since the proposed hybrid closed-loop system \eqref{hybrid_closed_loop_system} satisfies the hybrid basic conditions, as mentioned in Lemma \ref{basic-conditions}, the resultant robot trajectories during the flow are always smooth. However, the control input vector $\mathbf{u}$, defined in \eqref{control_u}, in general, changes direction instantaneously when switching between modes, causing the overall robot trajectories to become non-smooth.
This instantaneous change in the direction of $\mathbf{u}$ is caused by a discrete change in the value of the mode indicator variable $m$ when the state $\xi$ enters in the jump set $\mathcal{J}$. Therefore, in order to generate smooth robot trajectories, one solution consists in replacing $m$ in \eqref{control_u} with the following continuous scalar function $\lambda$ which takes values between $0$ and $1$:
\begin{equation}
    \lambda(m, s, \tau) = \begin{cases}\varphi_{t_s}(s - \tau), &m = 1,\\
    \varphi_{t_s}(\tau - s), &m = 0,
    \end{cases}\label{definition_lambda_s}
\end{equation}
 where the scalar variable $\tau$ is kept constant during the flow and is updated only when $\xi$ enters in the jump set $\mathcal{J}$. The update law for $\tau$, which is denoted by $\mathbf{U}(\xi)$ is given by 
\begin{equation}
\mathbf{U}(\xi) = \begin{cases}
        s + \tau_s, & \xi\in\mathcal{J}_{sm}^0,\\
        s + \tau_s + t_s, & \xi\in\mathcal{J}_{sm}^1,
    \end{cases}\label{defintion_tau}
\end{equation}
with $t_s>0$ and $\tau_s > 0$ as defined in \eqref{updatelaw_part1}. 
For $z\in\{0, 1\}$, the jump set $\mathcal{J}_{sm}^{z}$ is defined as
\begin{equation}
    \mathcal{J}_{sm}^{z} := \{\xi\in\mathcal{J}_z\mid s - \tau \geq t_s\},\label{smooth_jump_set_definition}
\end{equation}
where $\mathcal{J}_z$ for $z = 0$ and $z = 1$ is given by \eqref{mtt_jumpset} and \eqref{oa_jumpset}, respectively. 
According to \eqref{defintion_tau} and \eqref{smooth_jump_set_definition}, $\tau$ is updated only when $\xi$ enters in the jump set $\mathcal{J}_{sm}$ and the difference $s- \tau$ is greater than or equal to $t_s$ seconds, where $\mathcal{J}_{sm} = \mathcal{J}_{sm}^0\cup\mathcal{J}_{sm}^1$. This ensures that the consecutive switching instances are separated by at least $t_s$ seconds.
According to \eqref{defintion_tau} and \eqref{smooth_jump_set_definition}, the variable $\tau$ is updated only when $\xi$ enters the jump set $\mathcal{J}$ and the condition $s - \tau \geq t_s$ is satisfied. This ensures that consecutive switching events are separated by at least $t_s$ seconds.

Given a scalar variable $p$, the continuous scalar function $\varphi_{t_s}(p)$ is defined by
\begin{equation}
    \varphi_{t_s}(p) = \begin{cases}
        1, & p \geq t_s,\\
        p/t_s, & 0\leq p \leq t_s,\\
        0, & p \leq 0.
    \end{cases}\label{definition_varphi}
\end{equation}
Depending on the value of $p$, the function $\varphi_{t_s}(p)$ takes values between $0$ and $1$.

Notice that according to \eqref{updatelaw_part1} and \eqref{defintion_tau}, when $\xi\in\mathcal{J}_0$, the variables $s$ and $\tau$ are updated to $s + \tau_s$. 
Since $\tau$ is kept constant during the flow, one has $\dot{\tau} = 0$. Additionally, as per \eqref{hybrid_closed_loop_system}, we have $\dot{s} = 1$. Therefore, as the robot enters in the \textit{obstacle-avoidance} mode, the difference $s - \tau$ equals zero and keeps increasing until the robot switches to the \textit{move-to-target} mode. This, according to \eqref{definition_lambda_s} and \eqref{definition_varphi}, causes the value of $\lambda$ to go from $0$ to $1$ in $t_s$ seconds\footnote{The parameter $t_s$ should be set to a relatively small positive value to ensure that, depending on the current mode of operation, the function $\lambda$ reaches a constant value of either $0$ or $1$ before the state $(\xi, \tau)$ enters the jump set of the respective mode.}.
Similarly, according to \eqref{updatelaw_part1} and \eqref{defintion_tau}, when $\xi\in\mathcal{J}_1$, the variables $s$ and $\tau$ are updated to $s + \tau_s$ and $s + \tau_s + t_s$, respectively. Addtionally, during the flow we have $\dot{s} = 1$ and $\dot{\tau} = 0$. Therefore, as the robot enters in the \textit{move-to-target} mode, the difference $\tau - s$ equals $t_s$ and keeps decreasing until the robot switches to the \textit{obstacle-avoidance} mode. This, according to \eqref{definition_lambda_s} and \eqref{definition_varphi}, causes the value of $\lambda$ to go from $1$ to $0$ in $t_s$ seconds. 
Therefore, by replacing $m$ in \eqref{control_u} with $\lambda(m, s,\tau)$ one can ensure that the control input vector changes its value between $-\kappa_s\mathbf{x}$ and $\kappa_r\mathbf{v}(\mathbf{x}, \mathbf{h}, \mathbf{a})$ in a continuous manner, depending on the current mode of operation. The smoothed version of the hybrid control input is then given by 
\begin{equation}
    \mathbf{u}_{sm}(\xi, \tau) = -\kappa_s(1 - \lambda(m, s, \tau))\mathbf{x} + \kappa_r \lambda(m, s, \tau)\mathbf{v}\label{smoothed_control_u}.
\end{equation}

The resulting hybrid closed-loop system is represented by
\begin{equation}
\underbrace{\begin{matrix}\mathbf{\dot{x}}\\\mathbf{\dot{h}}\\\mathbf{\dot{a}}\\\dot{m}\\\dot{s}\\\dot{\tau}
    \end{matrix}\begin{matrix*}[l]\;=\mathbf{u}_{sm}(\xi, \tau)\\\;=\mathbf{0}\\\;=\mathbf{0}\\\;=0\\\;=1\\\;=0\end{matrix*}}_{(\dot{\xi}, \dot{\tau}) = \mathbf{F}_{sm}(\xi, \tau), {(\xi, \tau)\in\mathcal{F}\times\mathbb{R}}}\quad\quad\underbrace{\begin{matrix}\mathbf{x}^+\\\begin{bmatrix}\mathbf{h}^+\\\mathbf{a}^+\\m^+\\s^+\end{bmatrix}\\\tau^+\end{matrix} \begin{matrix*}[l]=\mathbf{x}\vspace{0.56cm}\\\vspace{0.7cm}\in\mathbf{L}(\xi)\\=\mathbf{U}(\xi)\end{matrix*}}_{(\xi^+, \tau^+) \in\mathbf{J}_{sm}(\xi), {(\xi, \tau)\in\mathcal{J}_{sm}\times\mathbb{R}}}\label{smoothed_hybrid_closed_loop_system}
\end{equation}
where $\mathcal{J}_{sm} = \mathcal{J}_{sm}^0\cup\mathcal{J}_{sm}^1$.
Notice that the variable $\tau$ does not change value during the flow, and is updated according to $\mathbf{U}(\xi)$ \eqref{defintion_tau} only when $\xi\in\mathcal{J}_{sm}$.


\begin{remark}
    The control input vector $\mathbf{u}_{sm}$ allows one to generate smooth robot trajectories that converge to the target location at the origin as long as the robot's location  $\mathbf{x}$ and the mode indicator $m$ are not initialized as follows:
    \begin{itemize}
    \item $\mathbf{x}(0, 0)\in\partial\mathcal{W}_{r_a}\cap\mathcal{F}_1^{\mathcal{W}}$ and $m(0, 0) = 0$. In this case, one cannot ensure the forward invariance of the set $\mathcal{K}\times\mathbb{R}$ as $\mathbf{u}_{sm}(\xi(0, 0), \tau(0, 0))$ at $\mathbf{x}(0, 0)$ is directed towards the interior of set $\mathcal{D}_{r_a}(\mathcal{O}_{\mathcal{W}})$. 
    \item $\mathbf{x}(0, 0)\in\mathcal{F}_{0}^{\mathcal{W}}\setminus\mathcal{D}_{r_a + \gamma}(\mathcal{O}_{\mathcal{W}})$ and $m(0, 0) =1$. In this case, the obstacle-avoidance control input vector $\mathbf{v}(\mathbf{x}, \mathbf{h}, \mathbf{a})$ is not viable as the uniqueness of $\Pi(\mathbf{x}, \mathcal{O}_{\mathcal{W}})$ is not guaranteed.
    \end{itemize}
To avoid such situations, one should initialize the robot's location  $\mathbf{x}$ in the interior of the obstacle-free workspace $\mathcal{W}_{r_a}$ and ensure that the robot starts operating in the \textit{move-to-target} mode with the stabilization control input vector $-\kappa_s\mathbf{x}$. This can be achieved by setting $m(0, 0) = 0$ and choosing $\tau(0, 0) \in(-\infty, s(0, 0))$, as per \eqref{definition_lambda_s} and \eqref{smoothed_control_u}. 
\end{remark}

\begin{algorithm}
\caption{Implementation of the proposed hybrid control law \eqref{hybrid_control_law} in \textit{a priori} unknown environment.}

\begin{algorithmic}[1] 
\STATE \textbf{Set} target location at the origin $\mathbf{0}.$
\STATE \textbf{Initialize} $\mathbf{x}(0, 0)\in\mathcal{W}_{r_a}$, $\mathbf{h}(0, 0) \in \mathcal{W}_{r_a}$, $\mathbf{a}(0, 0) \in \mathbb{S}^{n-1}$, $m(0, 0)\in\mathbb{M}$ and $s(0, 0)\in\mathbb{R}_{\geq 0}$. Choose a sufficiently small value of $\bar{\epsilon}$ according to Lemma \ref{lemma:epsilon_exists}, and initialize $\epsilon\in(0, \bar{\epsilon}].$ Select a minimum safety distance $r_s$ such that $r_s\in(0, \delta)$ and set the parameters $\gamma, \gamma_s$ and $\gamma_a$ such that $0 <\gamma_a < \gamma_s < \gamma < \delta - r_s$. Choose $R_s > r_a + \gamma$, used in \eqref{sensor_visible_boundary}. The parameter $\delta_s$, used in \eqref{J_s_set_definition}, is set such that $\delta_s\in(0, \tau_s)$, where $\tau_s > 0$.
\STATE\textbf{Measure} $\mathbf{x}$ and the set $\eth_{\mathcal{W}}^{\mathbf{x}}$ as defined in \eqref{sensor_visible_boundary}.
\IF{$m = 0$,}
\IF{$d(\mathbf{x}, \eth_{\mathcal{W}}^{\mathbf{x}})\leq r_a+\gamma_s$,}
\STATE\textbf{Identify} the set $\eth_{i}^{\mathbf{x}}\subset\eth_{\mathcal{W}}^{\mathbf{x}}$ as defined in \eqref{visibile_boundary_closest_obstacle}.
\IF{$\eth_i^{\mathbf{x}}\cap\mathcal{C}_{\mathcal{L}}(\mathbf{x}, 2r_a) \ne \emptyset$,}
\STATE\textbf{Update }$\xi\leftarrow\mathbf{J}(\xi)$ using \eqref{updatelaw_part1} and \eqref{hybrid_closed_loop_system}.
\ENDIF
\ENDIF
\ENDIF
\IF{$m=1$,}
\IF{$s = s(0, 0)$,}
\STATE\textbf{Update }$\xi\leftarrow\mathbf{J}(\xi)$ using \eqref{updatelaw_part2} and \eqref{hybrid_closed_loop_system}.
\ELSE
\IF{$d(\mathbf{x}, \eth_{\mathcal{W}}^{\mathbf{x}})\leq r_a + {\gamma},$}
\STATE\textbf{Identify} the set $\eth_{i}^{\mathbf{x}}\subset\eth_{\mathcal{W}}^{\mathbf{x}}$ as defined in \eqref{visibile_boundary_closest_obstacle}.
\IF{$\eth_i^{\mathbf{x}}\cap\mathcal{C}_{\mathcal{L}}(\mathbf{x}, 2r_a) = \emptyset$,}
\IF{$\norm{\mathbf{x}} \leq \norm{\mathbf{h}} - \epsilon,$}
\STATE\textbf{Update }$\xi\leftarrow\mathbf{J}(\xi)$ using \eqref{updatelaw_part2} and \eqref{hybrid_closed_loop_system}.
\ENDIF
\ENDIF
\ELSE
\STATE\textbf{Update }$\xi\leftarrow\mathbf{J}(\xi)$ using \eqref{updatelaw_part2} and \eqref{hybrid_closed_loop_system}.
\ENDIF
\ENDIF
\ENDIF
\STATE\textbf{Execute }$\mathbf{F}(\xi)$ given \eqref{hybrid_control_law}, used in \eqref{hybrid_closed_loop_system}.
\STATE\textbf{Go to} step 3.
\end{algorithmic}

\label{alg:general_implemenration}
\end{algorithm}

\section{Simulation Results}
\label{section:simulation}

We consider an unbounded workspace \textit{i.e.}, obstacle $\mathcal{O}_0 = \emptyset$, with $2$ three-dimensional, convex obstacles, as shown in Fig. \ref{figure:result_1}. We apply the proposed hybrid feedback controller \eqref{hybrid_control_law} for the robot initialized at $8$ different locations in the obstacle-free workspace. The target is located at the origin. The radius of the robot is set to $0.2 m$. The minimum safety distance $r_s = 0.1m$ and the parameter $\gamma = 0.5m$. The gains $\kappa_s$ and $\kappa_r$, used in \eqref{control_u}, are set to $1$ and $2$, respectively. The sensing radius $R_s$, used in \eqref{sensor_visible_boundary}, is set to $2m$. The parameter $\epsilon$, used in \eqref{partition_rm}, is set to $0.5m$. From Fig. \ref{figure:result_1}, it can be observed that the robot converges to the target location while simultaneously avoiding collisions with the obstacles. Fig. \ref{figure:result_2} shows that the center of the robot stays at least $r_a$ meters away from the boundary of the obstacles. 

\begin{figure}[ht]
    \centering
    \includegraphics[width = 0.7\linewidth]{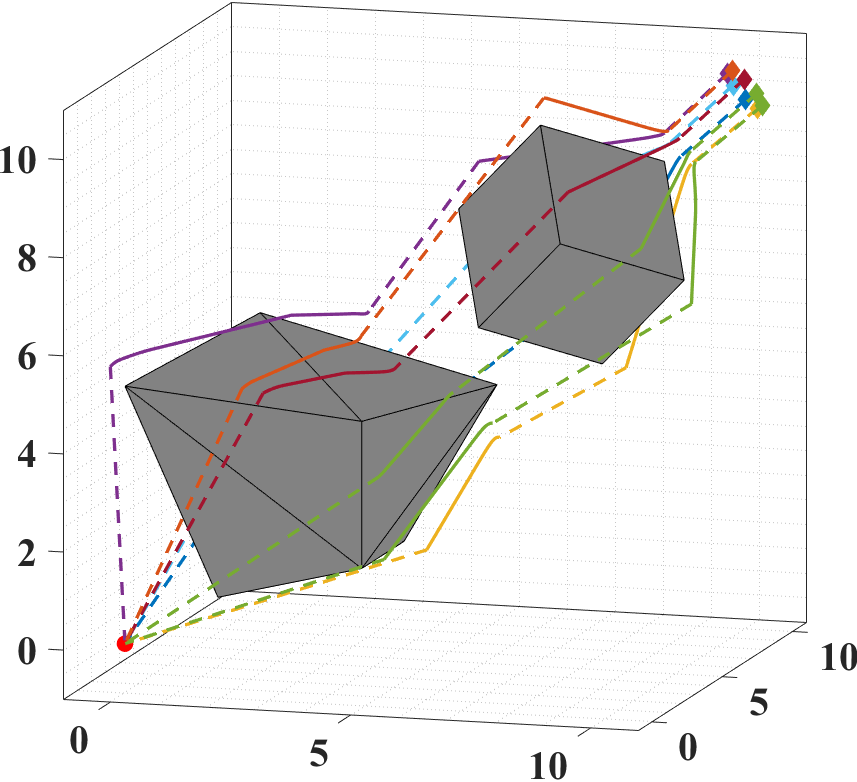}
    \caption{Robot trajectories starting from different locations.}
    \label{figure:result_1}
\end{figure}

\begin{figure}[ht]
    \centering
    \includegraphics[width = 1\linewidth]{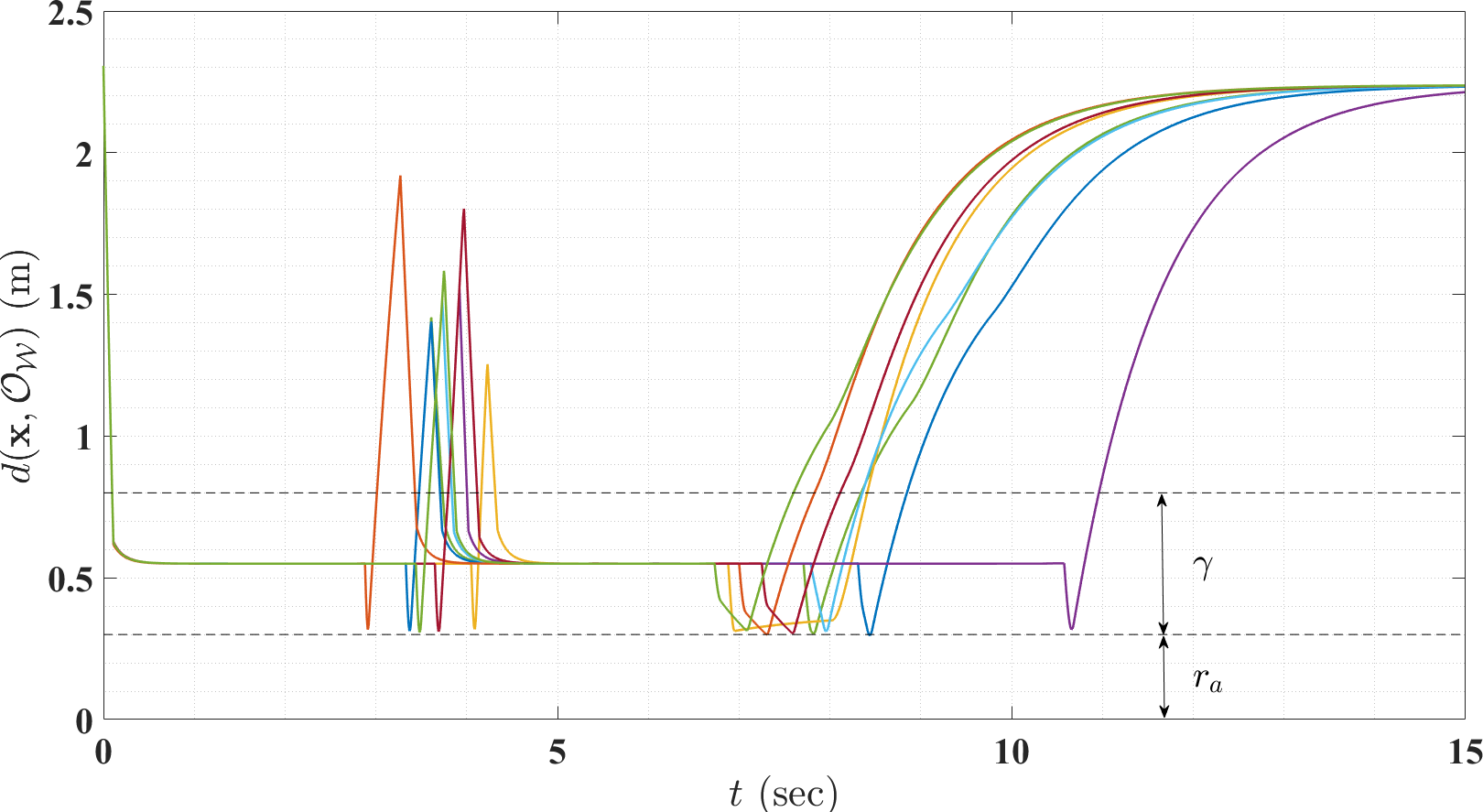}
    \caption{Distance of the center of the robot $\mathbf{x}$ from the boundary of the obstacle occupied workspace $\mathcal{O}_{\mathcal{W}}$.}
    \label{figure:result_2}
\end{figure}

We now consider a two-dimensional workspace containing 12 convex obstacles. A point robot is initialized at 24 different locations along the workspace boundary, as depicted in Fig. \ref{multiple_initial_2D}, with the target located at the origin. The safety distance $r_s$ is set to $0.1\,m$. The parameters $\gamma$ and $\epsilon$ are set to $0.2\,m$ and $0.5\,m$, respectively. The gain parameters are $\kappa_s = 1$ and $\kappa_r = 2$, and the sensing radius $R_s$ is set to $1\,m$. In Fig. \ref{multiple_initial_2D}, dashed trajectories represent the robot's motion in the \textit{move-to-target} mode, while solid trajectories indicate motion in the \textit{obstacle-avoidance} mode. Under the hybrid feedback controller \eqref{hybrid_control_law}, the robot safely converges to the target from all initial locations.

\begin{figure}[ht]
    \centering
    \includegraphics[width=1\linewidth]{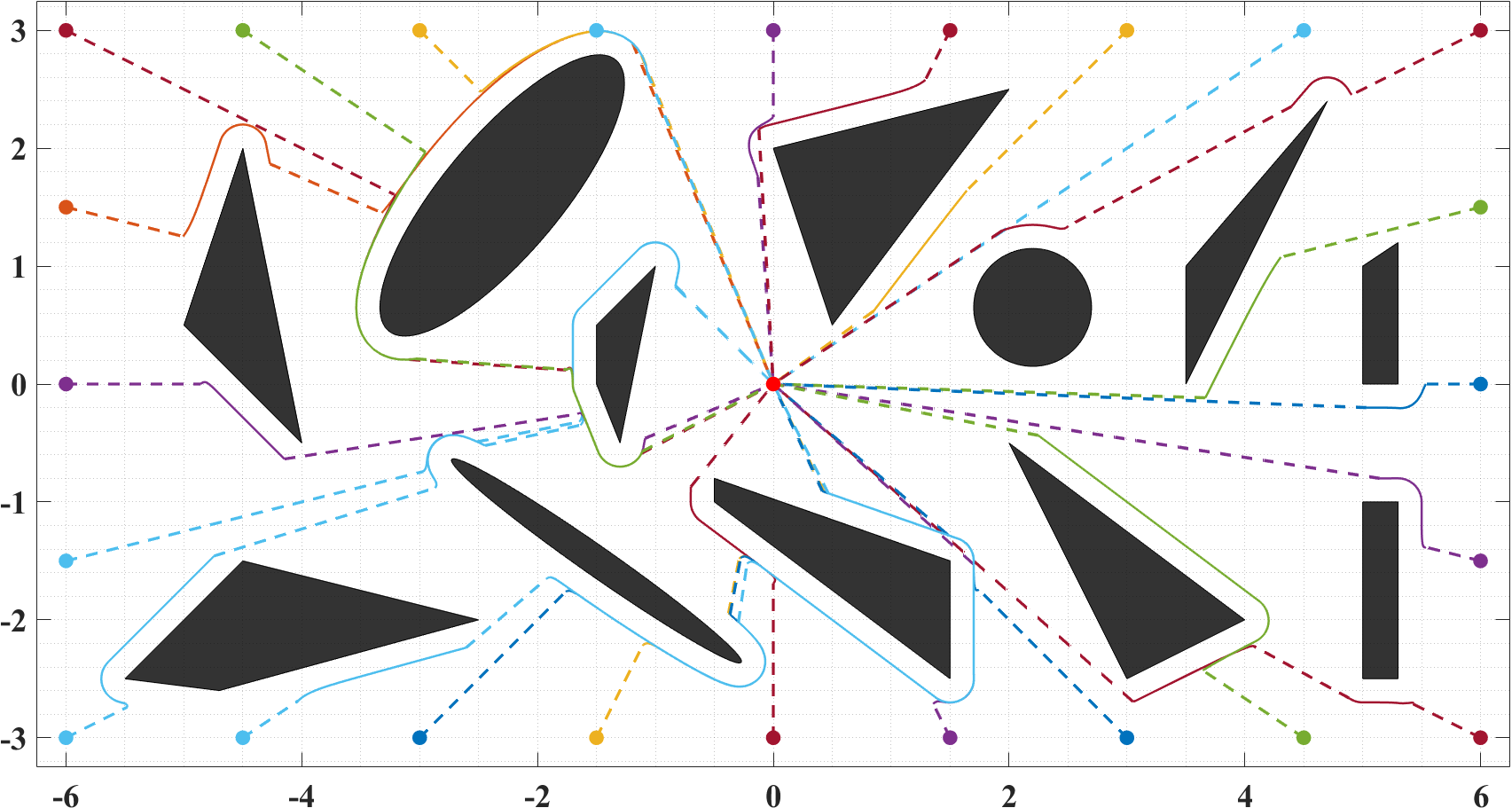}
    \caption{Safe convergence of robot trajectories to the target location.}
    \label{multiple_initial_2D}
\end{figure}

Next, we examine the effect of varying the sensing radius $R_s$ on the robot's trajectories. The robot with radius $r = 0.15m$ is initialized at $[0, 9.5]^\top$ in a two-dimensional workspace with a single convex obstacle, as shown in Fig. \ref{effect_of_Rs_on_trajectories}. The target location is set at the origin. The safety distance $r_s$ is set to $0.05m$. The parameters $\gamma$ and $\epsilon$ are set to $1.5m$ and $0.5m$, respectively. The gains are $\kappa_s = 1$ and $\kappa_r = 2$. In Fig. \ref{effect_of_Rs_on_trajectories}, the blue portions of the trajectories correspond to the motion of the robot in \textit{move-to-target} mode, while red portions correspond to the motion in the \textit{obstacle-avoidance} mode. 
When the sensing radius is relatively small, the robot cannot detect all points on the boundary of obstacle \( \mathcal{O}_i \) that are within its line of sight but outside the sensing region $\mathcal{B}_{R_s}(\mathbf{x})$. As a result, the robot has only partial information about the obstacle's boundary. Consequently, the robot may switch to the \textit{move-to-target} mode even when it does not have a clear line of sight to the target, as shown in Figs. \ref{R170}, \ref{R250}, and \ref{R340}. 

\begin{figure}[ht]
\centering
\subfloat[][]{\includegraphics[width =0.25\linewidth]{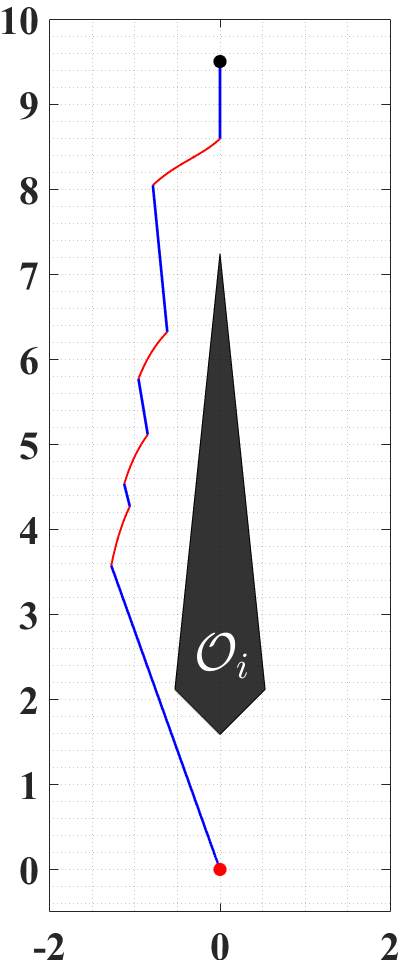}\label{R170}}
\subfloat[][]{\includegraphics[width =0.25\linewidth]{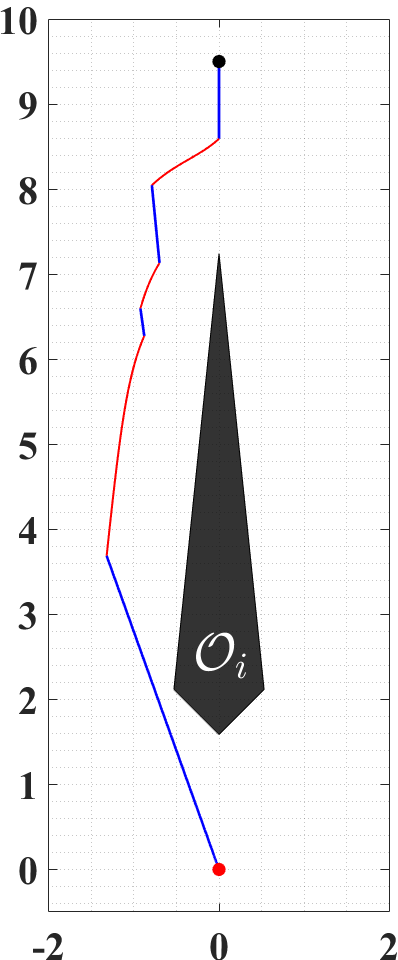}\label{R250}}
\subfloat[][]{\includegraphics[width =0.25\linewidth]{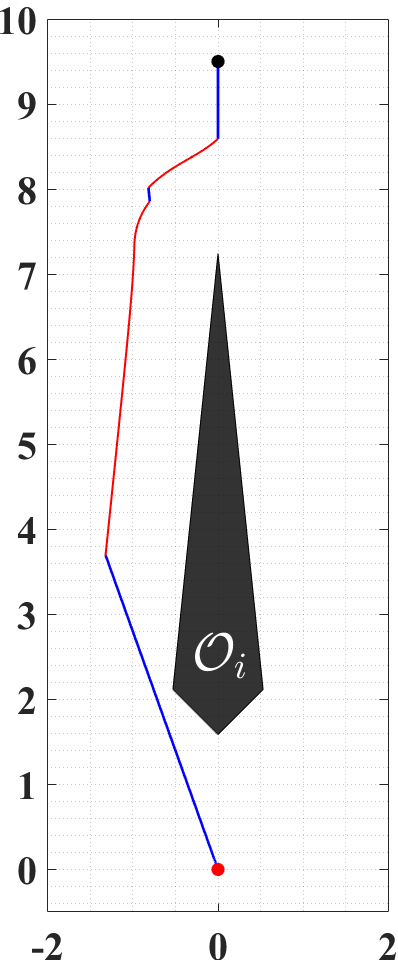}\label{R340}}
\subfloat[][]{\includegraphics[width =0.25\linewidth]{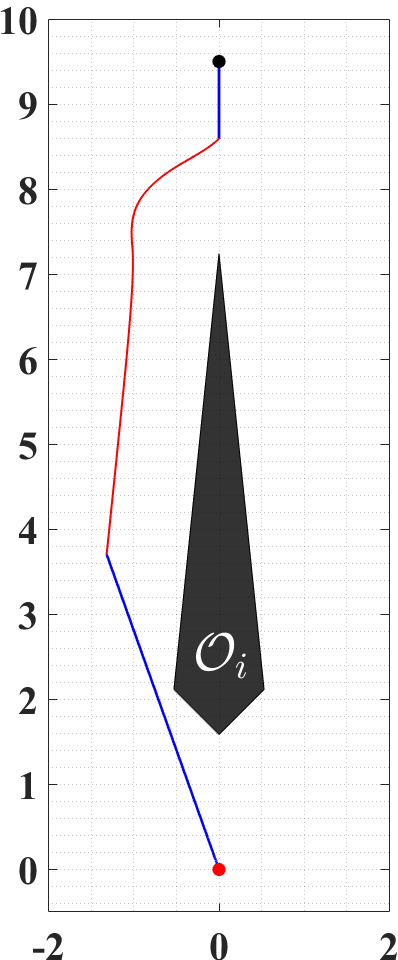}\label{R400}}
\caption{Robot trajectories for different values of the sensing radius $R_s$: (a) $R_s = 1.7m$, (b) $R_s = 2.5m$, (c) $R_s = 3.4m$, (d) $R_s = 4m$.}
\label{effect_of_Rs_on_trajectories}
\end{figure}

We next provide a comparison with the separating hyperplane approach developed in \cite{arslan2019sensor}. Similar to our approach, this approach can be implemented in \textit{a priori} unknown environments using the information obtained via a range-bearing sensor mounted on the robot. Contrary to our approach, this approach only works for convex obstacles that satisfy the obstacle curvature condition \cite[Assumption 2]{arslan2019sensor}. 
When the workspace consists of obstacles that do not satisfy the obstacle curvature condition, the separating hyperplane approach generates trajectories that converge to an undesired equilibrium (local minimum), as shown in Fig. \ref{comparison:hyperplane_not_working}. On the other hand, the proposed hybrid feedback controller guarantees safe, global asymptotic convergence to the target location, as seen in Fig. \ref{comparison:hybrid_working}. 
The complete simulation video can be found at\footnote{[Online]. Available: \url{https://youtu.be/R6OowaF6dTc}}.

\begin{figure}
\centering
\subfloat[][]{\includegraphics[width =0.492\linewidth]{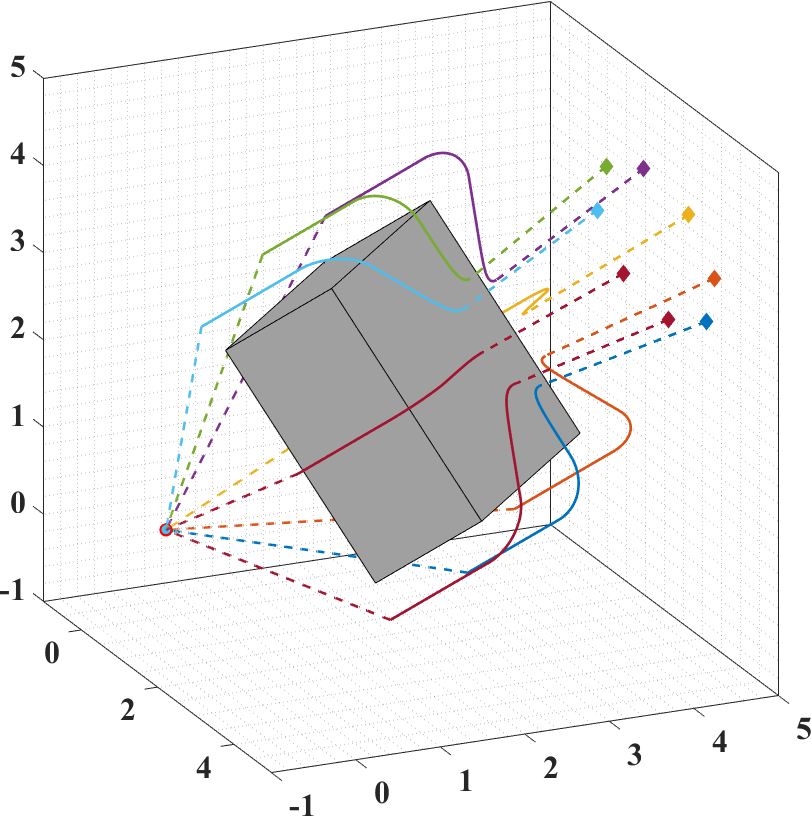}\label{comparison:hybrid_working}}
\subfloat[][]{\includegraphics[width =0.492\linewidth]{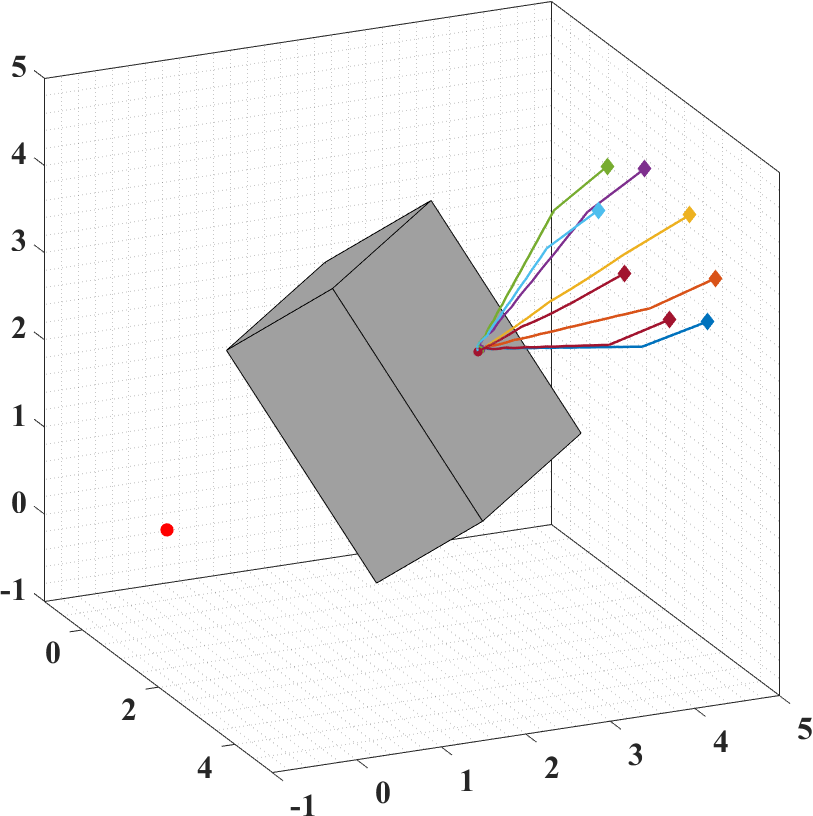}\label{comparison:hyperplane_not_working}}
\caption{(a) Robot trajectories obtained using our proposed hybrid feedback approach. (b) Robot trajectories obtained using the separating hyperplane approach \cite{arslan2019sensor}. }
\label{figure:comparison}
\end{figure}

\begin{figure}[ht]
    \centering
    \includegraphics[width = 0.8\linewidth]{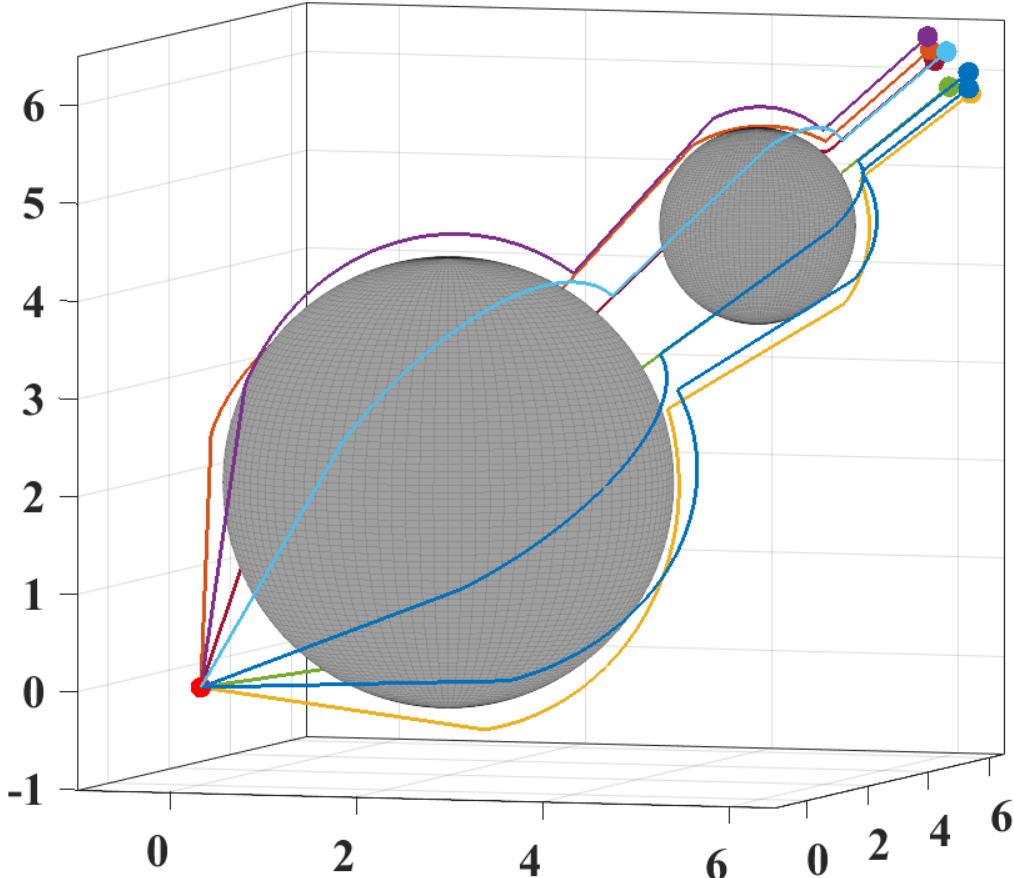}
    \caption{Robot trajectories safely navigating around spherical obstacles and converging to the target location at the origin.}
    \label{result_sphere_world}
\end{figure}

In the next simulation scenario, as shown in Fig. \ref{result_sphere_world}, we consider 2 three-dimensional spherical obstacles and apply the proposed hybrid controller \eqref{hybrid_control_law} with modifications mentioned in Section \ref{sec:sphere_world}, for a point robot initialized at $8$ different locations in the obstacle-free workspace. The safety distance $r_s$ is set to $0.15m$ and the parameter $\gamma = 0.15 m.$ The gains $\kappa_s$ and $\kappa_r$, used in \eqref{control_u}, are set to $0.5$. The sensing radius $R_s$, used in \eqref{sensor_visible_boundary}, is set to $2m$. The parameter $\epsilon$, used in \eqref{partition_rm}, is set to $0.05m$. Similar to the previous simulations, it can be observed from Fig. \ref{result_sphere_distance} that the robot converges asymptotically to the target location without colliding with the obstacles. Since the obstacles are spheres, the distance between the robot and the target location monotonically decreases as the robot converges to the target location, as stated in Theorem \ref{theorem:sphere_global_stability} and as shown in Fig. \ref{sphere_target_distance}.

\begin{figure}[ht]
    \centering
    \subfloat[][]{\includegraphics[width = 0.48\linewidth]{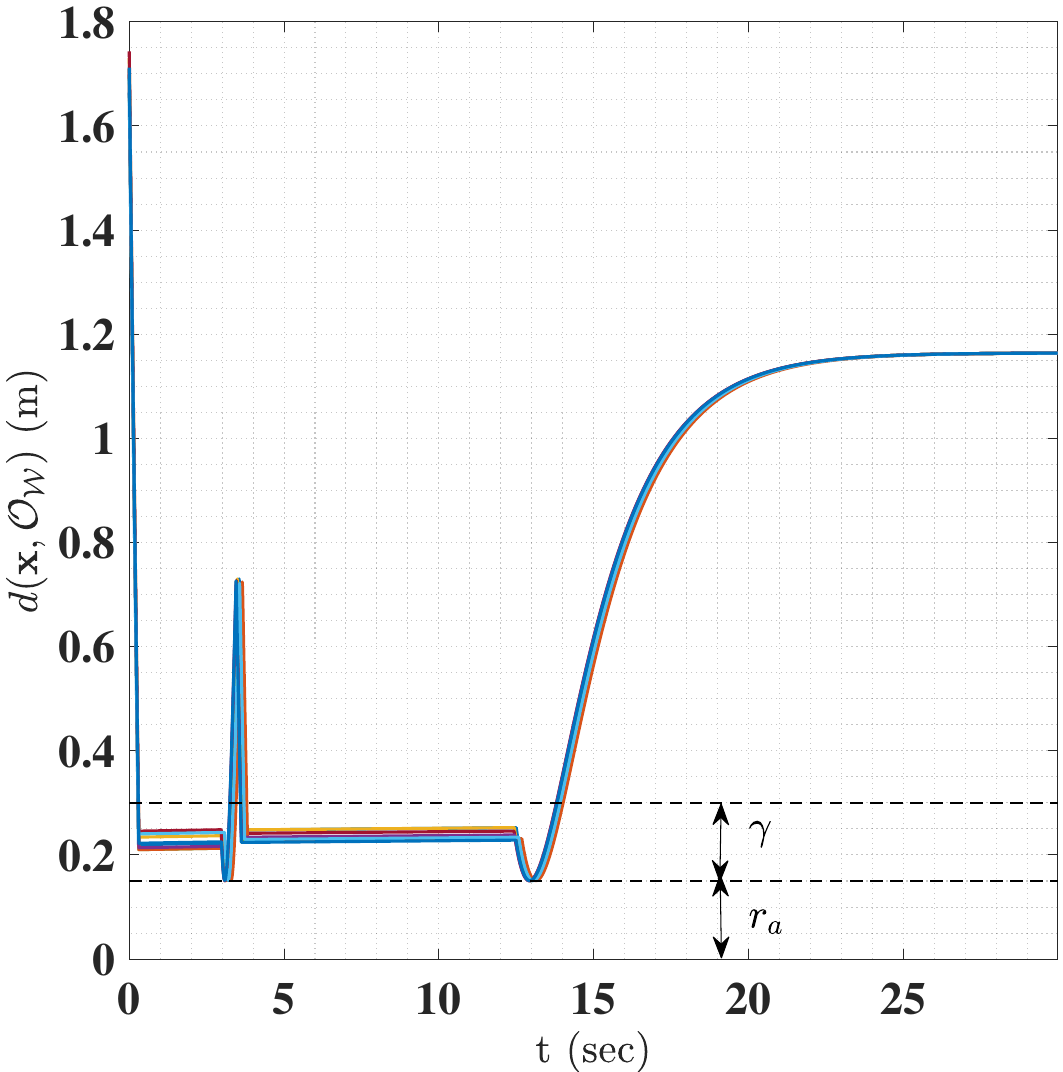}\label{sphere_obs_distance}}
    \subfloat[][]{\includegraphics[width = 0.48\linewidth]{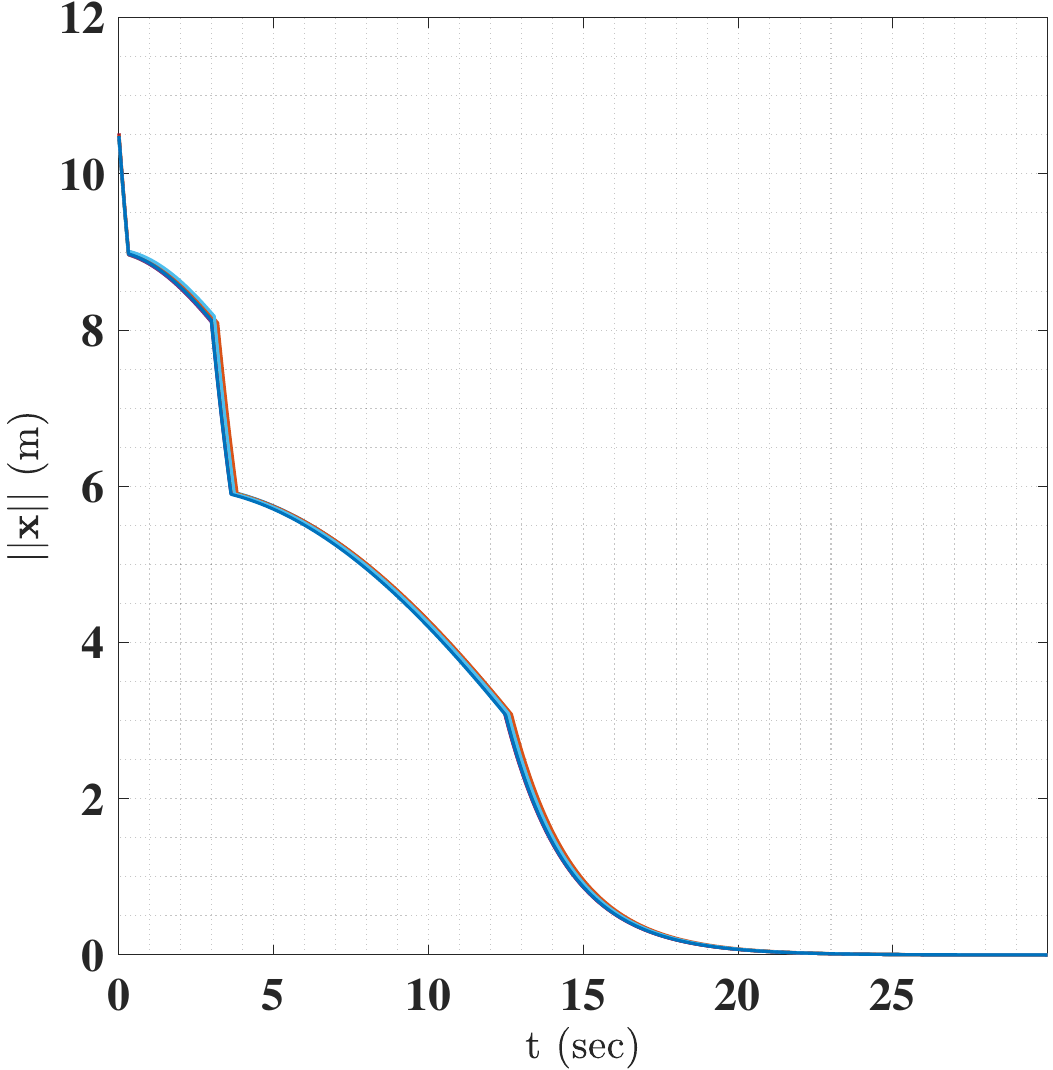}\label{sphere_target_distance}}
    \caption{(a) Distance between the robot $\mathbf{x}$ and the obstacles $\mathcal{O}_{\mathcal{W}}$ versus time. (b) Distance between the robot $\mathbf{x}$ and the target location versus time.}
    \label{result_sphere_distance}
\end{figure}

\begin{figure}[ht]
    \centering
    \subfloat[][]{\includegraphics[width = 0.48\linewidth]{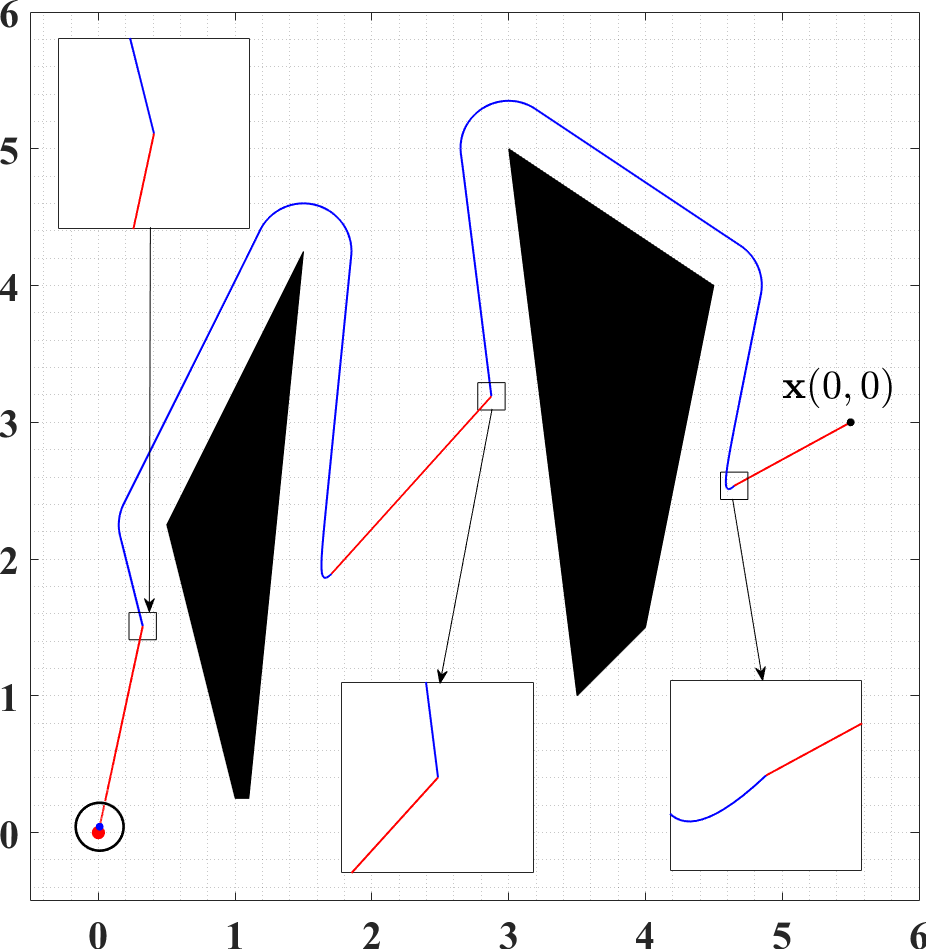}\label{non_smooth_trajectory}}
    \subfloat[][]{\includegraphics[width = 0.48\linewidth]{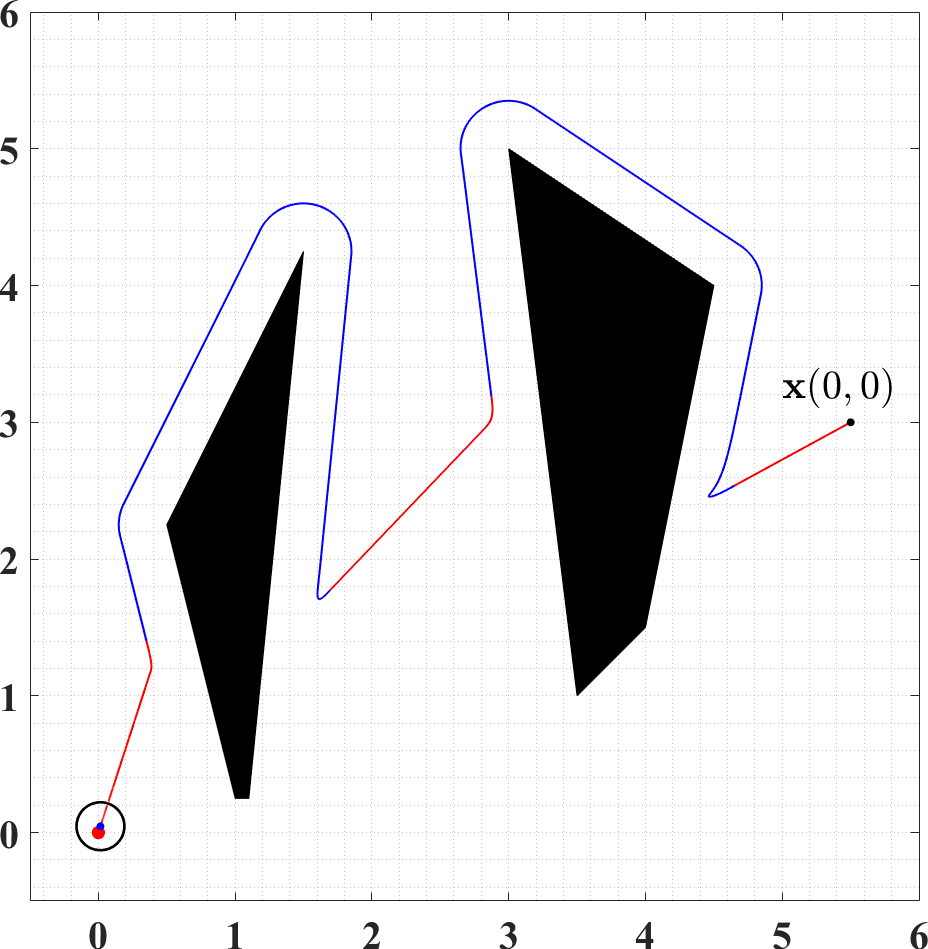}\label{smooth_trajectory}}
    \caption{(a) Robot trajectory for the hybrid closed-loop system \eqref{hybrid_closed_loop_system} using the control input vector $\mathbf{u}(\xi)$. (b) Robot trajectory for the smoothed hybrid closed-loop system \eqref{smoothed_hybrid_closed_loop_system} using the control input vector $\mathbf{u}_{sm}(\xi, \tau)$.}
\label{smooth_non_smooth_tarjectories}
\end{figure}

We compare the robot trajectories generated by the hybrid closed-loop system \eqref{hybrid_closed_loop_system} with the smoothed robot trajectories obtained from the modified hybrid closed-loop system \eqref{smoothed_hybrid_closed_loop_system}.
The environment contains two convex obstacles and the target location is positioned at the origin, as shown in Fig. \ref{smooth_non_smooth_tarjectories}. The robot, with a radius of $r = 0.175 m$, is initialized at the point $[5.5, 3]^\top$. The safety distance $r_s$ is set to $0.025m$, and the parameter $\gamma$ is set to $0.3m$. The control gains $\kappa_s$ and $\kappa_r$ are set to 0.2 and 2, respectively. The sensing radius $R_s$ is set to $3m$, and the parameter $\epsilon$ is set to $1m$. The time parameters $t_s$ and $\tau_s$, used in \eqref{defintion_tau}, are set to $0.5$ seconds and $1$ second, respectively. 

In Fig. \ref{smooth_non_smooth_tarjectories}, the red portions of the trajectories represent the motion of the robot in the \textit{move-to-target} mode, whereas the blue portions correspond to the motion in the \textit{obstacle-avoidance} mode. Figure \ref{norm_control} shows the time-evolution of the magnitude of the control input, while Fig. \ref{angle_control} depicts the direction of the control input represented by the angle between the control input vector and the $x$-axis. Given $\mathbf{p} = [p_1, p_2]^\top \in \mathbb{R}^2$, the angle between $\mathbf{p}$ and the $x$-axis is computed as:
\begin{equation} \mathbf{atan2v}(\mathbf{p}) = \mathbf{mod }(\mathbf{atan2}(p_2/p_1) + 2\pi, 2\pi). \end{equation}
In Fig. \ref{norm_control} and Fig. \ref{angle_control}, the red trajectories represent the control input vector $\mathbf{u}$, while the blue trajectories correspond to $\mathbf{u}_{sm}$.

Generally, the magnitude and direction of the control input vector $\mathbf{u}$ change instantaneously when switching between modes, as illustrated in Fig. \ref{norm_control} and Fig. \ref{angle_control}, respectively. This can sometimes cause an abrupt change in the robot's direction of motion when the control input $\mathbf{u}$ switches between modes, as shown in Fig. \ref{non_smooth_trajectory}. 
On the other hand, the magnitude and direction of the control input vector $\mathbf{u}_{sm}$, used in the modified hybrid closed-loop system \eqref{smoothed_hybrid_closed_loop_system}, change continuously, as depicted in Fig. \ref{norm_control} and Fig. \ref{angle_control}, respectively. Therefore, the control input vector $\mathbf{u}_{sm}$ generates smooth robot trajectories, as shown in Fig. \ref{smooth_trajectory}.

\begin{figure}
    \centering
    \includegraphics[width = 1\linewidth]{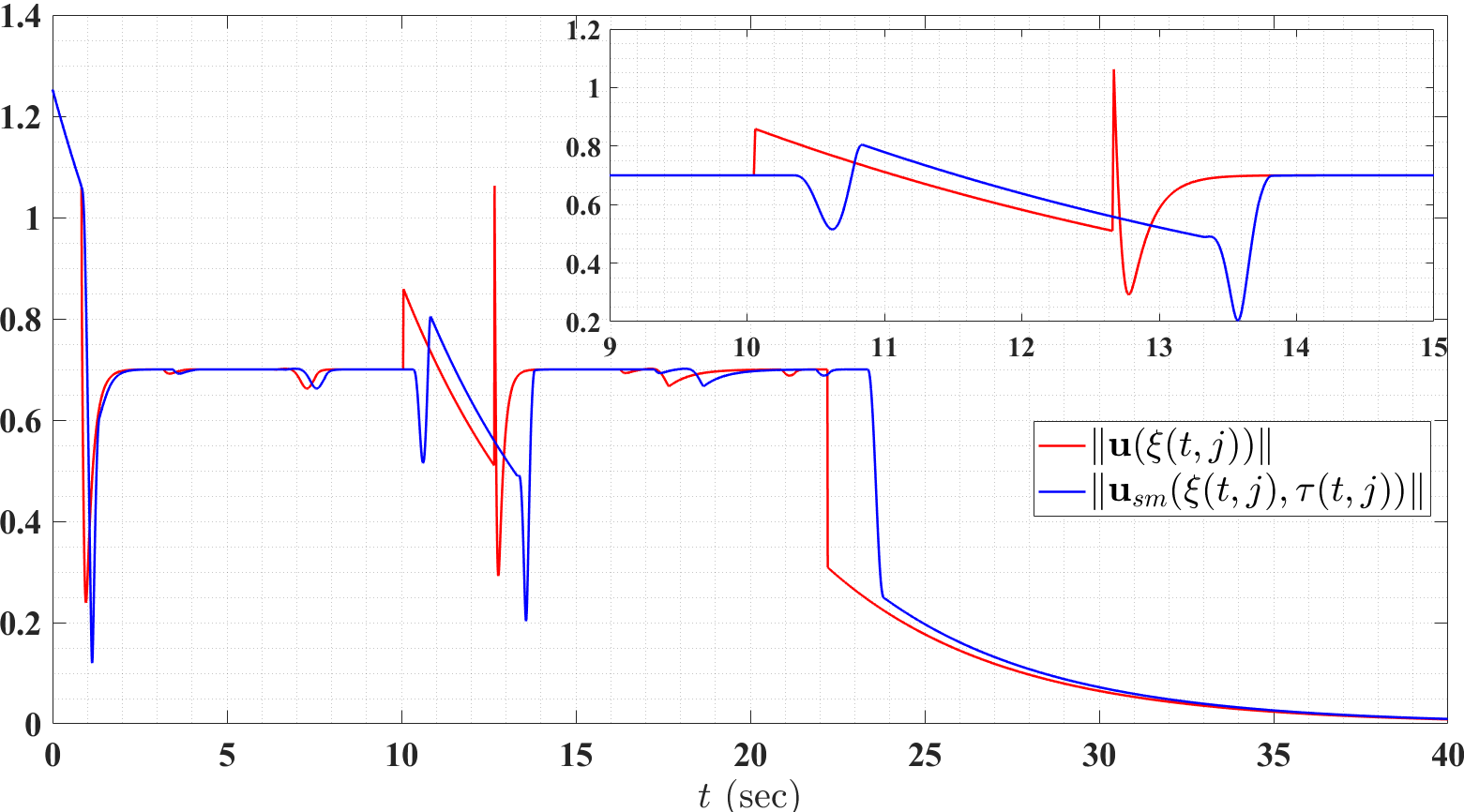}
    \caption{Magnitude of the control input vector versus time. The red trajectory represents $\|\mathbf{u}\|$, while the blue trajectory represents $\|\mathbf{u}_{sm}\|$.}
    \label{norm_control}
\end{figure}

\begin{figure}
    \centering
    \includegraphics[width = 1\linewidth]{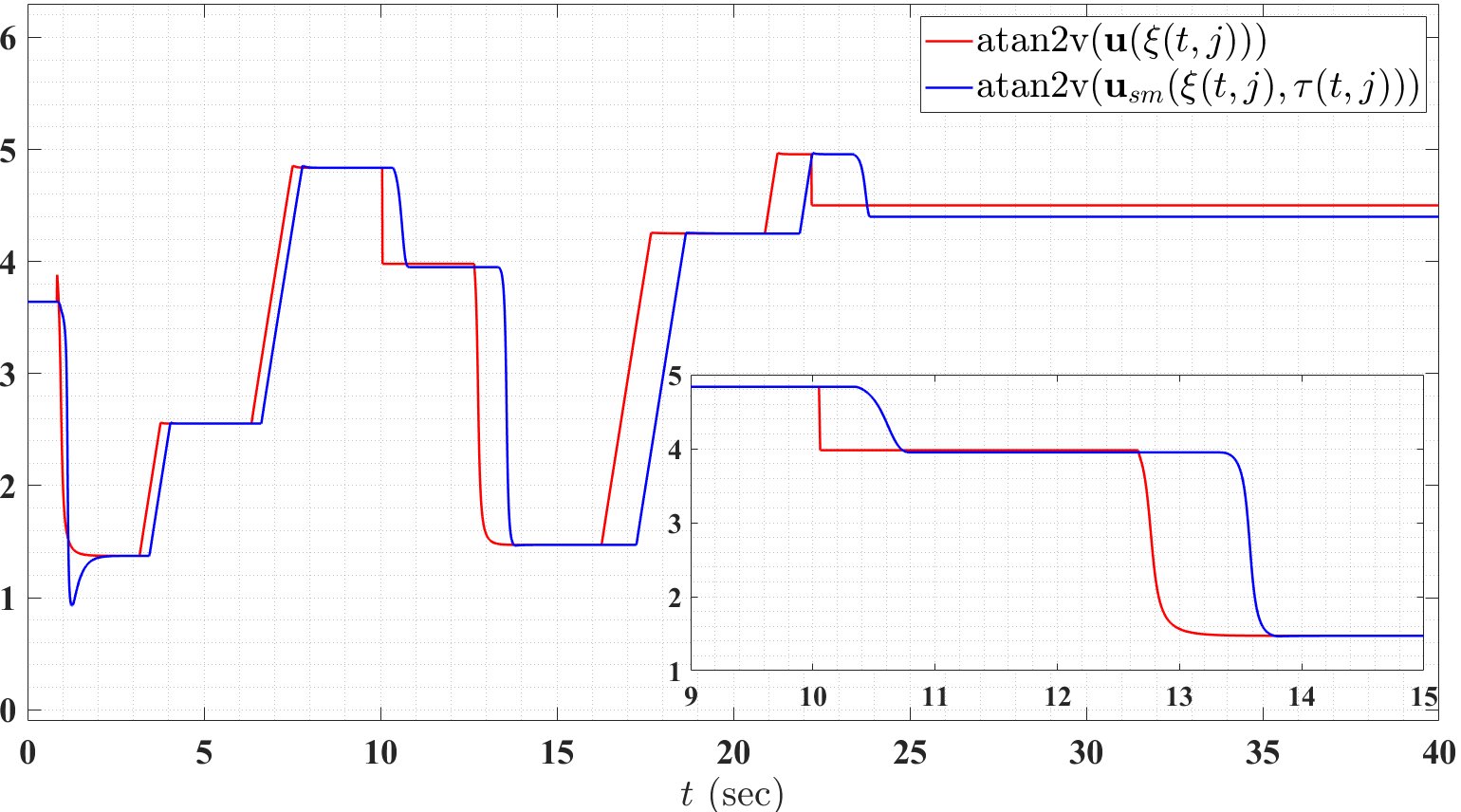}
    \caption{Plot of the angle between the control input vectors and the positive $x$-axis versus time. The red trajectory represents $\mathbf{atan2v}(\mathbf{u})$, while the blue trajectory represents $\mathbf{atan2v}(\mathbf{u}_{sm})$.}
    \label{angle_control}
\end{figure}

\section{Experimental validation}\label{section:experimental_validation}
{We use the TurtleBot 4 Standard mobile robot shown in Fig. \ref{TBT4} to implement the proposed control input $\mathbf{u}_{sm}$ defined in \eqref{smoothed_control_u}.
Since TurtleBot 4 accepts velocity commands in the form of linear and angular velocities, its motion is represented using the unicycle model:
\begin{equation}\label{unicycle_model}
    \begin{aligned}
    \dot{\mathbf{x}} &= v\begin{bmatrix}\cos(\theta) & \sin(\theta)\end{bmatrix}^\top,\\
    \dot{\theta} &=\omega,
    \end{aligned}
\end{equation}
where $\theta\in(-\pi, \pi]$ is the heading angle of the robot relative to the positive $x$-axis of the world frame, and $v$ and $\omega$ are the linear and angular velocity inputs.

\begin{figure}
    \centering
    \includegraphics[width=0.55\linewidth]{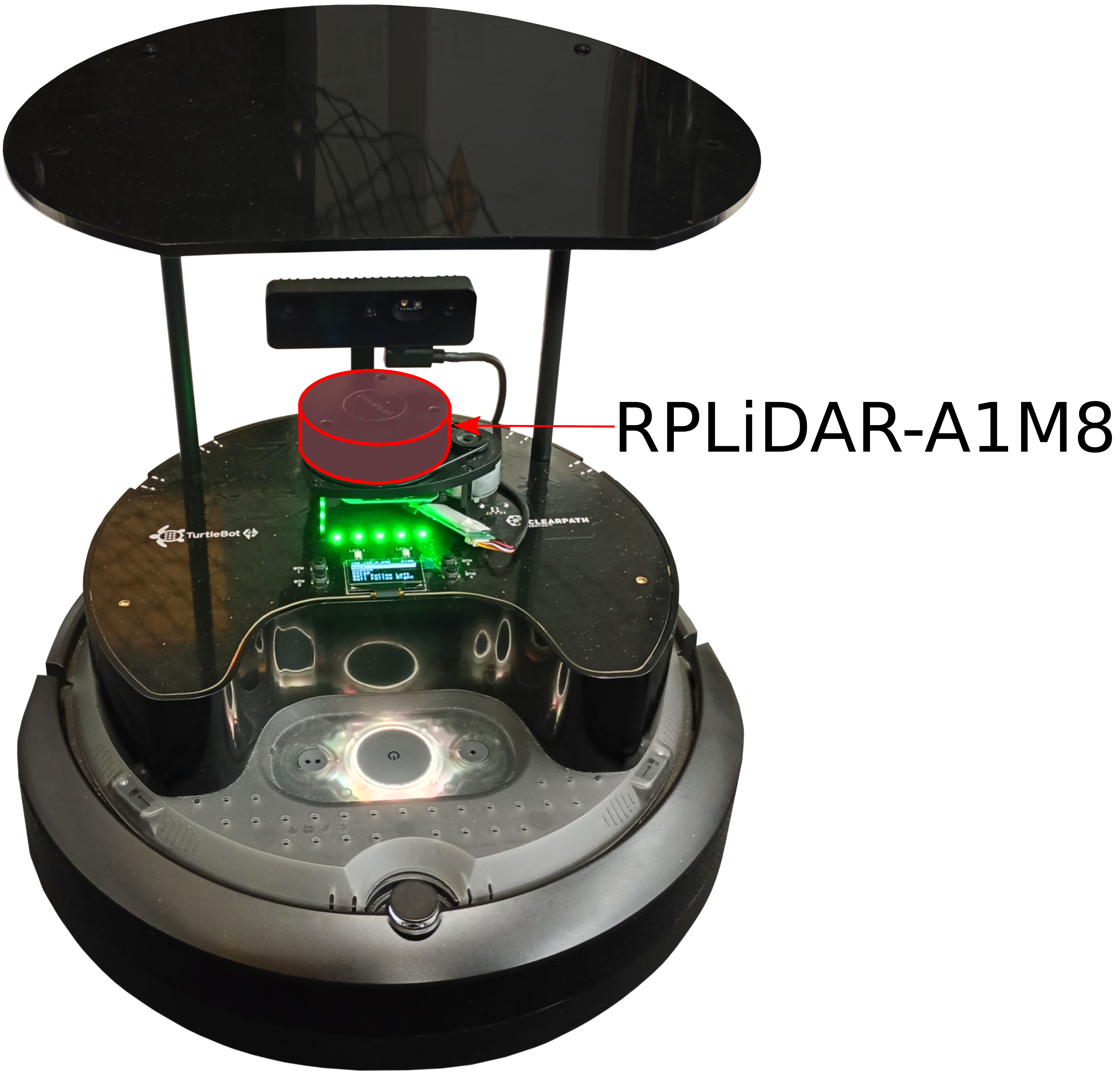}
    \caption{TurtleBot 4 Standard mobile robot.}
    \label{TBT4}
\end{figure}

The proposed control input $\mathbf{u}_{sm}$ defined in \eqref{smoothed_control_u} is used to obtain $v$ and $\omega$ as follows:
\begin{equation}\label{linear_and_angular_velocity_computation}
    \begin{aligned}
        v &= \min\left\{v_{\max}, k_v\|\mathbf{u}_{sm}\|\left(\cos\bigg(\frac{\Delta\theta}{2}\bigg)\right)^{2p}\right\},\\
        \omega &= \min\left\{\omega_{\max}, \max\left\{-\omega_{\max}, k_{\omega}\sin\left(\frac{\Delta\theta}{2}\right)\right\}\right\} ,
    \end{aligned}
\end{equation}
where $k_v>0$ is the linear velocity gain, $k_{\omega} > 0$ is the angular velocity gain, $v_{\max}>0$ is the upper bound on the linear velocity, and $\omega_{\max}>0$ is the upper bound on the absolute angular velocity.
The heading error $\Delta\theta\in(-\pi, \pi]$ is calculated as
\begin{equation}
    \Delta\theta = \operatorname{wrap}_{(-\pi, \pi]}(\theta_d - \theta),
\end{equation}
where $\theta_d = \operatorname{atan2}(u_{sm2}, u_{sm1})$ is the desired heading angle and $\mathbf{u}_{sm} = \left[u_{sm1}, u_{sm2}\right]^\top$.
If $\theta_d - \theta \notin (-\pi,\pi]$, then $\operatorname{wrap}_{(-\pi,\pi]}(\theta_d-\theta)$ maps the angle to its equivalent representative in $(-\pi,\pi]$.
The term $\left(\cos\left(\frac{\Delta\theta}{2}\right)\right)^{2p}$ in \eqref{linear_and_angular_velocity_computation} reduces the linear velocity of the Turtlebot 4 as the heading error $\Delta\theta$ increases.

\subsection{Experimental settings}
The experiments were conducted on TurtleBot 4 Standard platform using ROS 2 Humble on Ubuntu 22.04.
The control algorithm was executed on an external computer equipped with an Intel(R) Core(TM) i7-8550U CPU @ 1.80 GHz and 16 GB RAM, running Ubuntu 22.04 and ROS 2 Humble.
The computer communicated with the TurtelBot 4 over WiFi.
The robot provided odometry through the $\operatorname{/odom}$ topic and LiDAR measurements through the $\operatorname{/scan}$ topic, while the computer sent velocity commands through the $\operatorname{/cmd\_vel}$ topic.

Since the LiDAR measurements are expressed in the LiDAR frame, they were transformed into the robot frame before being used to implement the control law.
The workspace contains five convex obstacles as shown in Fig. \ref{experiment:real_workspace}.
The initial position of the TurtbleBot 4 is set as the origin of the inertial frame, with its heading aligned with the positive $x$-axis.
The target is set to $[4.2, 2.4]^\top$ m.
The parameters involved in \eqref{smoothed_control_u} and \eqref{linear_and_angular_velocity_computation} that are used in the experiment are summarized in Table \ref{experiment:parameter_table}.
\begin{table}[]
    \centering
    \begin{tabular}{|c|c|}
      \hline Parameter &  Value \\\hline
       Robot radius + safety distance $r_a$  & $0.18$ m\\\hline
       Sensing radius $R_s$ in \eqref{sensor_visible_boundary} & $1$ m\\\hline
       $\gamma_a$, $\gamma_s$ and $\gamma$ in \eqref{scalar_function_eta_definition} & $0.03$, $0.15$ and $0.17$ m\\\hline
       $\epsilon$ in \eqref{partition_rm} & $0.1$ m\\\hline
       $\delta_s$ and $\tau_s$ used in \eqref{J_s_set_definition} & $0.15$ and $0.2$ seconds\\\hline
       Gains $\kappa_s$ and $\kappa_r$ in \eqref{smoothed_control_u} & $1$ and $1$\\\hline
       Gains $k_v$ and $k_{\omega}$ in \eqref{linear_and_angular_velocity_computation}& $1$ and $2$\\\hline
       Maximum linear velocity $v_{\max}$ & $0.1$ m/s\\\hline
       Maximum angular velocity $\omega_{\max}$ & $1.9$ rad/s\\\hline
       $p$ in \eqref{linear_and_angular_velocity_computation} & 5\\\hline
    \end{tabular}
    \caption{Control parameters used in the experimental setup}
    \label{experiment:parameter_table}
\end{table}

\subsection{Experimental results}
The results in Fig. \ref{experiment:real_workspace} and Fig. \ref{experiment:time_profiles} illustrate the safe navigation of the TurtleBot 4 from the initial position to the target position.
The video of the experiment is available online\footnote{[Online]. Available: \url{https://youtu.be/kOQRXWgXRUM}}.
Fig. \ref{experiment:real_workspace} shows a long-exposure-style trail of the TurtleBot 4 trajectory, obtained by retaining the highest color intensity at each pixel over the duration of the experiment.
In Fig. \ref{simulated_workspace}, the red squares indicate the positions of hit points where the control input switches from the \textit{move-to-target} mode to the \textit{obstacle-avoidance} mode.
The purple squares indicate the positions where the control input switches from the \textit{obstacle-avoidance} mode back to the \textit{move-to-target} mode.

\begin{figure}
    \centering
    \subfloat[][]{\includegraphics[width = 1\linewidth]{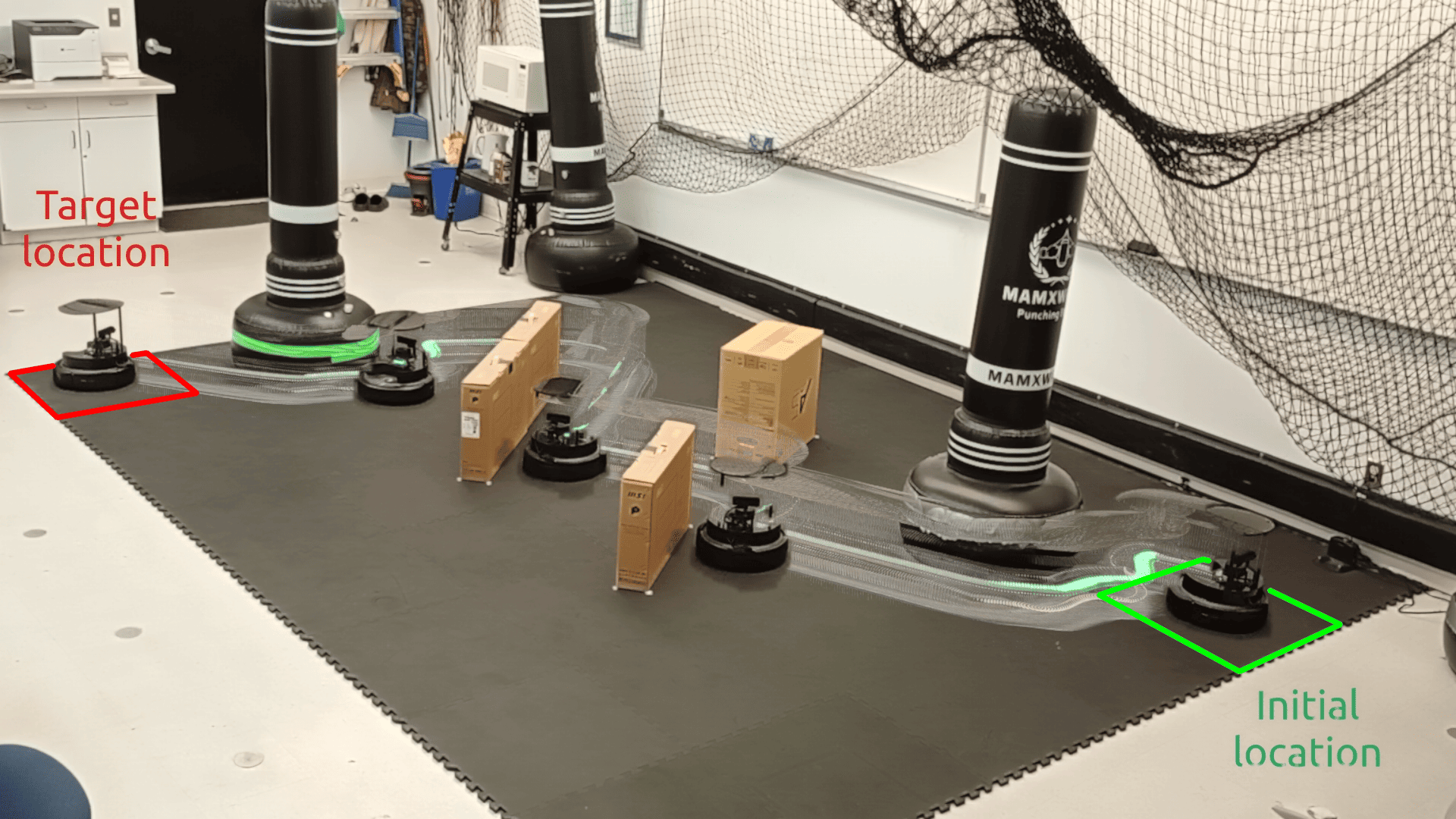}\label{real_workspace}}\\
    \subfloat[][]{\includegraphics[width = 1\linewidth]{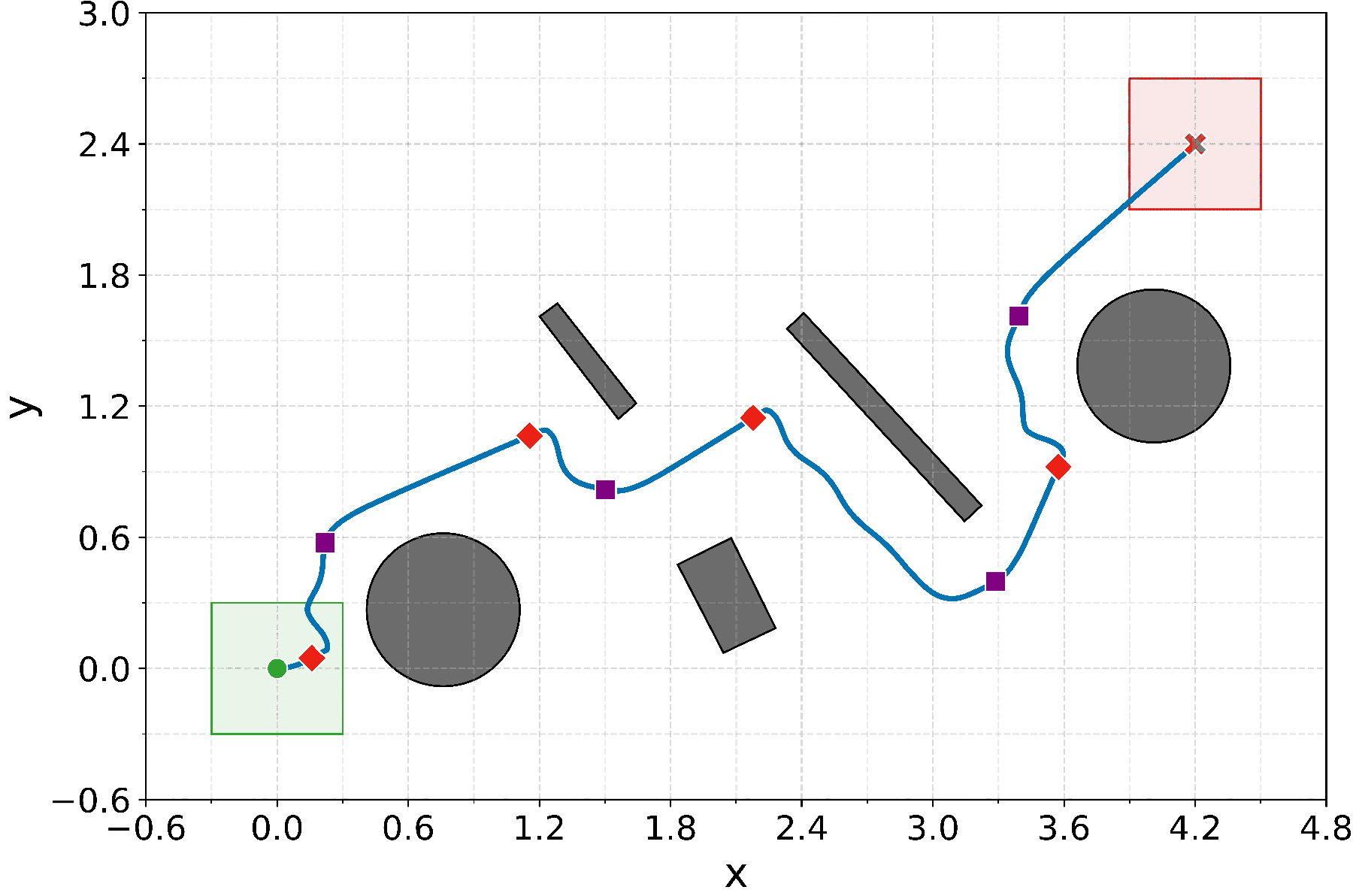}\label{simulated_workspace}}
    \caption{(a) Workspace configuration with highlighted initial (green) and target (red) regions and a long-exposure-style trail of the TurtleBot 4 trajectory. (b) Top-view trajectory of the TurtleBot 4.}
    \label{experiment:real_workspace}
\end{figure}

\begin{figure}
    \centering
    \subfloat[][]{\includegraphics[width = 1\linewidth]{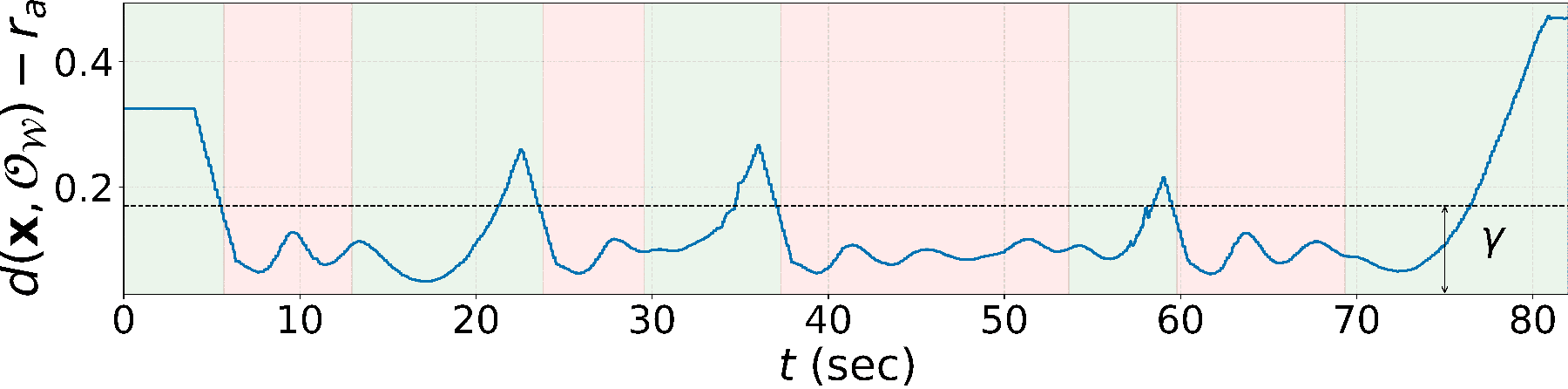}}\\
    \subfloat[][]{\includegraphics[width = 1\linewidth]{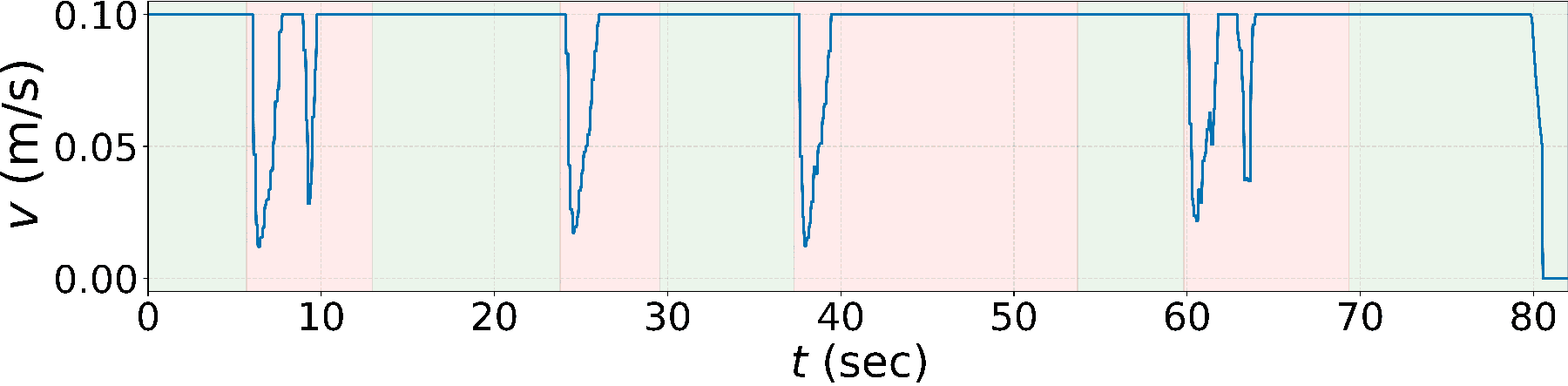}\label{linear_velocity}}\\
    \subfloat[][]{\includegraphics[width = 1\linewidth]{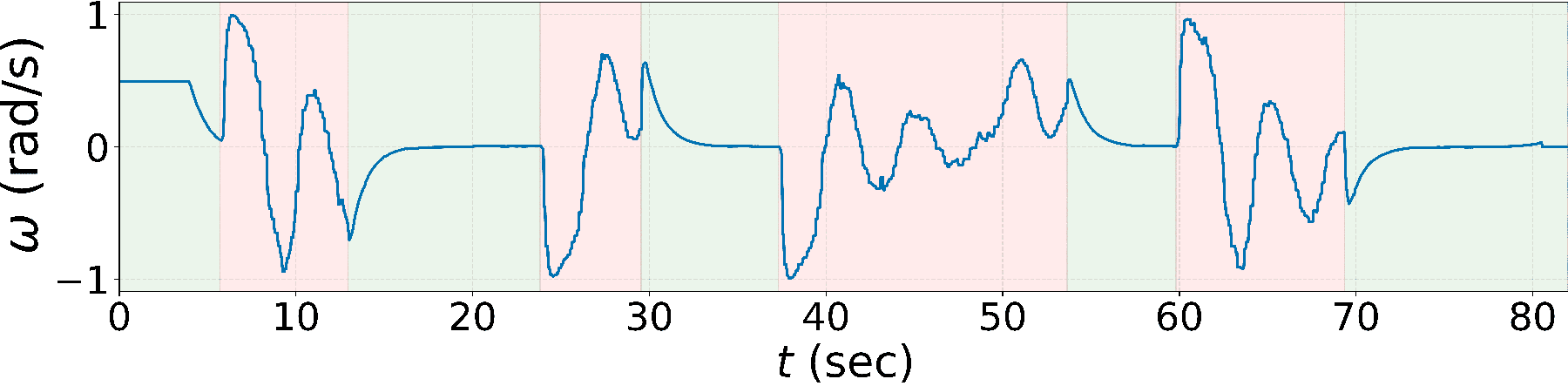}\label{angular_velocity}}
    \caption{Plots of $d(\mathbf{x}, \mathcal{O}_{\mathcal{W}}) - r_a$, linear velocity $v$, and angular velocity $\omega$ versus time for the TurtleBot 4 trajectory shown in Fig. \ref{simulated_workspace}. The green and red shaded regions correspond to the robot operating in the \textit{move-to-target} and \textit{obstacle-avoidance} modes, respectively.}
    \label{experiment:time_profiles}
\end{figure}

}

\section{Conclusion}\label{sec:conclusion}
We propose a hybrid feedback controller for safe autonomous robot navigation in $n-$dimensional environments with arbitrarily-shaped convex obstacles. These obstacles may have nonsmooth boundaries and large sizes, and can be arbitrarily located, provided they meet certain mild disjointedness requirements, as per Assumption \ref{3d_obstacle_separation}. The proposed hybrid controller guarantees global asymptotic stability of the target location in the obstacle-free workspace. 
The obstacle-avoidance component of the control law relies on the projection of the robot's center onto the obstacle being avoided, enabling applications in \textit{a priori} unknown environments, as discussed in Section \ref{implementation_procedure}. The proposed hybrid feedback control law generates discontinuous control inputs when switching between modes. A smoothing mechanism has been suggested to overcome this problem in practical applications. Extending our approach to robots, with second-order dynamics, navigating in $n-$dimensional environments with non-convex obstacles would be an interesting future work.

\begin{appendix}
\subsection{Proof of Lemma \ref{lemma:superset_landing_region}}\label{proof:lemma_superset_landing_region} First, we show that for any $R_s > 0$, it holds that $\mathcal{R}_a^i\subset\mathcal{R}_l^i$. As shown in Fig. \ref{landing_region_proof_image}, let us partition $\mathcal{R}_a^i$ into two mutually exclusive sets as follows:
\begin{equation}
    \mathcal{R}_a^i = \mathcal{R}_1\cup\mathcal{R}_2,
\end{equation}
where the sets $\mathcal{R}_1$ and $\mathcal{R}_2$ are given by
\begin{equation}
    \begin{aligned}
        \mathcal{R}_1 &= \{\mathbf{x}\in\mathcal{R}_a^i|\eth_i^{\mathbf{x}}\cap\left(\mathcal{C}_{\mathcal{L}}(\mathbf{x}, 2r_a)\cap(\mathcal{D}_{r_a}(\mathcal{L}_s(\mathbf{x}, \mathbf{0})))^{\circ}\right)\ne\emptyset\},\\
        \mathcal{R}_2 &= \mathcal{R}_a^i\setminus\mathcal{R}_1.
    \end{aligned}\nonumber
\end{equation}

When $\mathbf{x}\in\mathcal{R}_1$, it is straightforward to notice that $\mathbf{x}\in\mathcal{R}_l^i$, where $\mathcal{R}_l^i$ is defined in \eqref{landing_region}. Therefore, we proceed to prove that $\mathbf{x}\in\mathcal{R}_2\implies\mathbf{x}\in\mathcal{R}_l^i$.
Since $\mathcal{O}_i$ is a convex obstacle, when $\mathbf{x}\in\mathcal{R}_2$, one has $d(\mathcal{L}_s(\mathbf{x}, \mathbf{0}), \eth_i^{\mathbf{x}}) = d(\mathcal{L}_s(\mathbf{x}, \mathbf{0}), \mathcal{O}_i) = r_a$. Now, if we show that $\mathbf{x}^\top\mathbf{x}_{\pi} \geq 0$ for all $\mathbf{x}\in\mathcal{R}_2$, then, as per \eqref{landing_region},  one can conclude that $\mathbf{x}\in\mathcal{R}_2\implies\mathbf{x}\in\mathcal{R}_l^i$.

For $\mathbf{x}\in\mathcal{R}_2$, there exists $\mathbf{p}\in\eth_i^{\mathbf{x}}$ such that $\mathbf{p}\in\mathcal{C}_{\mathcal{L}}(\mathbf{x}, 2r_a)$. Therefore, it is true that $\mathbf{p}\in\mathcal{H}_{\leq}(\mathbf{x}, \mathbf{x})\cap\partial\mathcal{O}_i$. Now, let us assume that $\mathbf{x}^\top\mathbf{x}_{\pi} <0$, which implies that $\Pi(\mathbf{x}, \mathcal{O}_i)\in\mathcal{H}_{>}(\mathbf{x}, \mathbf{x})$. Since $\mathbf{p}\in\partial\mathcal{O}_i$ and $\Pi(\mathbf{x}, \mathcal{O}_i)\in\partial\mathcal{O}_i$, and obstacle $\mathcal{O}_i$ is convex, one has $\mathcal{L}_s(\mathbf{p}, \Pi(\mathbf{x}, \mathcal{O}_i))\subset\mathcal{O}_i$. Furthermore, since $\mathbf{p}\in\mathcal{C}_{\mathcal{L}}(\mathbf{x}, 2r_a)\cap\mathcal{H}_{\leq}(\mathbf{x}, \mathbf{x})$ and $\Pi(\mathbf{x}, \mathcal{O}_i)\in\mathcal{H}_{>}(\mathbf{x}, \mathbf{x})$, it is true that $\mathcal{L}_s(\mathbf{p},\Pi(\mathbf{x}, \mathcal{O}_i))\cap\left(\mathcal{B}_{\|\mathbf{x}_{\pi}\|}(\mathbf{x})\right)^{\circ} \ne \emptyset$. This implies that there exists $\mathbf{q}\in\mathcal{L}_s(\mathbf{p}, \Pi(\mathbf{x}, \mathcal{O}_i))\cap\mathcal{O}_i$ such that $\|\mathbf{x} - \mathbf{q}\| < \|\mathbf{x}_{\pi}\|$, which is a contradiction. Therefore, for $\mathbf{x}\in\mathcal{R}_2$, one can conclude that $\mathbf{x}^\top\mathbf{x}_{\pi} \geq 0$, and hence it is proved that for any $R_s > 0$, $\mathcal{R}_a^i\subset\mathcal{R}_l^i$.

Next, we prove that if $R_s > l_i$, then $\mathcal{R}_a^i = \mathcal{R}_l^i$. Notice that the implication $\mathbf{x}\in\mathcal{R}_a^i\implies \mathbf{x}\in\mathcal{R}_l^i$ has been proved earlier for any $R_s > 0$. Therefore, we focus on the backward implication and show that when $R_s > l_i$, $\mathbf{x}\in\mathcal{R}_l^i\implies\mathbf{x}\in\mathcal{R}_a^i$. For $\mathbf{x}\in\mathcal{R}_l^i$, there exists $\mathbf{p}\in\partial\mathcal{O}_i$ such that $d(\mathbf{p}, \mathcal{L}_s(\mathbf{x}, \mathbf{0})) \leq r_a$, $\mathcal{L}_s(\mathbf{x}, \mathbf{p})\cap\mathcal{O}_i=\mathbf{p}$ and $\mathbf{x}^\top(\mathbf{x} - \mathbf{p})\geq 0$. Since $R_s > l_i$, it is straightforward to notice that $\mathbf{p}\in\eth_i^{\mathbf{x}}$. Therefore, according to \eqref{Individual_avoidance_region}, one can conclude that when $R_s > l_i$, $\mathbf{x}\in\mathcal{R}_l^i\implies\mathbf{x}\in\mathcal{R}_a^i$.

\begin{figure}[ht]
\centering
\includegraphics[width = 0.7\linewidth]{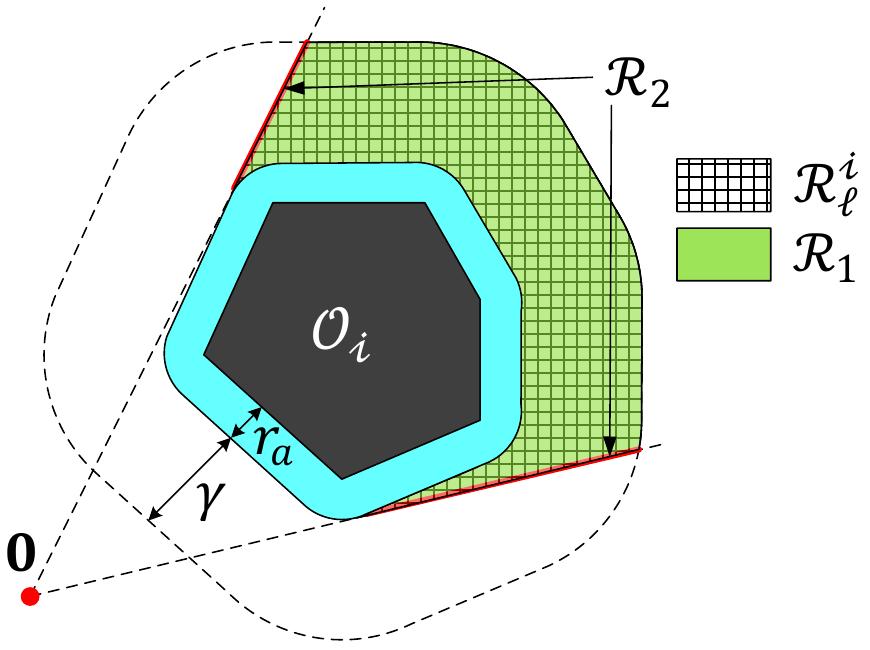}
\caption{The representation of the sets $\mathcal{R}_1$ and $\mathcal{R}_2$.}
\label{landing_region_proof_image}
\end{figure}

\subsection{Proof of Lemma \ref{lemma:epsilon_exists}}
\label{proof_of_epsilon_exists}
Since the target location at the origin $\mathbf{0}$ belongs to the interior of the obstacle-free workspace $\mathcal{W}_{r_a}^{\circ}$, there exists some distance between the target location and the $r_a-$dilated obstacle $\mathcal{D}_{r_a}(\mathcal{O}_i)$.
In other words, since $\mathbf{0}\in\mathcal{W}_{r_a}^{\circ}, $ there exists $\bar{\delta} > 0$ such that $d(\mathbf{0},\mathcal{D}_{r_a}(\mathcal{O}_i)) = \bar{\delta}.$
Notice that, according to \eqref{jumpset_movetotarget}, the set $\mathcal{J}_0^{\mathcal{W}}\cap\mathcal{N}_{\gamma}(\mathcal{D}_{r_a}(\mathcal{O}_i))$ belongs to the \textit{avoidance} region associated with obstacle $\mathcal{O}_i$ \textit{i.e.,}
$\left(\mathcal{J}_{0}^{\mathcal{W}}\cap\mathcal{N}_{\gamma}(\mathcal{D}_{r_a}(\mathcal{O}_i))\right)\subset\mathcal{R}_a^i$. In addition, as per Lemma \ref{lemma:superset_landing_region}, $\mathcal{R}_a^i\subset\mathcal{R}_l^i$, where $\mathcal{R}_l^i$ is defined in \eqref{landing_region}. Therefore, it is clear that $\left(\mathcal{J}_0^{\mathcal{W}}\cap\mathcal{N}_{\gamma}(\mathcal{D}_{r_a}(\mathcal{O}_i))\right)\subset\mathcal{R}_l^i$. According to \eqref{landing_region}, 
$\Pi(\mathbf{0}, \mathcal{D}_{r_a}(\mathcal{O}_i))$ does not belong to the set $\mathcal{R}_l^i$. As a result, one has $d(\mathbf{0}, \mathcal{R}_l^i)>\bar{\delta}.$ Hence, it is clear that $\mathcal{B}_{\bar{\delta}}(\mathbf{0})\cap\mathcal{N}_{\gamma}(\mathcal{D}_{r_a}(\mathcal{O}_i))\cap\mathcal{J}_0^{\mathcal{W}} = \emptyset,$ where $\mathcal{B}_{\bar{\delta}}(\mathbf{0})\cap\mathcal{N}_{\gamma}(\mathcal{D}_{r_a}(\mathcal{O}_i)) \ne \emptyset.$ Hence, one can set $\bar{\epsilon}\in(0, d(\mathbf{0}, \mathcal{R}_l^i)-\bar{\delta}]$ to ensure that $\mathcal{ER}^{\mathbf{h}}\cap\mathcal{N}_{\gamma}(\mathcal{D}_{r_a}(\mathcal{O}_i))\ne \emptyset$ for all $\mathbf{h}\in\mathcal{J}_0^{\mathcal{W}}\cap\mathcal{N}_{\gamma}(\mathcal{D}_{r_a}(\mathcal{O}_i))$, for any $\epsilon\in(0, \bar{\epsilon}]$, where the set $\mathcal{ER}^{\mathbf{h}}$ is defined in \eqref{partition_rm}.

\subsection{Proof of Lemma \ref{basic-conditions}}
\label{proof_for_basic_conditions}

The flow set $\mathcal{F}$ and the jump set $\mathcal{J}$, defined in \eqref{bothmodes_flowjumpset} are by construction closed subsets of $\mathbb{R}^n\times\mathbb{R}^n\times\mathbb{R}^n\times\mathbb{R}\times\mathbb{R}.$ Hence, condition 1 in Lemma \ref{lemma:hybrid_basic_conditions_for_3D} is satisfied.

Since the flow map $\mathbf{F}(\xi)$ is defined for all $\xi\in\mathcal{F}$, one has $\mathcal{F}\subset\text{ dom }\mathbf{F}.$ The flow map $\mathbf{F},$ given in \eqref{hybrid_closed_loop_system}, is continuous on $\mathcal{F}_0.$ Next, we verify the continuity of $\mathbf{F}$ on $\mathcal{F}_1.$ 
Since $\gamma\in(0, \delta- r_s)$, the sets $\mathcal{N}_{\gamma}(\mathcal{D}_{r_a}(\mathcal{O}_i))$, for all $i\in\mathbb{I}$, are disjoint. Since obstacles $\mathcal{O}_i, i\in\mathbb{I}\setminus\{0\}$ are convex, for all locations $\mathbf{x}\in\mathcal{N}_{\gamma}(\mathcal{D}_{r_a}(\mathcal{O}_i)),i\in\mathbb{I}\setminus\{0\}$, the closest point, from $\mathbf{x}$, on the boundary of the nearest obstacle $\Pi(\mathbf{x}, \mathcal{O}_i)$ is unique. Furthermore, according to \eqref{flowset_obstacleavoidance}, the set $\mathcal{F}_{1}^{\mathcal{W}}\subset\bigcup_{i\in\mathbb{I}\setminus\{0\}}\mathcal{N}_{\gamma}(\mathcal{D}_{r_a}(\mathcal{O}_{i}))$. Hence, according to \cite[Lemma 4.1]{rataj2019curvature} and \eqref{flowset_obstacleavoidance}, $\Pi(\mathbf{x}, \mathcal{O}_{\mathcal{W}})$ is continuous for all $\mathbf{x}\in\mathcal{F}_{1}^{\mathcal{W}}$. Hence, the obstacle-avoidance control vector $\kappa_r\mathbf{v}(\mathbf{x}, \mathbf{h}, \mathbf{a})$, used in \eqref{control_u}, is continuous for all locations $\mathbf{x}\in\mathcal{F}_{1}^{\mathcal{W}}$ with the unit vector $\mathbf{a}$ chosen as per \eqref{updatelaw_part1}. As a result, $\mathbf{F}$ is continuous on $\mathcal{F}_1$ and as such it is continuous on $\mathcal{F}.$ This shows fulfillment of condition 2 in Lemma \ref{lemma:hybrid_basic_conditions_for_3D}. 

Since the jump map $\mathbf{J}(\xi)$ is defined for all $\xi\in\mathcal{J}$, one has $\mathcal{J}\subset\text{ dom }\mathbf{J}.$
The jump map $\mathbf{J}$, defined in \eqref{hybrid_closed_loop_system}, is single-valued on $\mathcal{J}_1$. Hence, according to \cite[Definitions 5.9 and 5.14]{goebel2012hybrid}, the jump map $\mathbf{J}$ is outer semicontinuous and locally bounded relative to $\mathcal{J}_1$.

Finally, we prove that the jump map $\mathbf{J}$ is outer semicontinuous and locally bounded relative to $\mathcal{J}_0$. According to \eqref{updatelaw_part1} and \eqref{hybrid_closed_loop_system}, the jump map $\mathbf{J}$ is single-valued for the state vector $(\mathbf{x}, \mathbf{h}, m, s)$ on $\mathcal{J}_0$. Consider the jump map $\mathbf{J}$ for the state $\mathbf{a}$ on $\mathcal{J}_0.$
We show that the set-valued mapping $\mathbf{A}:\mathbb{R}^n\rightrightarrows\mathbb{S}^{n-1}$, used in \eqref{updatelaw_part1}, is outer semicontinuous and locally bounded. To that end, consider any sequence $\{\mathbf{q}_i\}_{i\in\mathbb{N}}\subset\mathbb{R}^n$ that converges to some $\mathbf{q}\in\mathbb{R}^n.$ 
Let us assume that the sequence $\{\mathbf{p}_i\}_{i\in\mathbb{N}}$ converges to some $\mathbf{p}\in\mathbb{S}^{n-1}$, where $\mathbf{p}_i \in \mathbf{A}(\mathbf{q}_i)$ \eqref{update_law_for_vector_a}.
Note that $\mathbf{p}_i^\top\mathbf{q}_i = 0$ and if $\mathbf{q}_i^\times\mathbf{q}_{i\pi}\ne\mathbf{0}$, then $\mathbf{p}_i\in\mathcal{P}(\mathbf{q}_i, \mathbf{q}_{i\pi})$, where $\mathbf{q}_{i\pi} = \mathbf{q}_i - \Pi(\mathbf{q}_i, \mathcal{O}_{\mathcal{W}})$.
Therefore, one can conclude that $\mathbf{p}^\top\mathbf{q} = 0$ and if $\mathbf{q}^\times\mathbf{q}_{\pi}\ne\mathbf{0}$, then $\mathbf{p}\in\mathcal{P}(\mathbf{q}, \mathbf{q}_{\pi})$, and as such $\mathbf{p} \in \mathbf{A}(\mathbf{q})$. Hence,  according \cite[Definition 5.9]{goebel2012hybrid}, the mapping $\mathbf{A}$ is outer semicontinuous relative to $\mathcal{J}_0$. Since $\text{ rge }\mathbf{A} = \mathbb{S}^{n-1}\subset\mathbb{R}^n$ is bounded, according to \cite[Definition 5.14]{goebel2012hybrid}, the set-valued mapping $\mathbf{A}$ is locally bounded, where the range of $\mathbf{A}$ is defined as per \cite[Definition 5.8]{goebel2012hybrid}.
Hence, $\mathbf{J}$ is outer semi-continuous and locally bounded relative to $\mathcal{J}_0$. This shows the fulfillment of condition 3 in Lemma \ref{lemma:hybrid_basic_conditions_for_3D}.

\subsection{Proof of Lemma \ref{lemma:set_invariance}}\label{proof:lemma_invariance}
First, we prove that the union of the flow and jump sets covers exactly the obstacle-free state space $\mathcal{K}$. For $m = 0$, according to \eqref{jumpset_movetotarget} and \eqref{flowset_movetotarget}, by construction we have $\mathcal{F}_{0}^{\mathcal{W}}\cup\mathcal{J}_{0}^{\mathcal{W}} = \mathcal{W}_{r_a}.$ Similarly, for $m=1$, according to \eqref{jumpset_obstacleavoidance} and \eqref{flowset_obstacleavoidance}, by construction one has $\mathcal{F}_1^{\mathcal{W}}\cup\mathcal{J}_1^{\mathcal{W}} = \mathcal{W}_{r_a}$. Inspired by \cite[Appendix 11]{berkane2021obstacle}, the satisfaction of the following equation:
\begin{equation}
\mathcal{F}_m^{\mathcal{W}}\cup\mathcal{J}_m^{\mathcal{W}}= \mathcal{W}_{r_a}, m\in\mathbb{M},
\end{equation}
along with \eqref{mtt_jumpset}, \eqref{mtt_flowset}, \eqref{oa_jumpset}, \eqref{oa_flowset}, and \eqref{bothmodes_flowjumpset} implies $\mathcal{F}\cup\mathcal{J} = \mathcal{K}.$

Now, inspired by \cite[Appendix 1]{berkane2021obstacle}, for the hybrid closed-loop system \eqref{hybrid_closed_loop_system}, with data $\mathcal{H}_{\mathcal{S}} = (\mathcal{F}, \mathbf{F}, \mathcal{J}, \mathbf{J}),$ define $\mathbf{S}_{\mathcal{H}_{\mathcal{S}}}(\mathcal{K})$ as the set of all maximal solutions $\xi$ to $\mathcal{H}_{\mathcal{S}}$ with $\xi(0, 0) \in\mathcal{K}$.
Since $\mathcal{F}\cup\mathcal{J} = \mathcal{K}$, each $\xi\in\mathbf{S}_{\mathcal{H}_{\mathcal{S}}}(\mathcal{K})$ satisfies $\xi(t, j)\in\mathcal{K}$ for all $(t, j)\in\dom {\xi}$, where the domain $\dom \xi$ is defined in \cite[Definition 2.3]{goebel2012hybrid}.
Additionally, if every maximal solution $\xi\in\mathbf{S}_{\mathcal{H}_{\mathcal{S}}}(\mathcal{K})$ is complete, then the set $\mathcal{K}$ will be forward invariant \cite[Definition 3.13]{sanfelice2021hybrid}. Since the hybrid closed-loop system \eqref{hybrid_closed_loop_system} satisfies the hybrid basic conditions, as stated in Lemma \ref{basic-conditions}, one can use \cite[Proposition 6.10]{goebel2012hybrid}, to verify the following viability condition:
\begin{equation}
    \mathbf{F}(\xi) \cap\mathbf{T}_{\mathcal{F}}(\xi) \ne \emptyset, \forall\xi\in\mathcal{F}\setminus\mathcal{J},\label{viability_condition}
\end{equation}
which will allow us to establish the completeness of the solution $\xi$ to the hybrid closed-loop system \eqref{hybrid_closed_loop_system}. In \eqref{viability_condition}, $\mathbf{T}_{\mathcal{F}}(\xi)$ represents the tangent cone\footnote{The tangent cone to a set $\mathcal{K}\subset\mathbb{R}^n$ at $\mathbf{x}\in\mathbb{R}^n$, denoted by $\mathbf{T}_{\mathcal{K}}(\mathbf{x})$, is defined as in \cite[Def. 5.12 and Fig. 5.4]{goebel2012hybrid}.} to the set $\mathcal{F}$ at $\xi$.

Let $(\mathbf{x}, \mathbf{h}, \mathbf{a}, m, s)\in\mathcal{F}\setminus\mathcal{J}$, which implies by virtue of \eqref{mtt_jumpset}, \eqref{mtt_flowset}, \eqref{oa_jumpset}, \eqref{oa_flowset}, and \eqref{bothmodes_flowjumpset} that $(\mathbf{x}, \mathbf{h}, \mathbf{a}, s)\in(\mathcal{F}_m^{\mathcal{W}}\setminus\mathcal{J}_m^{\mathcal{W}})\times\mathcal{W}_{r_a}\times\mathbb{S}^{n-1}\times\mathbb{R}_{\geq 0}$ for some $m\in\mathbb{M}$. 
For $\mathbf{x}\in(\mathcal{F}_{m}^{\mathcal{W}})^{\circ}\setminus\mathcal{J}_{m}^{\mathcal{W}}$ with $(\mathbf{h}, \mathbf{a}, m, s)\in\mathcal{W}_{r_a}\times\mathbb{S}^{n-1}\times\mathbb{M}\times\mathbb{R}_{\geq0}$ the tangent cone $\mathbf{T}_{\mathcal{F}}(\xi) = \mathbb{R}^n\times\mathbf{T}_{\mathcal{W}_{r_a}}(\mathbf{h})\times\mathcal{H}(\mathbf{0}, \mathbf{a})\times\{0\}\times\mathbf{T}_{\mathbb{R}_{\geq 0}}(s), $ where the set $\mathbf{T}_{\mathcal{W}_{r_a}}(\mathbf{h})$ is given by
\begin{equation}
    \mathbf{T}_{\mathcal{W}_{r_a}}(\mathbf{h}) = \begin{cases}
    \begin{matrix*}[l] \mathbb{R}^n, & \text{if }\mathbf{h}\in(\mathcal{W}_{r_a})^{\circ},\\
    \mathcal{H}_{\geq}(\mathbf{0}, (\mathbf{h} - \Pi(\mathbf{h}, \mathcal{O}_{\mathcal{W}}))),& \text{if }\mathbf{h}\in\partial\mathcal{W}_{r_a},\end{matrix*}
    \end{cases}\label{definition_hp}
\end{equation}
where for $\mathbf{h}\in\partial\mathcal{W}_{r_a}$, the projection $\Pi(\mathbf{h}, \mathcal{O}_{\mathcal{W}})$ is unique. For $s\in\mathbb{R}_{\geq 0}$, the set $\mathbf{T}_{\mathbb{R}_{\geq 0}}(s)$ is defined as
\begin{equation}
    \mathbf{T}_{\mathbb{R}_{\geq 0}}(s) = \begin{cases}
    \begin{matrix*}[l] \mathbb{R}, & \text{if }s\in\mathbb{R}_{> 0},\\
   \mathbb{R}_{\geq 0},& \text{if }s = 0.\end{matrix*}
    \end{cases}\label{definition_sp}
\end{equation}

Since, according to \eqref{hybrid_closed_loop_system}, $\dot{\mathbf{h}} = \mathbf{0} $ and $\dot{s} = 1$, we have $\dot{\mathbf{h}}\in\mathbf{T}_{\mathcal{W}_{r_a}}(\mathbf{h})$ and $\dot{s}\in\mathbf{T}_{\mathbb{R}_{\geq 0}}(s)$, respectively, and \eqref{viability_condition} holds.

Next, we consider the case where $\xi\in\mathcal{F}\setminus\mathcal{J}$ with $\mathbf{x}\in\partial\mathcal{F}_m^{\mathcal{W}}\setminus\mathcal{J}_m^{\mathcal{W}}$, $m\in\mathbb{M}$.
For $m = 0$, according to \eqref{jumpset_movetotarget} and \eqref{flowset_movetotarget}, one has
\begin{equation}
\partial\mathcal{F}_0^{\mathcal{W}}\setminus\mathcal{J}_0^{\mathcal{W}} \subset \partial\mathcal{D}_{r_a}(\mathcal{O}_{\mathcal{W}})\cap\mathcal{R}_e,\label{forward_invariance_equation_1}
\end{equation}
and for $\mathbf{x}\in\partial\mathcal{F}_0^{\mathcal{W}}\setminus\mathcal{J}_0^{\mathcal{W}},$ the projection $\Pi(\mathbf{x}, \mathcal{O}_{\mathcal{W}})$ is unique. 
Hence, for all $\xi\in\mathcal{F}_0\setminus\mathcal{J}_0$ with $\mathbf{x}\in\partial\mathcal{F}_{0}^{\mathcal{W}}\setminus\mathcal{J}_0^{\mathcal{W}}$, 
\begin{equation}
    \mathbf{T}_{\mathcal{F}}(\xi) = \mathcal{H}_{\geq}(\mathbf{0}, \mathbf{x}_{\pi})\times\mathbf{T}_{\mathcal{W}_{r_a}}(\mathbf{h})\times\mathcal{H}(\mathbf{0},\mathbf{a})\times\{0\}\times\mathbf{T}_{\mathbb{R}_{\geq 0}}(s),
\end{equation}
where $\mathbf{T}_{\mathcal{W}_{r_a}}(\mathbf{h})$ and $\mathbf{T}_{\mathbb{R}_{\geq 0}}(s)$ are defined in \eqref{definition_hp} and \eqref{definition_sp}, respectively, and $\mathbf{x}_{\pi} = \mathbf{x}- \Pi(\mathbf{x}, \mathcal{O}_{\mathcal{W}})$. Also, according to \eqref{control_u}, for $m =0$, one has $\mathbf{u}(\xi) = -\kappa_s\mathbf{x}, \kappa_s>0$. According to \eqref{exit_region} and \eqref{forward_invariance_equation_1}, for $\xi\in\mathcal{F}_0\setminus\mathcal{J}_0$ with $\mathbf{x}\in\partial\mathcal{F}_0^{\mathcal{W}}\setminus\mathcal{J}_{0}^{\mathcal{W}}$, one can conclude that $\mathbf{u}(\xi)\in\mathcal{H}_{\geq}(\mathbf{0}, \mathbf{x}_{\pi})$. Moreover, according to \eqref{hybrid_closed_loop_system}, it is clear that $\dot{\mathbf{h}} = \mathbf{0}\in\mathbf{T}_{\mathcal{W}_{r_a}}(\mathbf{h})$, $\dot{s} = 1\in\mathbf{T}_{\mathbb{R}_{\geq 0}}(s)$ and $\dot{\mathbf{a}} = \mathbf{0}\in\mathcal{H}(\mathbf{0}, \mathbf{a})$. Therefore, the viability condition \eqref{viability_condition} holds for $m =0$.

For $m = 1$, according to \eqref{jumpset_obstacleavoidance} and \eqref{flowset_obstacleavoidance} one has
\begin{equation}
    \partial\mathcal{F}_{1}^{\mathcal{W}}\setminus\mathcal{J}_1^{\mathcal{W}}\subset\partial\mathcal{D}_{r_a}(\mathcal{O}_{\mathcal{W}}),
\end{equation}
and for $\mathbf{x}\in\partial\mathcal{F}_1^{\mathcal{W}}\setminus\mathcal{J}_1^{\mathcal{W}}$, the projection $\Pi(\mathbf{x}, \mathcal{O}_{\mathcal{W}})$ is unique and the set $\mathcal{B}_{\norm{\mathbf{x}_{\pi}}}(\mathbf{x})$ intersects with $\partial\mathcal{O}_{\mathcal{W}}$ only at $\Pi(\mathbf{x}, \mathcal{O}_{\mathcal{W}}).$ Hence, for all $\xi\in\mathcal{F}_1\setminus\mathcal{J}_1$ with $\mathbf{x}\in\partial\mathcal{F}_1^{\mathcal{W}}\setminus\mathcal{J}_1^{\mathcal{W}}$, 
\begin{equation}
    \mathbf{T}_{\mathcal{F}}(\xi) = \mathcal{H}_{\geq}(\mathbf{0}, \mathbf{x}_{\pi})\times\mathbf{T}_{\mathcal{W}_{r_a}}(\mathbf{h})\times\mathcal{H}(\mathbf{0}, \mathbf{a})\times\{0\}\times\mathbf{T}_{\mathbb{R}_{\geq 0}}(s),
\end{equation}
where $\mathbf{T}_{\mathcal{W}_{r_a}}(\mathbf{h})$ and $\mathbf{T}_{\mathbb{R}_{\geq 0}}(s)$ are defined in \eqref{definition_hp} and \eqref{definition_sp}, respectively. Also, according to \eqref{control_u}, for $m = 1$, one has $\mathbf{u}(\xi) = \kappa_r\mathbf{v}(\mathbf{x},\mathbf{h}, \mathbf{a}), \kappa_r > 0.$ From \eqref{scalar_function_eta_definition}, it follows that $\eta(\mathbf{x})=1$ for $\mathbf{x}\in\partial\mathcal{F}_1^{\mathcal{W}}\setminus\mathcal{J}_1^{\mathcal{W}}$. As a result, according to \eqref{control_u} and \eqref{n_dimensional_obstacle-avoidance_vector}, the control vector is simplified to {$\mathbf{u}(\xi) = \kappa_r{\mathbf{P}(\hat{\mathbf{h}}, \mathbf{a})\mathbf{x}_{\pi}}.$} 
Since for any two orthonormal vectors $\hat{\mathbf{h}} ,\mathbf{a}\in\mathbb{S}^{n-1}$, the matrix $\mathbf{P}(\hat{\mathbf{h}}, \mathbf{a})$ is positive semidefinite, one has $\mathbf{x}_{\pi}^\intercal\mathbf{P}(\hat{\mathbf{h}},\mathbf{a})\mathbf{x}_{\pi} \geq 0$.
Therefore, for $\xi\in\mathcal{F}_{1}\setminus\mathcal{J}_1$ with $\mathbf{x}\in\mathcal{F}_1^{\mathcal{W}}\setminus\mathcal{J}_1^{\mathcal{W}}$, one has $\mathbf{u}(\mathbf{x}, \mathbf{h}, \mathbf{a}, 1,s)^\intercal(\mathbf{x}- \Pi(\mathbf{x}, \mathcal{O}_{\mathcal{W}})) \geq 0$. Moreover, according to \eqref{hybrid_closed_loop_system}, it is clear that $\dot{\mathbf{h}}= \mathbf{0}\in\mathbf{T}_{\mathcal{W}_{r_a}}(\mathbf{h})$, $\dot{s} = 1$ and $\dot{\mathbf{a}} = \mathbf{0}\in\mathcal{H}(\mathbf{0}, \mathbf{a})$. Hence, the viability condition in \eqref{viability_condition} holds for $m=1.$

Hence, according to \cite[Proposition 6.10]{goebel2012hybrid}, since \eqref{viability_condition} holds for all $\xi\in\mathcal{F}\setminus\mathcal{J}$, there exists a nontrivial solution to $\mathcal{H}$ for each initial condition in $\mathcal{K}$. Finite escape time can only occur through flow. They can neither occur for $\mathbf{x}$ in the set $\mathcal{F}_1^{\mathcal{W}}$, as this set is bounded as per definition \eqref{flowset_obstacleavoidance}, nor for $\mathbf{x}$ in the set $\mathcal{F}_0^{\mathcal{W}}$ as this would make $\mathbf{x}^{\intercal}\mathbf{x}$ grow unbounded, and would contradict the fact that $\frac{d}{dt}(\mathbf{x}^\intercal\mathbf{x})\leq0$ in view of the definition of $\mathbf{u}(\mathbf{x}, \mathbf{h}, \mathbf{a}, 0, s)$. Therefore, all maximal solutions do not have finite escape times. Furthermore, according to \eqref{hybrid_closed_loop_system}, $\mathbf{x}^+ = \mathbf{x},$ and from the definition of the update law in \eqref{updatelaw_part1} and \eqref{updatelaw_part2}, it follows immediately that $\mathbf{J}(\mathcal{J})\subset\mathcal{K}$. Hence, the solutions to the hybrid closed-loop system \eqref{hybrid_closed_loop_system} cannot leave $\mathcal{K}$ through jump and, as per \cite[Proposition 6.10]{goebel2012hybrid}, all maximal solutions are complete.

\subsection{Proof of Lemma \ref{lemma:always_enter_move_to_target_mode}}
\label{proof_of_lemma_always_enter}

First, we prove that when the control input corresponds to the \textit{obstacle-avoidance} mode, one has $\mathbf{x}(t, j)\in\mathcal{N}_{\gamma}(\mathcal{D}_{r_a}(\mathcal{O}_i))$ for all $(t, j)\in(I_{j_1+1}\times j_1 +1)$. To that end,  we make use of Nagumo's theorem \cite[Theorem 4.7]{blanchini2008set} and show that when the control input corresponds to the \textit{obstacle-avoidance} mode, one has
\begin{equation}
    \mathbf{u}(\xi) \in\mathbf{T}_{\mathcal{N}_{\gamma}(\mathcal{D}_{r_a}(\mathcal{O}_i))}(\mathbf{x}),\label{small_viability_condition}
\end{equation}
for all $\mathbf{x}\in\partial\mathcal{N}_{\gamma}(\mathcal{D}_{r_a}(\mathcal{O}_i))$.
This, combined with the fact that the control vector trajectory $\mathbf{u}(\xi)$ is continuous, when it corresponds to the \textit{obstacle-avoidance} mode, as stated in Lemma \ref{basic-conditions}, ensures that $\mathbf{x}(t, j) \in\mathcal{N}_{\gamma}(\mathcal{D}_{r_a}(\mathcal{O}_i))$ for all $(t, j)\in(I_{j_1 +1}\times j_1 +1)$.

Note that $\partial\mathcal{N}_{\gamma}(\mathcal{D}_{r_a}(\mathcal{O}_i)) = \partial\mathcal{D}_{r_a}(\mathcal{O}_i)\cup\partial\mathcal{D}_{r_a + \gamma}(\mathcal{O}_i)$. For all $\mathbf{x}\in\partial\mathcal{D}_{r_a}(\mathcal{O}_i)$, one has $\mathcal{H}_{\geq}(\mathbf{0}, \mathbf{x}_{\pi})=\mathbf{T}_{\mathcal{N}_{\gamma}(\mathcal{D}_{r_a}(\mathcal{O}_i))}(\mathbf{x})$, where $\mathbf{x}_{\pi} = \mathbf{x} - \Pi(\mathbf{x}, \mathcal{O}_{\mathcal{W}})$ with $\Pi(\mathbf{x}, \mathcal{O}_{\mathcal{W}}) = \Pi(\mathbf{x}, \mathcal{O}_i)$. Also, since the control input corresponds to the \textit{obstacle-avoidance} mode, for $\mathbf{x}\in\partial\mathcal{D}_{r_a}(\mathcal{O}_i)$, the control vector \eqref{control_u} is given by $\mathbf{u}(\xi) = \kappa_r\mathbf{P}(\hat{\mathbf{h}}, \mathbf{a})\mathbf{x}_{\pi}, \kappa_r > 0$. 
Since for any two orthonormal vectors $\mathbf{q}, \mathbf{s}\in\mathbb{S}^{n-1}$, the matrix $\mathbf{P}(\mathbf{q}, \mathbf{s})$ is positive semidefinite, one has $\mathbf{p}^\top\mathbf{P}(\mathbf{q}, \mathbf{s})\mathbf{p}\geq 0$ for all $\mathbf{p}\in\mathbb{R}^n$.
Therefore, for all $\mathbf{x}\in\partial\mathcal{D}_{r_a}(\mathcal{O}_i)$, one has $\mathbf{x}_{\pi}\mathbf{P}(\hat{\mathbf{h}}, \mathbf{a})\mathbf{x}_{\pi}\geq 0$. This implies that $\mathbf{u}(\xi)\in\mathcal{H}_{\geq}(\mathbf{0}, \mathbf{x}_{\pi})=\mathbf{T}_{\mathcal{N}_{\gamma}(\mathcal{D}_{r_a}(\mathcal{O}_i))}(\mathbf{x})$ for $\mathbf{x}\in\partial\mathcal{D}_{r_a}(\mathcal{O}_i)$, and condition \eqref{small_viability_condition} holds true.

Next, for $\mathbf{x}\in\partial\mathcal{D}_{r_a + \gamma}(\mathcal{O}_i)$, one has $\mathcal{H}_{\leq}(\mathbf{0}, \mathbf{x}_{\pi})=\mathbf{T}_{\mathcal{N}_{\gamma}(\mathcal{D}_{r_a}(\mathcal{O}_i))}(\mathbf{x})$. Also, since the control input corresponds to the \textit{obstacle-avoidance} mode, for $\mathbf{x}\in\partial\mathcal{D}_{r_a + \gamma}(\mathcal{O}_i)$, the control vector \eqref{control_u} is given by $\mathbf{u}(\xi) = -\kappa_r\mathbf{P}(\hat{\mathbf{h}}, \mathbf{a})\mathbf{x}_{\pi}, \kappa_r > 0$. 
As mentioned earlier, for all $\mathbf{x}\in\partial\mathcal{D}_{r_a+ \gamma}(\mathcal{O}_i)$, one has $\mathbf{x}_{\pi}\mathbf{P}(\hat{\mathbf{h}}, \mathbf{a})\mathbf{x}_{\pi}\geq 0$. Therefore, $\mathbf{u}(\xi)\in\mathcal{H}_{\leq}(\mathbf{0}, \mathbf{x}_{\pi})=\mathbf{T}_{\mathcal{N}_{\gamma}(\mathcal{D}_{r_a}(\mathcal{O}_i))}(\mathbf{x})$ for $\mathbf{x}\in\partial\mathcal{D}_{r_a + \gamma}(\mathcal{O}_i)$, and condition \eqref{small_viability_condition} holds true. As a result, since $\mathbf{x}(t_1, j_1 +1)\in\mathcal{N}_{\gamma}(\mathcal{D}_{r_a}(\mathcal{O}_i))$, one can conclude that
\begin{equation}
    \mathbf{x}(t, j)\in\mathcal{N}_{\gamma}(\mathcal{D}_{r_a}(\mathcal{O}_i)),\label{state_belongs_to_gamma_neighborhood}
\end{equation}
for all $(t, j)\in(I_{j_1+1}\times j_1 +1)$.
 
Next, we show that when the control input corresponds to the \textit{obstacle-avoidance} mode,  $\mathbf{x}(t, j)\in\mathcal{N}_{\gamma}(\mathcal{D}_{r_a}(\mathcal{O}_i))\cap\mathcal{P}(\mathbf{h}, \mathbf{a})$ for all $(t, j)\in(I_{j_1 +1}\times j_1+1)$. 
When the control input corresponds to the \textit{obstacle-avoidance} mode, it is given by $\mathbf{u}(\xi) = \kappa_r\mathbf{v}(\mathbf{x}, \mathbf{h}, \mathbf{a})$, which, as per \eqref{n_dimensional_obstacle-avoidance_vector}, can be expressed as a linear combination of the vectors $\mathbf{P}(\hat{\mathbf{h}}, \mathbf{a})\mathbf{x}_{\pi}$ and $\mathbf{R}(\hat{\mathbf{h}}, \mathbf{a})\mathbf{P}(\hat{\mathbf{h}}, \mathbf{a})\mathbf{x}_{\pi}$.
Note that, according to \eqref{updatelaw_part1}, one has $\mathbf{h}^\top\mathbf{a} = 0$. Therefore, for all $\mathbf{x}\in\mathcal{N}_{\gamma}(\mathcal{D}_{r_a}(\mathcal{O}_i))\cap\mathcal{P}(\mathbf{h}, \mathbf{a})$, as per \eqref{parallel_projection_operator}, one has $\mathbf{P}(\hat{\mathbf{h}}, \mathbf{a})\mathbf{x}_{\pi}\in\mathcal{P}(\mathbf{h}, \mathbf{a})$. Additionally, using \eqref{rotational_operator}, one can show that $\mathbf{R}(\hat{\mathbf{h}}, \mathbf{a})\mathbf{P}(\hat{\mathbf{h}}, \mathbf{a})\mathbf{x}_{\pi}\in\mathcal{P}(\mathbf{h}, \mathbf{a})$. Therefore, for all $\mathbf{x}\in\mathcal{N}_{\gamma}(\mathcal{D}_{r_a}(\mathcal{O}_i))\cap\mathcal{P}(\mathbf{h}, \mathbf{a})$, one has $\mathbf{v}(\mathbf{x}, \mathbf{h}, \mathbf{a})\in\mathcal{P}(\mathbf{h}, \mathbf{a})$. As a result, since $\mathbf{x}(t_1, j_1 +1)\in\mathcal{N}_{\gamma}(\mathcal{D}_{r_a}(\mathcal{O}_i))\cap\mathcal{P}(\mathbf{h}, \mathbf{a})$, using \eqref{state_belongs_to_gamma_neighborhood}, one can conclude that 
\[
\mathbf{x}(t, j)\in\mathcal{N}_{\gamma}(\mathcal{D}_{r_a}(\mathcal{O}_i))\cap\mathcal{P}(\mathbf{h}, \mathbf{a}), 
\]
for all $(t, j)\in(I_{j_1 +1}\times j_1 +1)$ and claim 1 in Lemma \ref{lemma:always_enter_move_to_target_mode} is satisfied.

Next, we proceed to prove claim 2 in Lemma \ref{lemma:always_enter_move_to_target_mode} which states that when $\xi(t_1, j_1 + 1)\in\mathcal{F}_1$ for some $(t_1, j_1 + 1)\in\text{ dom }\xi$, the control input steers the state $\xi$ to the jump set of the \textit{obstacle-avoidance} mode $\mathcal{J}_1$ in finite time $(t_2, j_1 + 1)\succ(t_1, j_1 + 1)$ with $t_2<\infty.$

Let us define the set $\mathcal{O}_i^{\mathcal{S}} = \mathcal{D}_{r_a}(\mathcal{O}_i)\cap\mathcal{P}(\mathbf{h}, \mathbf{a})$, as shown in Fig. \ref{figure_proof_neighborhood_partition}. Since obstacle $\mathcal{O}_i$ is convex, the set $\mathcal{O}_i^{\mathcal{S}}$ is also convex. As a result, the target location has a unique closest point on the set $\mathcal{O}_i^{\mathcal{S}}$, represented by $\Pi(\mathbf{0}, \mathcal{O}_i^{\mathcal{S}}).$ 
We define a set $\mathcal{LS}$ as follows:
\begin{equation}
    \mathcal{LS}:= \mathcal{L}(\mathbf{0}, \mathbf{0}_{\pi}^{\mathcal{S}})\cap\mathcal{N}_{\gamma}(\mathcal{D}_{r_a}(\mathcal{O}_i))\cap\mathcal{H}_{\geq}(\mathbf{0}_{\pi}^{\mathcal{S}}, -\mathbf{0}_{\pi}^{\mathcal{S}}),\label{proof_line_segment}
    \end{equation}
where $\mathbf{0}_{\pi}^{\mathcal{S}} = \Pi(\mathbf{0}, \mathcal{O}_i^s)$.
Since $\mathbf{0}\in\mathcal{P}(\mathbf{h}, \mathbf{a}),$ the line segment $\mathcal{LS}$ belongs to the plane $\mathcal{P}(\mathbf{h}, \mathbf{a}).$ Since $\mathcal{LS}\cap\mathcal{D}_{r_a}^{\circ}(\mathcal{O}_i) = \emptyset$,  the line segment $\mathcal{LS}$ also belongs to the \textit{exit} region $\mathcal{R}_e$ \eqref{exit_region}. Since the \textit{hit point} $\mathbf{h}$ belongs to $\mathcal{R}_a^i$, the target location $\mathbf{0}$ is closer to the location $\Pi(\mathbf{0}, \mathcal{O}_i^{\mathcal{S}})$ than to the \textit{hit point} $\mathbf{h}$. Hence, if $\mathbf{0}\notin\mathcal{D}_{r_a + \gamma}(\mathcal{O}_i)$, then for a sufficiently small value of $\bar{\epsilon}$, used in \eqref{partition_rm}, one can ensure that the set $\mathcal{LS}$ belongs to the set $\mathcal{J}_{1}^{\mathcal{W}}$ \eqref{jumpset_obstacleavoidance}. On the other hand, if $\mathbf{0}\in\mathcal{N}_{\gamma}(\mathcal{D}_{r_a}(\mathcal{O}_i))$, it is straightforward to verify that $\mathcal{LS}\subset\mathcal{S}_0$, which, according to \eqref{jumpset_obstacleavoidance}, implies that $\mathcal{LS}\subset\mathcal{J}_1^{\mathcal{W}}$.

Now, if one ensures that the state $\mathbf{x}$, which belongs to the set $\mathcal{N}_{\gamma}(\mathcal{D}_{r_a}(\mathcal{O}_i))\cap\mathcal{P}(\mathbf{h}, \mathbf{a})$ after time $(t_1, j_1)$, in the \textit{obstacle-avoidance} mode around obstacle $\mathcal{O}_i$, eventually intersects the set $\mathcal{LS}$ at some finite time $(t_2, j_1 +1)\succ(t_1, j_1+1)$, then it will imply that $\xi(t_2, j_1+1)\in\mathcal{J}_1$, and claim 2 in Lemma \ref{lemma:always_enter_move_to_target_mode} will be proven. To that end, let us divide the set $\mathcal{N}_{\gamma}(\mathcal{D}_{r_a}(\mathcal{O}_i))\cap\mathcal{P}(\mathbf{h}, \mathbf{a})$, as shown in Fig. \ref{figure_proof_neighborhood_partition}, into 3 separate subsets as follows:
\begin{equation}
\begin{aligned}
    &\mathcal{N}_{\gamma}(\mathcal{D}_{r_a}(\mathcal{O}_i))\cap\mathcal{P}(\mathbf{h}, \mathbf{a}) = \mathcal{S}_1\cup\mathcal{S}_2\cup\mathcal{S}_3,
    \end{aligned}
\end{equation}
where the sets $\mathcal{S}_1, \mathcal{S}_2$ and $\mathcal{S}_3$ are defined as follows:

    \begin{equation}
\begin{aligned}
    &\mathcal{S}_1 = \mathcal{N}_{\gamma_a}(\mathcal{D}_{r_a}(\mathcal{O}_i))\cap\mathcal{P}(\mathbf{h}, \mathbf{a}),\\
    &\mathcal{S}_2 = \left(\mathcal{N}_{\gamma_s-\gamma_a}(\mathcal{D}_{r_a + \gamma_a}(\mathcal{O}_i))\right)^{\circ}\cap\mathcal{P}(\mathbf{h}, \mathbf{a}),\\
    &\mathcal{S}_3 = \mathcal{N}_{\gamma - \gamma_s}(\mathcal{D}_{r_a + \gamma_s}(\mathcal{O}_i))\cap\mathcal{P}(\mathbf{h}, \mathbf{a}),
\end{aligned}
\end{equation}
where $0<\gamma_a<\gamma_s<\gamma.$

\begin{figure}
    \centering
    \includegraphics[width = 1\linewidth]{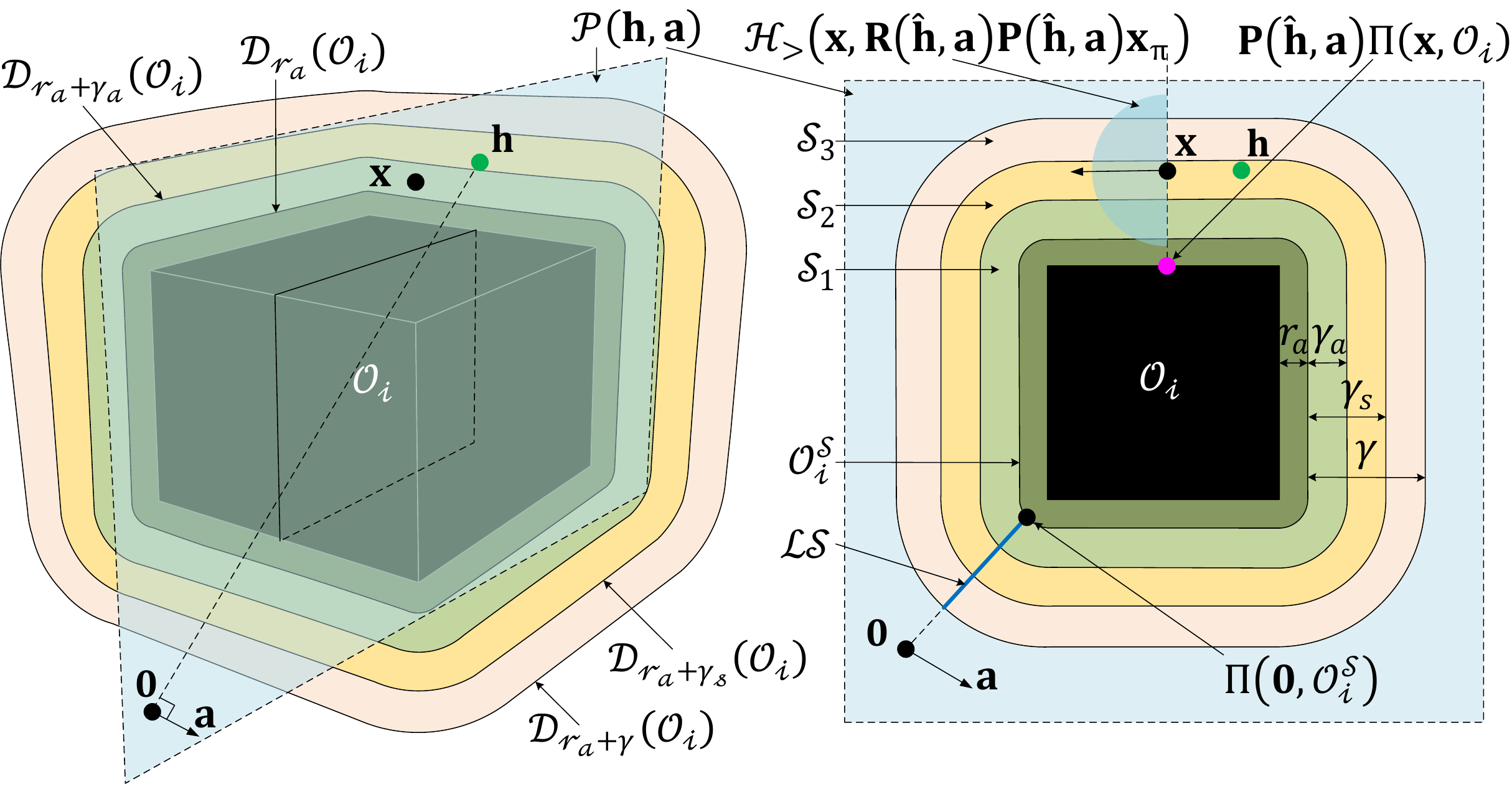}
    \caption{The partition of the set $\mathcal{N}_{\gamma}(\mathcal{D}_{r_a}(\mathcal{O}_i))\cap\mathcal{P}(\mathbf{h}, \mathbf{a})$.}
\label{figure_proof_neighborhood_partition}
\end{figure}

We show that when the control input corresponds to the \textit{obstacle-avoidance} mode and the state $\mathbf{x}$ belongs either to the set $\mathcal{S}_1$ or to the set $\mathcal{S}_3$, the control eventually steers the state $\mathbf{x}$ to the set $\mathcal{S}_2.$ 
Then, we show that for all $\mathbf{x}\in\mathcal{S}_2$, the control vector $\mathbf{u}(\xi)$ belongs to the open positive half-space $\mathcal{P}_{>}(\mathbf{0}, \mathbf{R}(\hat{\mathbf{h}}, \mathbf{a})\mathbf{P}(\hat{\mathbf{h}}, \mathbf{a})\mathbf{x}_{\pi})$.
This implies that the state $\mathbf{x}$, which belongs to the set $\mathcal{N}_{\gamma}(\mathcal{D}_{r_a}(\mathcal{O}_i))\cap\mathcal{P}(\mathbf{h}, \mathbf{a})$ after time $(t_1, j_1)$, in the \textit{obstacle-avoidance} mode around obstacle $\mathcal{O}_i$, is always steered to the open positive half-space $\mathcal{P}_{>}(\mathbf{0}, \mathbf{R}(\hat{\mathbf{h}}, \mathbf{a})\mathbf{P}(\hat{\mathbf{h}}, \mathbf{a})\mathbf{x}_{\pi})$ and will eventually reach the set $\mathcal{LS}$ at some finite time $(t_2, j_1 +1)\succ(t_1, j_1 +1).$ 

First, we show that when the control input corresponds to the \textit{obstacle-avoidance} mode and the state $\mathbf{x}$ is either in the set $\mathcal{S}_1$ or in the set $\mathcal{S}_3$, the control will eventually steer the state $\mathbf{x}$ to the set $\mathcal{S}_2.$ 

When the control input corresponds to the \textit{obstacle-avoidance} mode and $\mathbf{x}$ belongs to the set $\mathcal{S}_1$, the control vector $\mathbf{u}(\xi)$ in \eqref{control_u} becomes
\begin{equation}
    \mathbf{u}(\xi) = \kappa_r\mathbf{P}(\hat{\mathbf{h}}, \mathbf{a})\mathbf{x}_{\pi}, \kappa_r > 0.\label{control_law_S1}
\end{equation}
Let $\mathbf{x}\in\partial\mathcal{D}_{r_a+ \beta}(\mathcal{O}_i)\cap\mathcal{S}_1$ for some $\beta\in[0, \gamma_a]$. We know that for $\mathbf{x}\in\partial\mathcal{D}_{r_a + \beta}(\mathcal{O}_i)\cap\mathcal{S}_1$, the tangent cone to the set $\mathcal{N}_{\gamma - \beta}(\mathcal{D}_{r_a + \beta}(\mathcal{O}_i))$ at $\mathbf{x}$ is given by
\[
    \mathbf{T}_{\mathcal{N}_{\gamma - \beta}(\mathcal{D}_{r_a + \beta}(\mathcal{O}_i))}(\mathbf{x}) = \mathcal{H}_{\geq}(\mathbf{0}, \mathbf{x}_{\pi}).
\]
If we show that for all $\mathbf{x}\in\partial\mathcal{D}_{r_a + \beta}(\mathcal{O}_i)\cap\mathcal{S}_1$, one has $\mathbf{x}_{\pi}\mathbf{P}(\hat{\mathbf{h}}, \mathbf{a})\mathbf{x}_{\pi} > 0$, then it implies that the control input vector \eqref{control_law_S1} steers $\mathbf{x}$ to the interior of the set $\mathcal{N}_{\gamma - \beta}(\mathcal{D}_{r_a + \beta}(\mathcal{O}_i))$. This, combined with the fact that $\mathbf{x}(t, j)\in\mathcal{P}(\mathbf{h}, \mathbf{a})$, for all $(t, j)\in(I_{j_1 + 1}\times j_1 + 1)$, as per claim 1 in Lemma \ref{lemma:always_enter_move_to_target_mode}, ensures that the control input vector \eqref{control_law_S1} steers $\mathbf{x}$ to the interior of the set $(\mathcal{S}_1\cup\mathcal{S}_2)\setminus\mathcal{D}_{r_a+\beta}^{\circ}(\mathcal{O}_i)$ and eventually  $\mathbf{x}$ will enter in the set $\mathcal{S}_2$. To proceed with the proof we require the following fact:

\textbf{Fact 1}: Consider a plane $\mathcal{P}(\mathbf{p}, \mathbf{q})$, where $\mathbf{p}\in\mathcal{J}_0^{\mathcal{W}}\cap\mathcal{N}_{\gamma}(\mathcal{D}_{r_a}(\mathcal{O}_i))$, for some $i\in\mathbb{I}$ and $\mathbf{q}\in \mathbf{A}(\mathbf{p})$, where the mapping $\mathbf{A}$ is defined in \eqref{update_law_for_vector_a}. Then, for all $\mathbf{x}\in\mathcal{N}_{\gamma}(\mathcal{D}_{r_a}(\mathcal{O}_i))\cap\mathcal{P}(\mathbf{p}, \mathbf{q})$, one has $\mathbf{x}_{\pi}^\top\mathbf{P}(\hat{\mathbf{p}}, \mathbf{q})\mathbf{x}_{\pi} > 0$, where $\mathbf{x}_{\pi} = \mathbf{x} - \Pi(\mathbf{x}, \mathcal{O}_{\mathcal{W}})$. \label{fact1}

\begin{proof}
    This proof is by contradiction. First, note that for any vector $\mathbf{s}\in\mathbb{R}^n$, one has $\mathbf{s}^\top\mathbf{P}(\hat{\mathbf{p}}, \mathbf{q})\mathbf{s}\geq 0$. Let us assume that there exists $\mathbf{x}\in\mathcal{N}_{\gamma}(\mathcal{D}_{r_a}(\mathcal{O}_i))\cap\mathcal{P}(\mathbf{p}, \mathbf{q})$ such that $\mathbf{x}_{\pi}\mathbf{P}(\hat{\mathbf{p}}, \mathbf{q})\mathbf{x}_{\pi} = 0$. Since $\mathbf{x}_{\pi}\ne\mathbf{0}$, one has $\mathbf{P}(\hat{\mathbf{p}}, \mathbf{q})\mathbf{x}_{\pi} = \mathbf{0}$. Therefore, the vector $\mathbf{x}_{\pi}$ is normal to the plane $\mathcal{P}(\mathbf{p}, \mathbf{q})$. This implies that the plane $\mathcal{P}(\mathbf{p}, \mathbf{q})$ is a supporting hyperplane \cite[Section 2.5.2]{boyd2004convex} to the convex set $\mathcal{D}_{r_a + \beta}(\mathcal{O}_i)$ at $\mathbf{x}$, where $\beta = d(\mathbf{x},\mathcal{O}_i) - r_a\in[0, \gamma]$. Therefore, the set $(\mathcal{D}_{r_a}(\mathcal{O}_i))^{\circ}\cap\mathcal{P}(\mathbf{p}, \mathbf{q})$ is an empty set.

    However, since $\mathbf{q} \in \mathbf{A}(\mathbf{p})$, according to \eqref{update_law_for_vector_a}, one has $\mathbf{p}_{\pi}\in\mathcal{P}(\mathbf{p}, \mathbf{q})$. Therefore, $\mathcal{L}(\mathbf{p}, \Pi(\mathbf{p}, \mathcal{O}_i))\subset\mathcal{P}(\mathbf{p}, \mathbf{q})$. As a result, $\mathcal{L}(\mathbf{p}, \Pi(\mathbf{p}, \mathcal{O}_i))\cap\mathcal{D}_{r_a}^{\circ}(\mathcal{O}_i)\ne\emptyset$. This implies that $\mathcal{D}_{r_a}^{\circ}(\mathcal{O}_i)\cap\mathcal{P}(\mathbf{p}, \mathbf{q})\ne\emptyset$, which is a contradiction.
\end{proof}

According to Fact 1, for all $\mathbf{x}\in\partial\mathcal{D}_{r_a + \beta}(\mathcal{O}_i)\cap\mathcal{S}_1$, where $\beta\in[0, \gamma_a]$, one has $\mathbf{x}_{\pi}\mathbf{P}(\hat{\mathbf{h}}, \mathbf{a})\mathbf{x}_{\pi} > 0$. Therefore, as discussed earlier,  the control input vector \eqref{control_law_S1} steers $\mathbf{x}$ to the interior of the set $(\mathcal{S}_1\cup\mathcal{S}_2)\setminus\mathcal{D}_{r_a+\beta}^{\circ}(\mathcal{O}_i)$ and eventually  $\mathbf{x}$ will enter in the set $\mathcal{S}_2$.

Similarly, when the control input corresponds to the \textit{obstacle-avoidance} mode and $\mathbf{x}$ belongs to the set $\mathcal{S}_3$, the control vector $\mathbf{u}(\xi)$ in \eqref{control_u} is given by
{
\begin{equation}
    \mathbf{u}(\xi) = -\kappa_r{\mathbf{P}(\hat{\mathbf{h}}, \mathbf{a})\mathbf{x}_{\pi}}, \kappa_r > 0.\label{control_law_S3}
\end{equation}}
Let $\mathbf{x}\in\partial\mathcal{D}_{r_a+\beta}(\mathcal{O}_i)\cap\mathcal{S}_3$ for some $\beta\in[\gamma_s, \gamma]$. We know that for $\mathbf{x}\in\partial\mathcal{D}_{r_a + \beta}(\mathcal{O}_i)\cap\mathcal{S}_3$, the tangent cone to the set $\mathcal{D}_{r_a + \beta}(\mathcal{O}_i)$ at $\mathbf{x}$ is given by
\[
    \mathbf{T}_{\mathcal{D}_{r_a + \beta}(\mathcal{O}_i)}(\mathbf{x}) = \mathcal{H}_{\leq}(\mathbf{0}, \mathbf{x}_{\pi}).
\]
According to Fact 1, for all $\mathbf{x}\in\partial\mathcal{D}_{r_a + \beta}(\mathcal{O}_i)\cap\mathcal{S}_3$, where $\beta\in[\gamma_s, \gamma]$, one has $\mathbf{x}_{\pi}\mathbf{P}(\hat{\mathbf{h}}, \mathbf{a})\mathbf{x}_{\pi} > 0$. 
This implies that the control input vector \eqref{control_law_S3} steers $\mathbf{x}$ to the interior of the set $\mathcal{D}_{r_a + \beta}(\mathcal{O}_i)$. This, combined with the fact that $\mathbf{x}(t, j)\in\mathcal{P}(\mathbf{h}, \mathbf{a})$, for all $(t, j)\in(I_{j_1 + 1}\times j_1 + 1)$, as per claim 1 in Lemma \ref{lemma:always_enter_move_to_target_mode}, ensures that the control input vector \eqref{control_law_S3} steers $\mathbf{x}$ to the interior of the set $(\mathcal{S}_3\cup\mathcal{S}_2)\cap\mathcal{D}_{r_a+\beta}^{\circ}(\mathcal{O}_i)$ and eventually  $\mathbf{x}$ will enter in the set $\mathcal{S}_2$.

Finally, we show that when the control input corresponds to the \textit{obstacle-avoidance} mode and the state $\mathbf{x}$ belongs to the set $\mathcal{S}_2$, the control vector $\mathbf{u}(\xi)$ belongs to the open positive half-space $\mathcal{H}_{>}(\mathbf{0}, \mathbf{R}(\hat{\mathbf{h}},\mathbf{a})\mathbf{P}(\hat{\mathbf{h}}, \mathbf{a})\mathbf{x}_{\pi})$. 
When the control law corresponds to the \textit{obstacle-avoidance} mode and the state $\mathbf{x}\in\mathcal{S}_2$, according to \eqref{hybrid_control_law}, one has $\mathbf{u}(\xi) = \kappa_r\mathbf{v}(\mathbf{x}, \mathbf{h}, \mathbf{a}), \kappa_r > 0.$ Note that for all $\mathbf{x}\in\mathcal{S}_2$, one has $\eta(\mathbf{x})\in(-1, 1)$. Therefore, for every $\mathbf{x}\in\mathcal{S}_2$, the vector $\mathbf{v}(\mathbf{x}, \mathbf{h}, \mathbf{a})$ can be expressed as a linear combination of the vectors $\mathbf{P}(\hat{\mathbf{h}}, \mathbf{a})\mathbf{x}_{\pi}$ and $\mathbf{R}(\hat{\mathbf{h}}, \mathbf{a})\mathbf{P}(\hat{\mathbf{h}}, \mathbf{a})\mathbf{x}_{\pi}$ given by
\begin{equation}
    \mathbf{v}(\mathbf{x}, \mathbf{h}, \mathbf{a}) = k_1\mathbf{P}(\hat{\mathbf{h}}, \mathbf{a})\mathbf{x}_{\pi} + k_2\mathbf{R}(\hat{\mathbf{h}}, \mathbf{a})\mathbf{P}(\hat{\mathbf{h}}, \mathbf{a})\mathbf{x}_{\pi},
\end{equation}
where $k_1\in\mathbb{R}$ and $k_2> 0$. Additionally, according to Fact 1, for all $\mathbf{x}\in\mathcal{S}_2$, one has $\mathbf{P}(\hat{\mathbf{h}}, \mathbf{a})\mathbf{x}_{\pi} \ne \mathbf{0}.$ As a result, it can be confirmed that $\mathbf{v}(\mathbf{x}, \mathbf{h}, \mathbf{a})^\intercal\mathbf{R}(\hat{\mathbf{h}}, \mathbf{a})\mathbf{P}(\hat{\mathbf{h}}, \mathbf{a})\mathbf{x}_{\pi} > 0$, when the state $\mathbf{x}$ belongs to the set $\mathcal{S}_2,$ and the proof is complete.

\subsection{Proof of Theorem \ref{theorem:global_stability}}
\label{proof_of_theorem}

\textbf{Forward invariance and stability:} The forward invariance of the obstacle-free set $\mathcal{K}$, for the hybrid closed-loop system \eqref{hybrid_closed_loop_system}, is immediate from Lemma \ref{lemma:set_invariance}. 
We next prove the stability of $\mathcal{A}$ using \cite[Definition 3.1]{sanfelice2021hybrid}.

Since $\mathbf{0}\in(\mathcal{W}_{r_a})^{\circ}$, there exists $\mu_1 > 0$ such that
$\mathcal{B}_{\mu_1}(\mathbf{0})\cap(\mathcal{D}_{r_a}(\mathcal{O}_{\mathcal{W}}))^{\circ} = \emptyset.$ According to  \eqref{jumpset_movetotarget}, there exists $\mu_2 > 0$ such that $\mathcal{B}_{\mu_2}(\mathbf{0})\cap\mathcal{J}_0^{\mathcal{W}}=\emptyset$. 
Additionally, as per \eqref{jumpset_obstacleavoidance} and \eqref{line_of_sight_to_target}, there exists $\mu_3>0$ such that $\mathcal{B}_{\mu_3}(\mathbf{0})\subset\mathcal{J}_1^{\mathcal{W}}$.
We define the set $\mathcal{S}_{\mu}:= \{\xi\in\mathcal{K}|\mathbf{x}\in\mathcal{B}_{\mu}(\mathbf{0})\},$
where $\mu\in(0, \min\{\mu_1, \mu_2, \mu_3\}).$ Notice that for all initial conditions $\xi(0, 0)\in\mathcal{S}_{\mu}$, the control input, after at most one jump corresponds to the \textit{move-to-target} mode and steers $\mathbf{x}$ towards the origin with the control input vector $\mathbf{u}(\xi) = -\kappa_s\mathbf{x}, \kappa_s> 0$. Hence, for each $\mu\in(0, \min\{\mu_1, \mu_2, \mu_3\}),$ the set $\mathcal{S}_{\mu}$ is forward invariant for the hybrid closed-loop system \eqref{hybrid_closed_loop_system}.
 
Consequently, for every $\rho>0$, one can choose $\sigma\in(0, \min\{\mu_1, \mu_2, \mu_3, \rho\})$ such that for all initial conditions $\xi(0, 0)$ with $d(\xi(0, 0), \mathcal{A})\leq\sigma$, one has $d(\xi(t, j), \mathcal{A})\leq \rho$ for all $(t, j)\in\text{ dom }\xi,$ where $d(\xi, \mathcal{A})^2 = \underset{(\mathbf{0}, \bar{\mathbf{h}}, \bar{\mathbf{a}}, \bar{m}, \bar{s})\in\mathcal{A}}{\inf}(\norm{\mathbf{x}}^2 + \norm{\mathbf{h} - \bar{\mathbf{h}}}^2 + \norm{\mathbf{a} - \bar{\mathbf{a}}}^2+(m - \bar{m})^2 + (s - \bar{s})^2) = \norm{\mathbf{x}}^2.$ 
Hence, according to \cite[Definition 3.1]{sanfelice2021hybrid}, the target set $\mathcal{A}$ is stable for the hybrid closed-loop system \eqref{hybrid_closed_loop_system}.
Next, we proceed to establish the convergence properties of the set $\mathcal{A}$.

\textbf{Attractivity:} We aim to show that for the proposed hybrid closed-loop system \eqref{hybrid_closed_loop_system}, the target set $\mathcal{A}$ is globally attractive in the set $\mathcal{K}$ using \cite[Defintion 3.1 and Remark 3.5]{sanfelice2021hybrid}. In other words, we prove that for all initial conditions $\xi(0, 0)\in\overline{\mathcal{F}}\cup\mathcal{J} = \mathcal{K}$, every maximal solution $\xi$ to the hybrid closed-loop system is complete and satisfies 
\begin{equation}
    \underset{(t, j)\in\text{ dom }\xi, t + j \to \infty}{\lim} d(\xi(t, j), \mathcal{A}) = \norm{\mathbf{x}(t, j)} = 0.\label{condition_for_attractivity}
\end{equation}

The completeness of all maximal solutions to the hybrid closed-loop system \eqref{hybrid_closed_loop_system} follows from Lemma \ref{lemma:set_invariance}.
Next, we prove that for all initial condition $\xi(0, 0)\in\mathcal{K}$, every complete solution $\xi$ to the hybrid closed-loop system \eqref{hybrid_closed_loop_system}, satisfies \eqref{condition_for_attractivity}.
We consider two cases based on the initial value of the mode indicator variable $m(0, 0)$. 

\textbf{Case 1}: $m(0, 0) = 0$. For the hybrid closed-loop system \eqref{hybrid_closed_loop_system}, consider a solution $\xi$ initialized in the \textit{move-to-target} mode. Let us assume $\xi(t_0, j_0)\in\mathcal{F}_0$ for some $(t_0, j_0)\in\text{ dom }\xi, (t_0, j_0)\succeq(0, 0).$ If $\xi(t, j)\notin\mathcal{J}_0$, for all $(t, j)\succeq(t_0, j_0)$, then the control input $\mathbf{u}(\mathbf{x}, \mathbf{h}, \mathbf{a}, 0, s) = -\kappa_s\mathbf{x}$ with $\kappa_s > 0$ will steer the state $\mathbf{x}$ straight towards the origin. On the other hand, assume that there exists $(t_1, j_1)\succeq(t_0, j_0)$ such that $\xi(t_1, j_1)\in\mathcal{J}_0$. Then, according to \eqref{updatelaw_part1}, the control law switches to the \textit{obstacle-avoidance} mode.
As per \eqref{jumpset_movetotarget}, it is clear that $\mathbf{x}(t_1, j_1)\in\mathcal{J}_0^{\mathcal{W}}\cap\mathcal{N}_{\gamma_s}(\mathcal{D}_{r_a}(\mathcal{O}_i)),$ for some $i\in\mathbb{I}$.
At this instance, according to \eqref{updatelaw_part1}, the proposed navigation algorithm updates the values of the state variables $\mathbf{h}(t_1, j_1 + 1) = \mathbf{x}(t_1, j_1)$, $\mathbf{a}(t_1, j_1 + 1) \in\mathbf{A}(\mathbf{x}(t_1, j_1))$, $m(t_1, j_1 +1) = 1$ and $s(t_1, j_1 + 1) = s(t_1, j_1) + 1.$ 
According to \eqref{hybrid_closed_loop_system}, $\mathbf{h}(t_1, j_1 +1) = \mathbf{h}(t, j)$, $\mathbf{a}(t_1, j_1 +1) = \mathbf{a}(t, j)$ and $m(t_1, j_1 +1) = m(t, j)$ for all $(t, j)\in(I_{j_1 +1}\times j_1 +1)$, where $I_{j_1+1} = \{t|(t, j_1 +1)\in\text{ dom }\xi\}$.

According to Lemma \ref{lemma:always_enter_move_to_target_mode}, there exists $(t_2, j_1 + 1)\succ(t_1, j_1 + 1)$ with $t_2<\infty$ such that $\xi(t_2, j_1 + 1)\in\mathcal{J}_1$. Notice that, according to \eqref{jumpset_obstacleavoidance} and \eqref{partition_rm}, one has $\|\mathbf{x}(t_2, j_1 + 1)\| < \|\mathbf{x}(t_1, j_1 + 1)\|$. In other words, according to Lemma \ref{lemma:always_enter_move_to_target_mode}, the proposed navigation algorithm ensures that, at the instance where the control switches from the \textit{obstacle-avoidance} mode to the \textit{move-to-target} mode, the origin is closer to the point $\mathbf{x}$ than to the last point where the control switched to the \textit{obstacle-avoidance} mode. 
Furthermore, when the control input corresponds to the \textit{move-to-target} mode, it steers the state $\mathbf{x}$ towards the origin under the influence of control $\mathbf{u}(\xi) = -\kappa_s\mathbf{x}, \kappa_s > 0.$ Consequently, given that the workspace $\mathcal{W}$ and the obstacles $\mathcal{O}_i, i\in\mathbb{I}\setminus\{0\}$, are compact, it can be concluded that the solution $\xi(t, j)$ will have a finite number of jumps and will satisfy \eqref{condition_for_attractivity}.

\textbf{Case 2}: $m(0, 0) = 1.$ For the hybrid closed-loop system \eqref{hybrid_closed_loop_system}, consider a solution $\xi$ initialized in the \textit{obstacle-avoidance} mode. Since $m(0, 0)= 1$, according to \eqref{oa_jumpset} and \eqref{J_s_set_definition}, $\xi(0, 0)\in\mathcal{J}_1$. Therefore, according to \eqref{updatelaw_part2}, the control input switches to the \textit{move-to-target} mode and $m(0, 1) = 0.$ One can now use arguments similar to the ones used for case 1 to show that the solution $\xi(t, j)$ will have a finite number of jumps and will satisfy \eqref{condition_for_attractivity}.

Hence, the target set $\mathcal{A}$ is globally attractive in the set $\mathcal{K}$ for the proposed hybrid closed-loop system \eqref{hybrid_closed_loop_system}. In addition, since the set $\mathcal{A}$ is stable for the hybrid closed-loop system \eqref{hybrid_closed_loop_system}, it is globally asymptotically stable in the set $\mathcal{K}$ for the hybrid closed-loop system \eqref{hybrid_closed_loop_system} as per \cite[Remark 3.5]{sanfelice2021hybrid}.

\subsection{Proof of Theorem \ref{theorem:sphere_global_stability}}

\label{proof:sphere_world}
\textbf{Forward invariance (Safety):} Note that, according to Lemma \ref{lemma:hybrid_basic_conditions_for_sphere}, the hybrid closed-loop system \eqref{modified_hybrid_closed_loop_system} satisfies the hybrid basic conditions. Furthermore, the modified control input vector $\mathbf{u}_{s}(\xi)$, as defined in \eqref{modified_control_u}, is obtained by replacing the avoidance control vector $\mathbf{v}(\mathbf{x}, \mathbf{h}, \mathbf{a})$ \eqref{n_dimensional_obstacle-avoidance_vector} with $\mathbf{v}_s(\mathbf{x}, \mathbf{h}, \mathbf{a})$ \eqref{modified_obstacle-avoidance_control} in \eqref{control_u}. Hence, demonstrating that the hybrid closed-loop system \eqref{modified_hybrid_closed_loop_system} satisfies the viability condition, as mentioned in \eqref{viability_condition}, for $m = 1$, allows one to employ similar arguments from the proof of Lemma \ref{lemma:set_invariance} to establish the forward invariance of the set $\mathcal{K}$ for the hybrid closed-loop system \eqref{modified_hybrid_closed_loop_system}.
In other words, we want to show that for all $\xi\in\mathcal{F}_1\setminus \mathcal{J}_1$ 
\begin{equation}
\mathbf{F}_s(\xi)\cap\mathbf{T}_{\mathcal{F}_1}(\xi) \ne\emptyset.\label{modified_viability_condition}
\end{equation}

For all $\xi\in\mathcal{K}$ such that $\mathbf{x}\in(\mathcal{F}_1^{\mathcal{W}})^{\circ}\setminus\mathcal{J}_1$ and $m = 1$, the tangent cone $\mathbf{T}_{\mathcal{F}_1}(\xi) = \mathbb{R}^n\times\mathbf{T}_{\mathcal{W}_{r_a}}(\mathbf{h})\times\mathcal{H}(\mathbf{0}, \mathbf{a})\times\{0\}\times\mathbf{T}_{\mathbb{R}_{\geq 0}}(s)$, where the sets $\mathbf{T}_{\mathcal{W}_{r_a}}(\mathbf{h})$ and $\mathbf{T}_{\mathbb{R}_{\geq 0}}(s)$ are defined in \eqref{definition_hp} and \eqref{definition_sp}, respectively.
Since, according to \eqref{modified_hybrid_closed_loop_system}, $\dot{\mathbf{h}}= \mathbf{0}\in\mathbf{T}_{\mathcal{W}_{r_a}}(\mathbf{h})$, $\dot{s} = 1 \in\mathbf{T}_{\mathbb{R}_{\geq 0}}(s)$ and $\dot{\mathbf{a}} = \mathbf{0}\in\mathcal{H}(\mathbf{0}, \mathbf{a})$, the viability condition in \eqref{modified_viability_condition} holds true for all $\xi\in\mathcal{K}$ such that $\mathbf{x}\in(\mathcal{F}_1^{\mathcal{W}})^{\circ}\setminus\mathcal{J}_1$ and $m = 1$. 

Finally, for all $\xi\in\mathcal{K}$ with $\mathbf{x}\in\partial\mathcal{F}_1\setminus\mathcal{J}_1$ and $m = 1,$ the tangent cone $\mathbf{T}_{\mathcal{F}_1}(\xi)$ is given by
\begin{equation}
    \mathbf{T}_{\mathcal{F}}(\xi) = \mathcal{H}_{\geq}(\mathbf{0}, \mathbf{x}_{\pi})\times\mathbf{T}_{\mathcal{W}_{r_a}}(\mathbf{h})\times\mathcal{H}(\mathbf{0},\mathbf{a})\times\{0\}\times\mathbf{T}_{\mathbb{R}_{\geq 0}}(s),
\end{equation}
where the sets $\mathbf{T}_{\mathcal{W}_{r_a}}(\mathbf{h})$ and $\mathbf{T}_{\mathbb{R}_{\geq 0}}(s)$ are defined in \eqref{definition_hp} and \eqref{definition_sp}, respectively. According to \eqref{modified_control_u}, for $m = 1$, $\mathbf{u}_s(\xi) = \kappa_r\mathbf{R}(\hat{\mathbf{h}}, \mathbf{a})\mathbf{P}(\hat{\mathbf{h}}, \mathbf{a})\mathbf{x}_{\pi}, \kappa_r > 0.$ 
Note that, for all $\mathbf{x}\in\mathcal{N}_{\gamma}(\mathcal{D}_{r_a}(\mathcal{O}_i)),$ for each $i\in\mathbb{I}\setminus\{0\}$, one has $\mathbf{x}_{\pi}\in\mathcal{H}(\mathbf{0}, \mathbf{R}(\hat{\mathbf{h}}, \mathbf{a})\mathbf{P}(\hat{\mathbf{h}}, \mathbf{a})\mathbf{x}_{\pi})$, where $\mathbf{a}\in\mathbb{S}^{n-1}$, and consequently $\mathbf{u}_s(\xi)^\top\mathbf{x}_{\pi} = 0. $
Additionally, according to \eqref{modified_hybrid_closed_loop_system}, $\dot{\mathbf{h}}= \mathbf{0}\in\mathbf{T}_{\mathcal{W}_{r_a}}(\mathbf{h})$, $\dot{s} = 1\in\mathbf{T}_{\mathbb{R}_{\geq 0}}(s)$ and $\dot{\mathbf{a}} = \mathbf{0}\in\mathcal{H}(\mathbf{0}, \mathbf{a})$. Hence, the viability condition in \eqref{modified_viability_condition} holds true for all $\xi\in\mathcal{K}$ such that $\mathbf{x}\in\partial\mathcal{F}_1\setminus\mathcal{J}_1$ and $m = 1$, and as such it holds true for all $\xi\in\mathcal{F}_1\setminus\mathcal{J}_1.$

\textbf{Stability:} 
When the mode indicator variable $m = 0$, one has $\mathbf{u}_s(\xi) = \mathbf{u}(\xi) = -\kappa_s\mathbf{x}, \kappa_s > 0.$ Additionally, the target location at the origin $\mathbf{0}$ belongs to $\mathcal{W}_{r_a}^{\circ}$, and the definitions of the flow set $\mathcal{F}$ and the jump set $\mathcal{J}$ are the same for \eqref{hybrid_closed_loop_system} and \eqref{modified_hybrid_closed_loop_system}. Hence, one can use similar arguments from the proof of Theorem \ref{theorem:global_stability} to prove the stability of the target set $\mathcal{A}$.


\textbf{Attractivity:} If we prove that all solutions $\xi$ to the hybrid closed-loop system \eqref{modified_hybrid_closed_loop_system} satisfy Lemma \ref{lemma:always_enter_move_to_target_mode}, then one can use arguments similar to the ones in the proof of Theorem \ref{theorem:global_stability} to prove the attractivity of the target set $\mathcal{A}$ from any point in $\mathcal{K}$ for the hybrid closed-loop system \eqref{modified_hybrid_closed_loop_system}. Consequently, we proceed to prove that every solution $\xi$ to the hybrid closed-loop system \eqref{modified_hybrid_closed_loop_system} satisfies Lemma \ref{lemma:always_enter_move_to_target_mode}.


Since $\xi(t_1, j_1)\in\mathcal{J}_0$, one has $\mathbf{h}\in\mathcal{J}_0^{\mathcal{W}}\cap\partial\mathcal{D}_{r_a + \beta}(\mathcal{O}_i)$ for some $i\in\mathbb{I}\setminus\{0\}$ and $\beta\in[0, \gamma]$, where $\mathbf{h} = \mathbf{h}(t_1, j_1 +1) = \mathbf{h}(t, j)$ for all $(t, j)\in(I_{j_1 +1}\times j_1 + 1)$. First, we show that $\mathbf{x}(t, j)\in\mathcal{N}_{\gamma}(\mathcal{D}_{r_a}(\mathcal{O}_i))$ for all $(t, j)\in(I_{j_1 +1}\times j_1 + 1)$. For all $\mathbf{x}\in\partial\mathcal{D}_{r_a + \beta}(\mathcal{O}_i)$, the tangent cone to the set $\partial\mathcal{D}_{r_a + \beta}(\mathcal{O}_i)$ at $\mathbf{x}$ is given by
\begin{equation}
    \mathbf{T}_{\partial\mathcal{D}_{r_a + \beta}(\mathcal{O}_i)}(\mathbf{x}) = \mathcal{H}(\mathbf{0}, \mathbf{x}_{\pi}),
\end{equation}
where $\mathbf{x}_{\pi} = \mathbf{x} - \Pi(\mathbf{x}, \mathcal{O}_{\mathcal{W}})$. When the control input vector \eqref{modified_control_u} corresponds to the \textit{obstacle-avoidance} mode, for all $\mathbf{x}\in\partial\mathcal{D}_{r_a + \beta}(\mathcal{O}_i)$, one has $\mathbf{u}_s(\xi) = \kappa_r\mathbf{v}_s(\mathbf{x}, \mathbf{h}, \mathbf{a})$, $\kappa_r > 0$ and $\mathbf{a} = \mathbf{a}(t_1 , j_1 + 1) = \mathbf{a}(t, j)$ for all $(t, j)\in(I_{j_1 +1}\times j_1 + 1)$. Now, using \eqref{modified_obstacle-avoidance_control}, one can conclude that for all $\mathbf{x}\in\partial\mathcal{D}_{r_a + \beta}(\mathcal{O}_i)$, $\mathbf{v}_s(\mathbf{x}, \mathbf{h}, \mathbf{a})^\top\mathbf{x}_{\pi} = 0$ and $\mathbf{u}_s(\xi)\in\mathbf{T}_{\partial\mathcal{D}_{r_a + \beta}(\mathcal{O}_i)}(\mathbf{x})$. Additionally, as per Lemma \ref{lemma:hybrid_basic_conditions_for_sphere}, the control input trajectory $\mathbf{u}(\xi(t, j))$ is continuous when it corresponds to the \textit{obstacle-avoidance} mode. Therefore, using Nagumo's theorem \cite[Theorem 4.7]{blanchini2008set}, one can conclude that for all $(t, j)\in(I_{j_1 + 1}\times j_1 +1)$
\begin{equation}\mathbf{x}(t, j)\in\partial\mathcal{D}_{r_a + \beta}(\mathcal{O}_i),\label{sphere3D_gamma_neighbourhood_forward_invariance}
\end{equation}
 where $\beta\in[0, \gamma]$. Therefore, $\mathbf{x}(t, j)\in\mathcal{N}_{\gamma}(\mathcal{D}_{r_a}(\mathcal{O}_i))$ for all $(t, j)\in(I_{j_1 + 1}\times j_1 + 1)$.

Next, we show that $\mathbf{x}(t, j)\in\mathcal{N}_{\gamma}(\mathcal{D}_{r_a}(\mathcal{O}_i))\cap\mathcal{P}(\mathbf{h}, \mathbf{a})$ for all $(t, j)\in(I_{j_1 + 1}\times j_1 +1)$. Since $\mathbf{a} \in \mathbf{A}_s(\mathbf{h})$ and obstacle $\mathcal{O}_i$ is a sphere, the plane $\mathcal{P}(\mathbf{h}, \mathbf{a})$ passes through the origin and the center $\mathbf{c}_i$ of obstacle $\mathcal{O}_i$. As a result, for all $\mathbf{x}\in\mathcal{N}_{\gamma}(\mathcal{D}_{r_a}(\mathcal{O}_i))\cap\mathcal{P}(\mathbf{h}, \mathbf{a})$, one has $\mathbf{x}_{\pi}\in\mathcal{P}(\mathbf{h}, \mathbf{a})$. In the \textit{obstacle-avoidance} mode, the control input is given by $\mathbf{u}_s(\xi) = \kappa_r\mathbf{v}_s(\mathbf{x}, \mathbf{h}, \mathbf{a})$, $\kappa_r > 0$. According to \eqref{modified_obstacle-avoidance_control}, it is clear that for all $\mathbf{x}\in\mathcal{N}_{\gamma}(\mathcal{D}_{r_a}(\mathcal{O}_i))\cap\mathcal{P}(\mathbf{h}, \mathbf{a})$, $\mathbf{v}_s(\mathbf{x}, \mathbf{h},\mathbf{a})\in\mathcal{P}(\mathbf{h}, \mathbf{a})$. As a result, since $\mathbf{x}(t_1, j_1 +1)\in\mathcal{N}_{\gamma}(\mathcal{D}_{r_a}(\mathcal{O}_i))\cap\mathcal{P}(\mathbf{h}, \mathbf{a})$, using \eqref{sphere3D_gamma_neighbourhood_forward_invariance}, one can conclude that 
\begin{equation}
    \mathbf{x}(t, j)\in\mathcal{N}_{\gamma}(\mathcal{D}_{r_a}(\mathcal{O}_i))\cap\mathcal{P}(\mathbf{h}, \mathbf{a}),\label{sphere3D_claim1_proved}
\end{equation}
for all $(t, j)\in(I_{j_1 + 1}\times j_1 + 1)$ and claim 1 in Lemma \ref{lemma:always_enter_move_to_target_mode} is satisfied.

We proceed to prove claim 2 in Lemma \ref{lemma:always_enter_move_to_target_mode}. We know that $\mathbf{h}\in\partial\mathcal{D}_{r_a + \beta}(\mathcal{O}_i)$ for some $i\in\mathbb{I}\setminus\{0\}$ and $\beta\in[0, \gamma]$. Additionally, according to \eqref{sphere3D_gamma_neighbourhood_forward_invariance} and \eqref{sphere3D_claim1_proved}, it is clear that for all $(t, j)\in(I_{j_1 +1}\times j_1 +1)$, $\mathbf{x}(t, j)\in\partial\mathcal{D}_{r_a + \beta}(\mathcal{O}_i)\cap\mathcal{P}(\mathbf{h}, \mathbf{a})$, where $\mathbf{a} \in \mathbf{A}_s(\mathbf{h})$. Moreover, for all $(t, j)\in(I_{j_1 +1}\times j_1 +1)$, one has $\mathbf{u}_s(\xi(t, j))\in\mathcal{H}_{>}(\mathbf{0}, \mathbf{R}(\hat{\mathbf{h}}, \mathbf{a})\mathbf{x}_{\pi})$, where $\mathbf{u}_s(\xi(t, j)) = \kappa_r\mathbf{v}_s(\mathbf{x}(t, j), \mathbf{h}, \mathbf{a}), \kappa_r > 0$. Therefore, since obstacle $\mathcal{O}_i$ is compact, there exists $t_2 < \infty$ and $t_2 > t_1$ such that  $\mathbf{x}(t_2, j_1 + 1) = \Pi(\mathbf{0}, \partial\mathcal{D}_{r_a + \beta}(\mathcal{O}_i))$. 
Since obstacle $\mathcal{O}_i$ is a sphere, it is straightforward to verify that $\Pi(\mathbf{0}, \partial\mathcal{D}_{r_a + \beta}(\mathcal{O}_i))\in\mathcal{LS}$, where the set $\mathcal{LS}$ is defined according to \eqref{proof_line_segment}.
If $\mathbf{0}\notin\mathcal{D}_{r_a + \gamma}(\mathcal{O}_i)$, then for a sufficiently small value of $\bar{\epsilon}$, used in \eqref{partition_rm}, one can ensure that the set $\mathcal{LS}$ belongs to the set $\mathcal{J}_{1}^{\mathcal{W}}$ \eqref{jumpset_obstacleavoidance}. On the other hand, if $\mathbf{0}\in\mathcal{N}_{\gamma}(\mathcal{D}_{r_a}(\mathcal{O}_i))$, it is straightforward to verify that $\mathcal{LS}\subset\mathcal{S}_0$, which, according to \eqref{jumpset_obstacleavoidance}, implies that $\mathcal{LS}\subset\mathcal{J}_1^{\mathcal{W}}$.
This, according to \eqref{oa_jumpset} and \eqref{jumpset_obstacleavoidance}, implies that there exists $t_2\in I_{j_1 +1}$ such that $t_2 < \infty$ and $\xi(t_2, j_1 +1)\in\mathcal{J}_1,$ and claim 2 in Lemma \ref{lemma:always_enter_move_to_target_mode} holds true.

\textbf{Monotonic decrease of the distance $\|{\mathbf{x}}\|$:} Monotonic decrease of $\|{\mathbf{x}}\|$ is trivial in the \textit{move-to-target} mode, thus, we focus on proving the monotonic decrease in the \textit{obstacle-avoidance} mode.

Consider a solution $\xi$ to the hybrid closed-loop system \eqref{modified_hybrid_closed_loop_system}. Let us assume that there exists $(t_1, j_1)\in\text{ dom }\xi$ such that $\xi(t_1, j_1)\in\mathcal{J}_0$. Therefore, according to \eqref{mtt_jumpset}, \eqref{jumpset_movetotarget}, and \eqref{update_law}, one has $\mathbf{h}(t_1, j_1 +1)\in\mathcal{J}_0^{\mathcal{W}}\cap\partial\mathcal{D}_{r_a + \beta}(\mathcal{O}_i)$, for some $i\in\mathbb{I}\setminus\{0\}$ and $\beta\in[0, \gamma_s]$, and $\mathbf{a}(t_1, j_1 +1) \in \mathbf{A}_s(\mathbf{h}(t_1, j_1 +1))$. According to \eqref{modified_hybrid_closed_loop_system}, $\mathbf{h}(t, j) = \mathbf{h}(t_1, j_1 + 1)$ and $\mathbf{a}(t, j) = \mathbf{a}(t_1, j_1 + 1)$ for all $(t, j)\in(I_{j_1 +1}\times j_1 + 1)$. Let $\mathbf{h} = \mathbf{h}(t_1, j_1 +1) = \mathbf{h}(t, j)$ and $\mathbf{a} = \mathbf{a}(t_1, j_1 +1) = \mathbf{a}(t, j)$ for all $(t, j)\in(I_{j_1 +1}\times j_1 +1)$. 
According to Lemma \ref{lemma:always_enter_move_to_target_mode}, under the control input $\mathbf{u}_s(\xi(t, j))$,  the state $\mathbf{x}(t, j)$ belongs to the set $\mathcal{N}_{\gamma}(\mathcal{D}_{r_a}(\mathcal{O}_i))\cap\mathcal{P}(\mathbf{h}, \mathbf{a})$ for all $(t, j)\in(I_{j_1+1}\times j_1 +1).$ Moreover, $\xi(t_2, j_1 +1)\in\mathcal{J}_1$
, where $t_2 = \underset{t\in I_{j_1 +1}}{\sup} t.$

If one shows that for all $(t, j)\in([t_1, t_2]\times j_1 +1), \frac{d}{dt}(\frac{1}{2}\mathbf{x}(t, j)^\top\mathbf{x}(t, j))\leq 0$, then it will imply that the control input vector $\mathbf{u}_s(\xi(t, j))$ guarantees a monotonic decrease in the distance $\|\mathbf{x}\|$ as the solution $\xi(t, j_1+1)$ flows during the interval $I_{j_1 +1}.$ Notice that for all $(t, j)\in(I_{j_1 + 1}\times j_1 + 1)$, one has $\frac{d}{dt}(\frac{1}{2}\mathbf{x}(t, j)^\top\mathbf{x}(t, j)) = \mathbf{x}(t, j)^\top\mathbf{u}_s(\xi(t, j))$, where, for $m = 1$, $\mathbf{u}_s(\xi(t, j)) = \kappa_r \mathbf{v}_s(\mathbf{x}(t, j), \mathbf{h}, \mathbf{a})$. Therefore, we proceed to show that $\mathbf{x}(t, j)^\top\mathbf{v}_s(\mathbf{x}(t, j), \mathbf{h}, \mathbf{a})\leq 0$ for all $(t, j)\in(I_{j_1 + 1}\times j_1 + 1)$.

Let us divide the set $\mathcal{N}_{\gamma}(\mathcal{D}_{r_a}(\mathcal{O}_i))\cap\mathcal{P}(\mathbf{h}, \mathbf{a})$ into two mutually exclusive subsets as follows:
\begin{equation}
    \mathcal{N}_{\gamma}(\mathcal{D}_{r_a}(\mathcal{O}_i))\cap\mathcal{P}(\mathbf{h}, \mathbf{a})= \mathcal{U}_1 \cup \mathcal{U}_2,
\end{equation}
where
\begin{equation}
\begin{aligned}
    \mathcal{U}_1&:= \mathcal{N}_{\gamma}(\mathcal{D}_{r_a}(\mathcal{O}_i))\cap\mathcal{P}(\mathbf{h}, \mathbf{a})\cap\mathcal{H}_{\geq}(\mathbf{0}, \mathbf{R}(\hat{\mathbf{h}}, \mathbf{a})\mathbf{c}_i),\\
    \mathcal{U}_2&:= \mathcal{N}_{\gamma}(\mathcal{D}_{r_a}(\mathcal{O}_i))\cap\mathcal{P}(\mathbf{h}, \mathbf{a})\cap\mathcal{H}_{<}(\mathbf{0}, \mathbf{R}(\hat{\mathbf{h}}, \mathbf{a})\mathbf{c}_i),\label{partition_for_monotonicity}
\end{aligned}
\end{equation}
with $\mathbf{c}_i$ being the center of obstacle $\mathcal{O}_i.$

\begin{figure}
    \centering
    \includegraphics[width=1\linewidth]{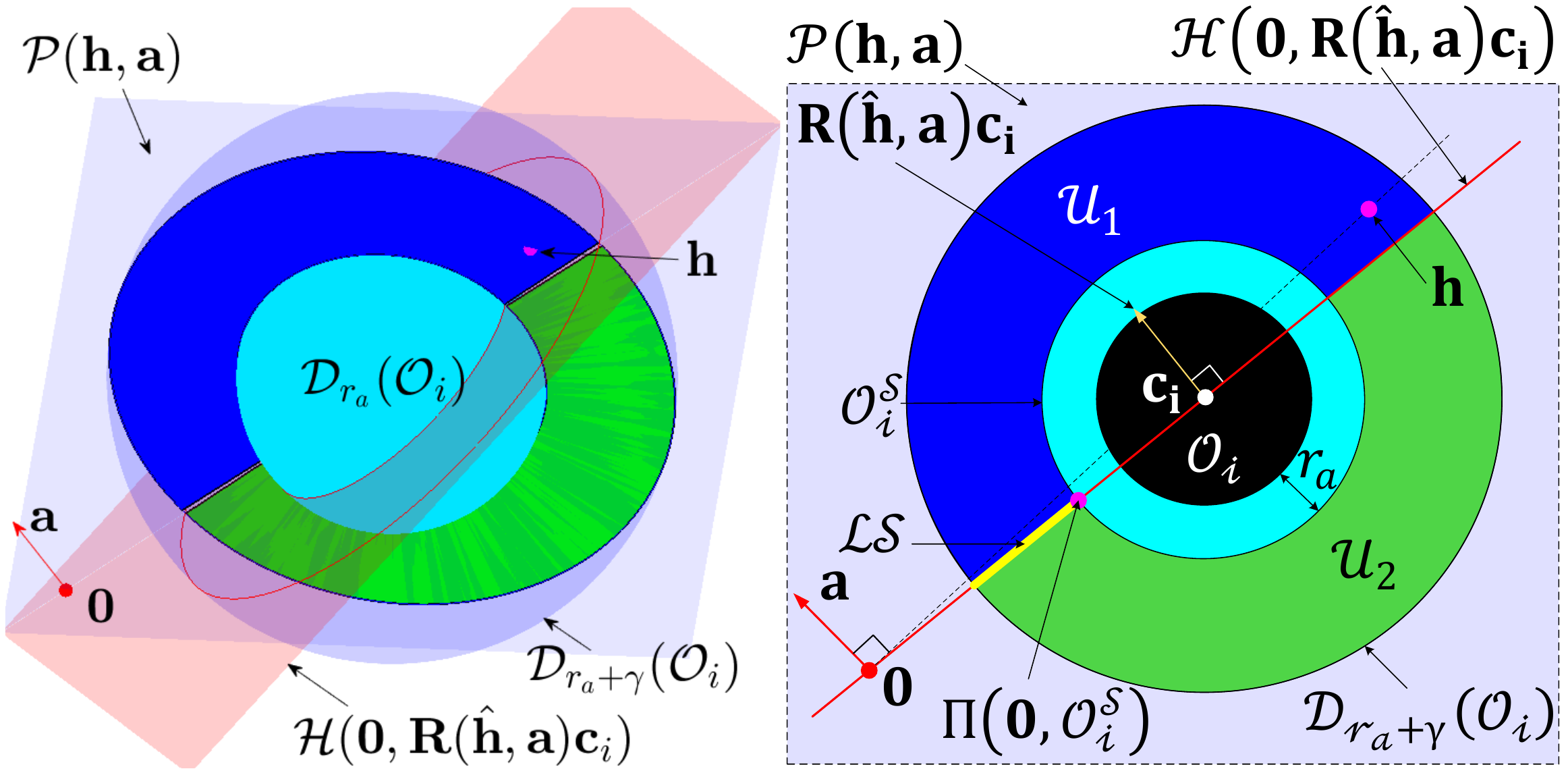}
    \caption{Geometric representation of the sets $\mathcal{U}_1$ and $\mathcal{U}_2$, defined in \eqref{partition_for_monotonicity}, for a three-dimensional spherical obstacle $\mathcal{O}_i$.}
    \label{sphere_proof_final_image}
\end{figure}

Since $\mathbf{a}$ and $\mathbf{h}$ are chosen as per \eqref{update_law}, and obstacle $\mathcal{O}_i$ is a sphere, the plane $\mathcal{P}(\mathbf{h}, \mathbf{a})$ intersects both the center $\mathbf{c}_i$ of obstacle $\mathcal{O}_i$ and the target location at the origin, as shown in Fig. \ref{sphere_proof_final_image}. As a result, for all $\mathbf{x}\in\mathcal{N}_{\gamma}(\mathcal{D}_{r_a}(\mathcal{O}_i))\cap\mathcal{P}(\mathbf{h}, \mathbf{a})$, one has $\mathbf{x}_{\pi}\in\mathcal{P}(\mathbf{h}, \mathbf{a})$ and $\mathbf{R}(\hat{\mathbf{h}}, \mathbf{a})\mathbf{x}_{\pi}\in\mathcal{P}(\mathbf{h}, \mathbf{a})$. 
Moreover, since obstacle $\mathcal{O}_i$ is a sphere, for all $\mathbf{x}\in\mathcal{U}_1$, it is true that $\mathbf{x}_{\pi} \in\mathcal{P}(\mathbf{h}, \mathbf{a})\cap\mathcal{H}_{\geq}(\mathbf{0}, \mathbf{R}(\hat{\mathbf{h}}, \mathbf{a})\mathbf{x})$. 
Therefore, for all $\mathbf{x}\in\mathcal{U}_1$, one has $\mathbf{R}(\hat{\mathbf{h}}, \mathbf{a})\mathbf{x}_{\pi}\in\mathcal{P}(\mathbf{h}, \mathbf{a})\cap\mathcal{H}_{\leq}(\mathbf{0}, \mathbf{x})$, as illustrated in Fig. \ref{monotonic_decrease_reason}. 
As a result, for all $\mathbf{x}\in\mathcal{U}_1$, one can conclude that $\mathbf{x}^\top\mathbf{v}_s(\mathbf{x}, \mathbf{h}, \mathbf{a}) \leq 0$.
Now, if one shows that $\mathbf{x}(t, j)\in\mathcal{U}_1$, for all $(t, j)\in (I_{j_1 +1}\times j_1 +1)$, then it implies that $\mathbf{x}(t, j)^\top\mathbf{u}_{s}(\xi(t, j))\leq 0$ for all $(t, j)\in(I_{j_1+1}\times j_1 +1)$ 

\begin{figure}
    \centering
    \includegraphics[width=0.95\linewidth]{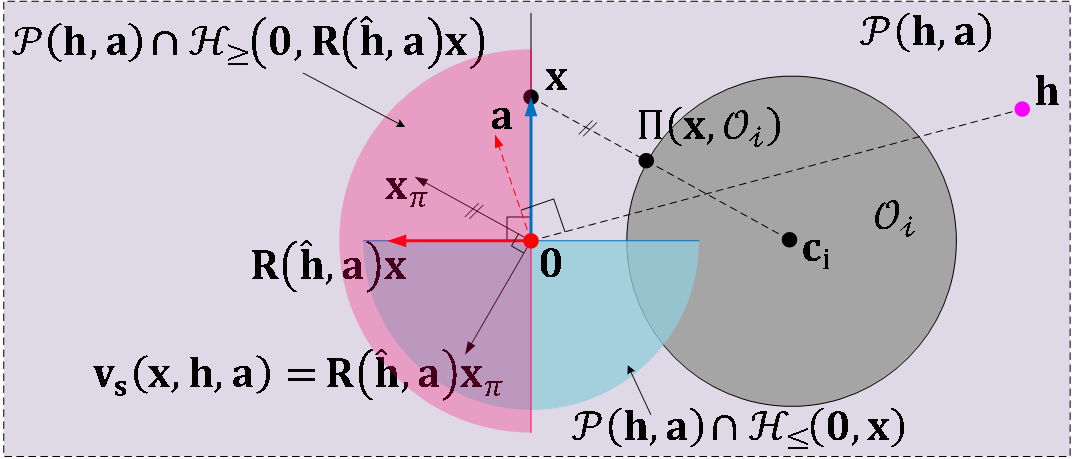}
    \caption{Workspace scenario illustrating $\mathbf{x}_{\pi}\in\mathcal{P}(\mathbf{h}, \mathbf{a})\cap\mathcal{H}_{\geq}(\mathbf{0}, \mathbf{R}(\hat{\mathbf{h}}, \mathbf{a})\mathbf{x})\implies \mathbf{v}_s(\mathbf{x}, \mathbf{h}, \mathbf{a})\in\mathcal{P}(\mathbf{h}, \mathbf{a})\cap\mathcal{H}_{\leq}(\mathbf{0}, \mathbf{x})$.}
    \label{monotonic_decrease_reason}
\end{figure}

Note that, when $\xi(t_1, j_1)\in\mathcal{J}_0$ with $\mathbf{x}(t_1, j_1 +1)\in\mathcal{J}_{0}^{\mathcal{W}}\cap\mathcal{N}_{\gamma}(\mathcal{D}_{r_a}(\mathcal{O}_i))$, the choice of the unit vector $\mathbf{a}$, as per \eqref{update_law_for_vector_a}, ensures that $\mathbf{x}(t_1, j_1 +1)\in\mathcal{U}_1.$
We know that, under the control input $\mathbf{u}_s(\xi(t, j))$,  the state $\mathbf{x}(t, j)$ belongs to the set $\mathcal{N}_{\gamma}(\mathcal{D}_{r_a}(\mathcal{O}_i))\cap\mathcal{P}(\mathbf{h}, \mathbf{a})$ for all $(t, j)\in(I_{j_1+1}\times j_1 +1)$, where $\mathbf{h}\in\mathcal{J}_0^{\mathcal{W}}\cap\partial\mathcal{D}_{r_a + \beta}(\mathcal{O}_i),$ for some $\beta\in[0, \gamma_s]$.
Since $\mathbf{x}(t_1, j_1 +1)\in\mathcal{U}_1$ and $\mathbf{v}_s(\mathbf{x}, \mathbf{h}, \mathbf{a})^\top\mathbf{x}\leq 0, \forall \mathbf{x}\in\mathcal{U}_1$, under the control input $\mathbf{u}_s(\xi) = \kappa_r\mathbf{v}_s(\mathbf{x}, \mathbf{h}, \mathbf{a})$, $\mathbf{x}$ can enter in the set $\mathcal{U}_2$ only from the set $\mathcal{LS}$, where $\mathcal{LS}$ is defined in \eqref{proof_line_segment} and is depicted in Fig. \ref{sphere_proof_final_image}.  
Therefore, there exists $(t_2, j_1 +1)\succ(t_1, j_1 +1)$ such that $\mathbf{x}(t_2, j_1 +1) = \mathcal{LS}\subset\mathcal{U}_1$. Moreover, as proved earlier, for a sufficiently small value of $\bar{\epsilon}$, used in \eqref{partition_rm}, one can guarantee that $\mathcal{LS}\subset\mathcal{J}_1^{\mathcal{W}}$, where the set $\mathcal{J}_1^{\mathcal{W}}$ is defined in \eqref{jumpset_obstacleavoidance}. Hence, at $(t_2, j_1 +1)$, one has $\xi(t_2, j_1 +1)\in\mathcal{J}_1$, which implies that $t_2 = \underset{t\in I_{j_1 +1}}{\sup} t.$ As a result, for all time $(t, j)\in( I_{j_1 +1}\times j_1 +1 )$, one has $\mathbf{x}(t, j)\in\mathcal{U}_1$, and the proof is complete.

\end{appendix}

\bibliographystyle{IEEEtran}
\bibliography{reference}

\end{document}